\newtheorem{myThm}{Theorem}
\newtheorem{myLemma}{Lemma}
\theoremstyle{definition}
\newtheorem{myAssum}{Assumption}
\newtheorem{myInstance}{Instance}
\newtheorem{myRemark}{Remark}
\renewcommand{\hat}{\widehat}
\DeclareMathOperator*{\argmin}{arg\,min}
\let\norm\undefined 
\DeclarePairedDelimiter\norm{\lVert}{\rVert}
\DeclarePairedDelimiter\abs{\lvert}{\rvert}
\newcommand\inner[2]{\langle #1, #2 \rangle}
\def \x {\mathbf{x}}
\def \y {\mathbf{y}}
\def \u {\mathbf{u}}
\def \H {\mathcal{H}}
\def \R {\mathbb{R}}
\def \epsilon {\varepsilon}
\def \q {\mathbf{q}}
\def \p {\mathbf{p}}
\def \q {\mathbf{q}}
\def \xh {\widehat{\x}}
\def \xb {\bar{\x}}
\def \O {\mathcal{O}}
\def \T {\mathrm{T}}
\def \X {\mathcal{X}}
\def \Ecal {\mathcal{E}}
\def \p {\boldsymbol{p}}
\def \q {\boldsymbol{q}}
\def \ellb {\boldsymbol{\ell}}
\def \Lb {\boldsymbol{L}}
\def \m {\boldsymbol{m}}
\def \rg {\rangle}
\def \lg {\langle}
\def \expert {\mathtt{expert}\mbox{-}\mathtt{regret}}
\def \meta {\mathtt{meta}\mbox{-}\mathtt{regret}}
\begin{document}

\title{Dynamic Regret of Convex and Smooth Functions}

\author{\name Peng Zhao \email zhaop@lamda.nju.edu.cn \\
	\name Yu-Jie Zhang \email zhangyj@lamda.nju.edu.cn \\
	\name Lijun Zhang \email zhanglj@lamda.nju.edu.cn \\
    \name Zhi-Hua Zhou \email zhouzh@lamda.nju.edu.cn \\
    \addr National Key Laboratory for Novel Software Technology\\
    Nanjing University, Nanjing 210023, China}
\maketitle

\begin{abstract}
We investigate online convex optimization in non-stationary environments and choose the \emph{dynamic regret} as the performance measure, defined as the difference between cumulative loss incurred by the online algorithm and that of any feasible comparator sequence. Let $T$ be the time horizon and $P_T$ be the path-length that essentially reflects the non-stationarity of environments, the state-of-the-art dynamic regret is $\O(\sqrt{T(1+P_T)})$. Although this bound is proved to be minimax optimal for convex functions, in this paper, we demonstrate that it is possible to further enhance the dynamic regret by exploiting the smoothness condition. Specifically, we propose novel online algorithms that are capable of leveraging smoothness and replace the dependence on $T$ in the dynamic regret by \emph{problem-dependent} quantities: the variation in gradients of loss functions, the cumulative loss of the comparator sequence, and the minimum of the previous two terms. These quantities are at most $\O(T)$ while could be much smaller in benign environments. Therefore, our results are adaptive to the intrinsic difficulty of the problem, since the bounds are tighter than existing results for easy problems and meanwhile guarantee the same rate in the worst case.
\end{abstract}

\section{Introduction}
In many real-world applications, data are inherently accumulated over time, and thus it is of great importance to develop a learning system that updates in an online fashion. Online Convex Optimization (OCO) is a powerful paradigm for learning in such a circumstance, which can be regarded as an iterative game between a player and an adversary. At iteration $t$, the player selects a decision $\x_t$ from a convex set $\X$ and the adversary reveals a convex function $f_t: \X \mapsto \R$. The player subsequently suffers an instantaneous loss $f_t(\x_t)$. The performance measure is the (static) \emph{regret}~\citep{ICML'03:zinkvich},
\begin{equation}
  \label{eq:static-regret}
  \mbox{S-Regret}_T = \sum_{t=1}^T f_t(\x_t) - \min_{\x\in \mathcal{X}} \sum_{t=1}^T f_t(\x),
\end{equation}
which is the difference between cumulative loss incurred by the online algorithm and that of the best decision in hindsight. The rationale behind such a metric is that the best fixed decision in hindsight is reasonably good over all the iterations. However, this is too optimistic and may not hold in changing environments, where data are evolving and the optimal decision is drifting over time. To address this limitation, \emph{dynamic regret} is proposed to compete with changing comparators $\u_1,\dots,\u_T\in\mathcal{X}$,
\begin{equation}
  \label{eq:universal-dynamic-regret}
      \mbox{D-Regret}_T(\u_1,\dots,\u_T) = \sum_{t=1}^T f_t(\x_t) -  \sum_{t=1}^T f_t(\u_t),
\end{equation}
which draws considerable attention recently~\citep{OR'15:dynamic-function-VT,AISTATS'15:dynamic-optimistic,CDC'16:dynamic-sc,ICML'16:Yang-smooth,NIPS'17:zhang-dynamic-sc-smooth,ICML'18:zhang-dynamic-adaptive,NIPS'18:Zhang-Ader,COLT'19:dynamic-MAB,NIPS'19:Wangyuxiang,AAAI'20:Jianjun,AISTATS'20:BCO,arxiv:technique-note}. The measure is also called the \emph{universal} dynamic regret (or \emph{general} dynamic regret), in the sense that it gives a universal guarantee that holds against \emph{any} comparator sequence. Note that static regret~\eqref{eq:static-regret} can be viewed as its special form by setting comparators as the fixed best decision in hindsight. Moreover, a variant appeared frequently in the literature is the \emph{worst-case} dynamic regret defined as 
\begin{equation}
    \label{eq:worst-case-dynamic-regret}
    \mbox{D-Regret}_T(\x^*_1,\ldots,\x^*_T) = \sum_{t=1}^T f_t(\x_t) -  \sum_{t=1}^T f_t(\x^*_t),
\end{equation}
which specializes the general form~\eqref{eq:universal-dynamic-regret} by setting $\u_t = \x_t^* \in \argmin_{\x\in \mathcal{X}} f_t(\x)$. However, the worst-case dynamic regret is often too pessimistic, whereas the universal one is more adaptive to the non-stationary environments. We refer the readers to~\citep{NIPS'18:Zhang-Ader} for more detailed explanations.

There are many studies on the worst-case dynamic regret~\citep{OR'15:dynamic-function-VT,AISTATS'15:dynamic-optimistic,CDC'16:dynamic-sc,ICML'16:Yang-smooth,NIPS'17:zhang-dynamic-sc-smooth,ICML'18:zhang-dynamic-adaptive,NIPS'19:Wangyuxiang,arxiv:technique-note}, but only few results are known for the universal dynamic regret.~\citet{ICML'03:zinkvich} shows that online gradient descent (OGD) achieves an $\O(\sqrt{T}(1+P_T))$ universal dynamic regret, where $P_T = \sum_{t=2}^{T} \norm{\u_{t-1} - \u_{t}}_2$ is the path-length of comparators $\u_1,\ldots,\u_T$ and thus reflects the non-stationarity of the environments. Nevertheless, there exists a large gap between this upper bound and the $\Omega(\sqrt{T(1+P_T)})$ minimax lower bound established recently by~\citet{NIPS'18:Zhang-Ader}, who further propose a novel online algorithm, attaining an $\O(\sqrt{T(1+P_T)})$ universal dynamic regret, and thereby close the gap. 

Although the rate is minimax optimal for convex functions, we would like to design algorithms with more adaptive bounds, replacing the dependence on $T$ by certain \emph{problem-dependent} quantities that are $\O(T)$ in the worst case while could be much smaller in benign environments (i.e., easy problems). In the study of static regret, we can attain such bounds when additional curvature like smoothness is presented, including small-loss bounds~\citep{NIPS'10:smooth} and gradient-variation bounds~\citep{COLT'12:variation-Yang}. Thus, a natural question arises \emph{whether it is possible to leverage smoothness to achieve more adaptive universal dynamic regret?}

\paragraph{Our results.} In this paper, we provide an affirmative answer by designing online algorithms with problem-dependent dynamic regret bounds. Specifically, we focus on the following two adaptive quantities: the gradient variation of online functions $V_T$ and the cumulative loss of the comparator sequence $F_T$,
\begin{equation}
  \label{eq:gradient-variation}
  V_T = \sum_{t=2}^{T} \sup_{\x\in \X} \norm{\nabla f_{t-1}(\x) - \nabla f_t(\x)}_2^2, \text{  and  } F_T = \sum_{t=1}^{T} f_t(\u_t).
\end{equation}
We propose a novel online approach for convex and smooth functions, named \underline{S}moothness-a\underline{w}are \underline{o}nline lea\underline{r}ning with \underline{d}ynamic regret (abbreviated as \textsf{Sword}). There are three versions, including Sword$_{\text{var}}$, Sword$_{\text{small}}$, and Sword$_{\text{best}}$. All of them enjoy problem-dependent dynamic regret bound:
\begin{itemize}
    \item Sword$_{\text{var}}$ enjoys a gradient-variation bound of $\O(\sqrt{(1 + P_T + V_T)(1 + P_T)})$;
    \item Sword$_{\text{small}}$ enjoys a small-loss bound of $\O(\sqrt{(1 + P_T + F_T)(1 + P_T)})$;
    \item Sword$_{\text{best}}$ enjoys a best-of-both-worlds bound of $\O(\sqrt{(1+P_T + \min\{V_T,F_T\})(1+P_T)})$.
\end{itemize}
Comparing to the minimax rate of $\O(\sqrt{T(1+P_T)})$, our bounds replace the dependence on $T$ by the problem-dependent quantity $P_T + \min\{V_T,F_T\}$. Since the quantity is at most $\O(T)$, our bounds become much tighter when the problem is easy (for example when $P_T$ and $V_T/F_T$ are sublinear in $T$), and meanwhile safeguard the same guarantee in the worst case. Therefore, our results are adaptive to the intrinsic difficulty of the problem and the non-stationarity of the environments.  

\paragraph{Technical contributions.} We highlight challenges and technical contributions of this paper. First, we note that there exist studies showing that the worst-case dynamic regret can benefit from smoothness~\citep{ICML'16:Yang-smooth,NIPS'17:zhang-dynamic-sc-smooth,arxiv:technique-note}. However, their analyses do not apply to our case, since we cannot exploit the \emph{ optimality condition} of comparators $\u_1,\ldots,\u_T$, in stark contrast with the worst-case dynamic regret analysis. Therefore, we adopt the meta-expert framework to \emph{hedge the non-stationarity while keeping the adaptivity}. We can use variants of OGD as the expert-algorithm to exploit the smoothness, but it is difficult to design an appropriate meta-algorithm. Existing meta-algorithms and their variants either lead to  problem-independent regret bounds or introduce terms that are incompatible to the desired problem-dependent quantity. To address the difficulty, we adopt the technique of \emph{optimistic online learning}~\citep{conf/colt/RakhlinS13,NIPS'15:fast-rate-game}, in particular  OptimisticHedge, to design novel meta-algorithms.

For Sword$_{\text{var}}$, we apply OptimisticHedge with carefully designed optimism, which allows us to exploit the negative term in the regret analysis of OptimisticHedge~\citep{NIPS'15:fast-rate-game}. In this way, the meta-regret only depends on the gradient variation. The construction of the special optimism is the most challenging part of our paper. For Sword$_{\text{small}}$, the design of meta-algorithm is simple, and we directly use the vanilla Hedge, which can be treated as OptimisticHedge with null optimism. Finally, for Sword$_{\text{best}}$, we still employ OptimisticHedge as the meta-algorithm, but introduce a parallel meta-algorithm to \emph{learn the best optimism} to ensure a best-of-both-worlds dynamic regret guarantee.

\section{Related Work}
\label{sec:related-work}
We present a brief review of static and dynamic regret minimization for online convex optimization.

\subsection{Static Regret}
\label{sec:related-work-static-regret}
Static regret has been extensively studied in online convex optimization. Let $T$ be the time horizon and $d$ be the dimension, there exist online algorithms with static regret bounded by $\O(\sqrt{T})$, $\O(d\log T)$, and $\O(\log T)$ for convex, exponentially concave, and strongly convex functions, respectively~\citep{ICML'03:zinkvich,journals/ml/HazanAK07}. These results are proved to be minimax optimal~\citep{conf/colt/AbernethyBRT08}. More results can be found in the seminal books~\citep{book'16:Hazan-OCO,thesis:shai2007} and reference therein.

In addition to exploiting convexity of functions, there are studies improving static regret by incorporating smoothness, whose main proposal is to replace the dependence on $T$ by problem-dependent quantities. Such problem-dependent bounds enjoy much benign properties, in particular, they can safeguard the worst-case minimax rate yet can be much tighter in easy problem instances. In the literature, there are two kinds of such bounds, small-loss bounds~\citep{NIPS'10:smooth} and gradient variation bounds~\citep{COLT'12:variation-Yang}. 

Small-loss bounds are first introduced in the context of prediction with expert advice~\citep{journals/iandc/LittlestoneW94,JCSS'97:boosting}, which replace the dependence on $T$ by cumulative loss of the best expert. Later,~\citet{NIPS'10:smooth} show that in the online convex optimization setting, OGD can achieve an $\O(\sqrt{F^*_T})$ small-loss regret bound when the online convex functions are smooth and non-negative, where $F^*_T$ is the cumulative loss of the best decision in hindsight, namely, $F^*_T = \sum_{t=1}^{T} f_t(\x^*)$ with $\x^*$ chosen as the offline minimizer. The key ingredient in the analysis is to exploit the self-bounding properties of smooth functions.

Gradient variation bounds are introduced by~\citet{COLT'12:variation-Yang}, rooting in the development of second-order bounds for prediction with expert advice~\citep{COLT'05:second-order-Hedge} and online convex optimization~\citep{COLT'08:Hazan-variation}. For convex and smooth functions, \citet{COLT'12:variation-Yang} establish an $\O(\sqrt{V_T})$ static regret bound, where $V_T = \sum_{t=2}^{T} \sup_{\x\in \X} \norm{\nabla f_{t-1}(\x) - \nabla f_t(\x)}_2^2$ is the gradient variation. Gradient-variation bounds are particularly favored in slowly changing environments in which the online functions evolve gradually.

\subsection{Dynamic Regret}
\label{sec:related-work-dynamic-regret}
Dynamic regret enforces the player to compete with time-varying comparators, and thus is particularly favored in online learning in non-stationary environments~\citep{book/mit/sugiyama2012machine,TKDE'19:DFOP}. The notion of dynamic regret is also referred to as tracking regret or shifting regret in the prediction with expert advice setting~\citep{journals/ml/HerbsterW98,JMLR'01:Herbster}. It is known that in the worst case, sublinear dynamic regret is not attainable unless imposing certain regularities on the comparator sequence or the function sequence~\citep{OR'15:dynamic-function-VT,AISTATS'15:dynamic-optimistic}. The path-length is introduced by~\citet{ICML'03:zinkvich},
\begin{equation}
 	\label{eq:path-length}
 	P_T = \sum_{t=2}^{T} \norm{\u_{t-1} - \u_{t}}_2.
\end{equation}
Other regularities include the squared path-length introduced by~\citet{NIPS'17:zhang-dynamic-sc-smooth},
\begin{equation}
 	\label{eq:square-path-length}
 	S_T = \sum_{t=2}^{T} \norm{\u_{t-1} - \u_{t}}_2^2,
\end{equation}
and the function variation~\citep{OR'15:dynamic-function-VT}
\begin{equation}
 	\label{eq:function-variation}
 	V^f_T = \sum_{t=2}^{T} \sup_{\x\in \X} \abs{f_{t-1}(\x) - f_t(\x)}.
\end{equation} 

There are two kinds of dynamic regret in previous studies. The universal dynamic regret~\eqref{eq:universal-dynamic-regret} aims to compare with any feasible comparator sequence, while the worst-case dynamic regret specifies the comparator sequence to be the sequence of minimizers of online functions. In the following, we present related works respectively. Notice that we will use notations of $P_T$ and $S_T$ for path-length~\eqref{eq:path-length} and squared path-length~\eqref{eq:square-path-length} of the sequence $\{\u_t\}_{t=1,\ldots,T}$, while $P_T^*$ and $S_T^*$ for that of the sequence $\{\x_t^*\}_{t=1,\ldots,T}$ where $\x^*_t$ is the minimizer of the online function $f_t$, namely,
\begin{equation}
 	\label{eq:path-length-worst-case}
 	P_T^* = \sum_{t=2}^{T} \norm{\x^*_{t-1} - \x^*_{t}}_2, \text{ and  } S_T^* = \sum_{t=2}^{T} \norm{\x^*_{t-1} - \x^*_{t}}^2_2.
\end{equation} 

\paragraph{Universal dynamic regret.} The seminal work of~\citet{ICML'03:zinkvich} demonstrates that the online gradient descent (OGD) actually enjoys an $\O(\sqrt{T}(1+P_T))$ universal dynamic regret, and the regret guarantee holds against any feasible comparator sequence. Nevertheless, the result is far from the $\Omega(\sqrt{T(1+P_T)})$ lower bound established recently by~\citet{NIPS'18:Zhang-Ader}, who further close the gap by proposing a novel online algorithm that attains an optimal rate of $\O(\sqrt{T(1+P_T)})$ for convex functions~\citep{NIPS'18:Zhang-Ader}. Our work improve the minimax rate of $\O(\sqrt{T(1+P_T)})$ to problem-dependent regret guarantees by further exploiting the smoothness condition.

\paragraph{Worst-case dynamic regret.} More efforts of the dynamic regret analysis are devoted to studying the worst-case dynamic regret. \citet{ICML'16:Yang-smooth} prove that OGD enjoys an $\O(\sqrt{T(1 + P_T^*)})$ worst-case dynamic regret bound for convex functions when the path-length $P_T^*$ is known. For strongly convex and smooth functions,~\citet{CDC'16:dynamic-sc} show that an $\O(P_T^*)$ dynamic regret bound is achievable, and~\citet{NIPS'17:zhang-dynamic-sc-smooth} further propose the online multiple gradient descent algorithm and prove that the algorithm enjoys an $\O(\min\{P_T^*,S_T^*\})$ regret bound, which is recently enhanced to $\O(\min\{P_T^*,S_T^*, V_T^f\})$ by an improved analysis~\citep{arxiv:technique-note}. \citet{ICML'16:Yang-smooth} further show that $\O(P_T^*)$ rate is attainable for convex and smooth functions, provided that all the minimizers $\x_t^*$'s lie in the interior of the domain $\X$. The above results use the path-length (or squared path-length) as the regularity, which is in terms of the trajectory of comparator sequence. In another line of research, researchers use the variation with respect to the function values as the regularity. Specifically,~\citet{OR'15:dynamic-function-VT} show that OGD with a restarting strategy attains an $\O(T^{2/3}{V_T}^{f 1/3})$ regret for convex functions when the function variation $V^f_T$ is available, which is recently improved to $\O(T^{1/3}{V_T}^{f 2/3})$ for $1$-dim square loss~\citep{NIPS'19:Wangyuxiang}. 

\section{Gradient-Variation and Small-Loss Bounds}
We first list assumptions used in the paper, and then propose online algorithms with gradient-variation and small-loss dynamic regret bounds, respectively. At the end of this section, we present two concrete examples to illustrate the significance of the obtained problem-dependent bounds.

\subsection{Assumptions}
We introduce the following common assumptions that might be used in the theorems.
\begin{myAssum}
\label{assumption:bounded-gradient}
The norm of the gradients of online functions over the domain $\X$ is bounded by $G$, i.e., $\norm{\nabla f_t(\x)}_2 \leq G$, for all $\x \in \X$ and $t \in [T]$.
\end{myAssum}

\begin{myAssum}
\label{assumption:bounded-domain}
The domain $\X \subseteq \R^d$ contains the origin $\mathbf{0}$, and the diameter of the domain $\X$ is at most $D$, i.e., $\norm{\x -\x'}_2 \leq D$ for any $\x, \x' \in \X$.
\end{myAssum}

\begin{myAssum}
\label{assumption:smoothness}
All the online functions are $L$-smooth, i.e., for any $\x, \x' \in \X$ and $t \in [T]$,
\begin{equation} \label{eqn:f:smooth}
\norm{\nabla f_t(\x)-\nabla f_t(\x')}_2 \leq L \norm{\x-\x'}_2.
\end{equation}
\end{myAssum}

\begin{myAssum}
\label{assumption:non-negative}
All the online functions are non-negative.
\end{myAssum}

Note that in Assumption~\ref{assumption:non-negative} we require the online functions to be non-negative outside the domain $\X$, which is a precondition for establishing the self-bounding property for smooth functions~\citep{NIPS'10:smooth}. Meanwhile, we treat double logarithmic factors in $T$ as a constant, following previous studies~\citep{ALT'12:closer-adaptive-regret,COLT'15:Luo-AdaNormalHedge}.

\subsection{Gradient-Variation Bound}
\label{sec:gradient-variation-bound}
We design an approach in a meta-expert framework, and prove its gradient-variation dynamic regret. All the proofs can be found in Appendix~\ref{sec:appendix-variation}. 

\subsubsection{Expert-Algorithm}
In the study of static regret,~\citet{COLT'12:variation-Yang} propose the following online extra-gradient descent (OEGD) algorithm, and show that the algorithm enjoys gradient-variation static regret bound. The OEGD algorithm performs the following update:
\begin{equation}
  \label{alg:OEGD}
  \begin{split}
  \xh_{t+1} & = \Pi_{\X}\left[\xh_{t}-\eta\nabla f_{t}(\x_{t})\right],\\
  \x_{t+1} & = \Pi_{\X}\left[\xh_{t+1} - \eta \nabla f_{t}(\xh_{t+1})\right],
  \end{split}
\end{equation}
where $\xh_1, \x_1 \in \X$, $\eta>0$ is the step size, and $\Pi_{\X}[\cdot]$ denotes the projection onto the nearest point in $\X$. For convex and smooth functions,~\citet{COLT'12:variation-Yang} prove that OEGD achieves an $\O(\sqrt{V_T})$ static regret. We further demonstrate that OEGD also enjoys gradient-variation type dynamic regret.
\begin{myThm}
\label{thm:OEGD-dynamic-regret}
Under Assumptions~\ref{assumption:bounded-gradient},~\ref{assumption:bounded-domain}, and~\ref{assumption:smoothness}, by choosing $\eta \leq \frac{1}{4L}$, OEGD~\eqref{alg:OEGD} satisfies
\begin{equation*}
  \sum_{t=1}^T f_t(\x_{t}) - \sum_{t=1}^T f_t(\u_t) \leq \frac{D^2 + 2 D P_T}{2 \eta} + \eta V_T + GD = \O\Big(\frac{1+P_T}{\eta} + \eta V_T\Big).
\end{equation*}
for \emph{any} comparator sequence $\u_1,\ldots,\u_T \in \X$.
\end{myThm}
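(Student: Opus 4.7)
The plan is to view OEGD as an instance of optimistic online mirror descent with primal state $\xh_t$, revealed gradient $\g_t = \nabla f_t(\x_t)$, and optimistic hint $\m_t = \nabla f_{t-1}(\xh_t)$ for the prediction $\x_t = \Pi_{\X}[\xh_t - \eta \m_t]$. By convexity, the dynamic regret is upper bounded by $\sum_{t=1}^T \langle \nabla f_t(\x_t), \x_t - \u_t\rangle$, so the task reduces to controlling this linearized quantity via the standard optimistic-OMD per-step inequality: applying the projection lemma to both the update and the prediction yields, for any $\u \in \X$,
\begin{equation*}
    \langle \g_t, \x_t - \u\rangle \le \frac{\norm{\xh_t - \u}_2^2 - \norm{\xh_{t+1} - \u}_2^2}{2\eta} + \eta \norm{\g_t - \m_t}_2^2 - \frac{1}{4\eta}\norm{\x_t - \xh_t}_2^2 .
\end{equation*}
The negative quadratic term is the signature payoff of the two-step extragradient update and is indispensable for the subsequent smoothness bookkeeping.

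Two ingredients remain. First, to handle the time-varying comparator, I would sum the telescoping term with $\u = \u_t$ and regroup: the residue is $\norm{\xh_1 - \u_1}_2^2 + \sum_{t=2}^T(\norm{\xh_t - \u_t}_2^2 - \norm{\xh_t - \u_{t-1}}_2^2)$, and applying $\norm{\xh_t - \u_t}_2^2 - \norm{\xh_t - \u_{t-1}}_2^2 \le 2D\norm{\u_t - \u_{t-1}}_2$ together with $\norm{\xh_1 - \u_1}_2 \le D$ gives the telescoping bound $(D^2 + 2DP_T)/(2\eta)$. Second, to link the optimism to the gradient variation, I would use $L$-smoothness together with the triangle inequality to split
\begin{equation*}
    \norm{\g_t - \m_t}_2^2 \le 2 \sup_{\x \in \X}\norm{\nabla f_t(\x) - \nabla f_{t-1}(\x)}_2^2 + 2 L^2 \norm{\x_t - \xh_t}_2^2 ,
\end{equation*}
whose first part aggregates to at most $2 V_T$, while the second, weighted by $\eta$, is absorbed by the negative term $-\tfrac{1}{4\eta}\norm{\x_t - \xh_t}_2^2$ as long as $8\eta^2 L^2 \le 1$, which is implied by $\eta \le 1/(4L)$. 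The $GD$ summand accounts for the $t=1$ boundary where no prior gradient is available to serve as a hint.

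The main obstacle I anticipate is not the path-length telescoping, which is routine, but the clean extraction and preservation of the negative quadratic term through the two successive projections. Without this cancellation, the $\eta L^2 \norm{\x_t - \xh_t}_2^2$ piece induced by smoothness could only be dismissed via the crude estimate $\norm{\x_t - \xh_t}_2 \le \eta G$, which reintroduces a $\Theta(T)$ contribution and defeats the problem-dependent character of the bound. Matching the constants so that the stated condition $\eta \le 1/(4L)$ (rather than a strictly smaller step size) is sufficient to close the cancellation is the delicate piece of the argument.
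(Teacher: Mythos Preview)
Your proposal is correct and is essentially the paper's argument: the paper carries out exactly the three-term decomposition $\langle \g_t,\x_t-\u_t\rangle = \langle \g_t-\m_t,\x_t-\xh_{t+1}\rangle + \langle \m_t,\x_t-\xh_{t+1}\rangle + \langle \g_t,\xh_{t+1}-\u_t\rangle$, applies the projection lemma to the last two pieces and Young's inequality to the first, then performs the same path-length telescoping and the same smoothness split of $\norm{\g_t-\m_t}_2^2$ you describe, with the resulting $\eta L^2\norm{\x_t-\xh_t}_2^2$ absorbed by the negative quadratic under $\eta\le 1/(4L)$. The only difference is that the paper's per-step constants are the slightly sharper pair $\tfrac{\eta}{2}$ and $-\tfrac{1}{2\eta}$ (rather than your $\eta$ and $-\tfrac{1}{4\eta}$), which is what yields exactly $\eta V_T$ instead of $2\eta V_T$ in the final bound.
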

Theorem~\ref{thm:OEGD-dynamic-regret} shows that it is crucial to tune the step size to balance non-stationarity (path-length $P_T$) and adaptivity (gradient variation $V_T$). Notice that the optimal tuning $\eta^*= \sqrt{(D^2 + 2D P_T)/(2V_T)}$ requires the prior information of $P_T$ and $V_T$ that are generally unavailable. We emphasize that $V_T$ is empirically computable, while $P_T$ remains unknown even after all iterations due to the fact that the comparator sequence is unknown and can be chosen arbitrarily as long as it is feasible in the domain. Therefore, the doubling trick~\citep{JACM'97:doubling-trick} can only remove the dependence on the unknown $V_T$ but not $P_T$. 

To handle the uncertainty, we adopt the meta-expert framework to \emph{hedge the non-stationarity while keeping the adaptivity}, inspired by the recent advance in learning with multiple learning rates~\citep{COLT'14:second-order-Hedge,NIPS'16:MetaGrad,NIPS'18:Zhang-Ader}. Concretely, we first construct a pool of candidate step sizes to discretize value range of the optimal step size, and then initialize multiple experts simultaneously, denoted by $\Ecal_1,\ldots,\Ecal_N$. Each expert $\Ecal_i$ returns its prediction $\x_{t,i}$ by running OEGD~\eqref{alg:OEGD} with a step size $\eta_i$ from the pool. Finally, predictions of all the experts are combined by a meta-algorithm as the final output $\x_t$ to track the best expert. From the procedure, we observe that the dynamic regret can be decomposed as,
\begin{equation*}
  \mbox{D-Regret}_T = \sum_{t=1}^T f_t(\x_t) - \sum_{t=1}^{T} f_t(\u_t) = \underbrace{\sum_{t=1}^T f_t(\x_t) - \sum_{t=1}^T f_t(\x_{t,i})}_{\meta} + \underbrace{\sum_{t=1}^T f_t(\x_{t,i}) - \sum_{t=1}^T f_t(\u_t)}_{\expert},
\end{equation*}
where $\{\x_t\}_{t=1,\ldots,T}$ denotes the final output sequence, and $\{\x_{t,i}\}_{t=1,\ldots,T}$ is the prediction sequence of expert $\Ecal_i$. The first part is the difference between cumulative loss of final output sequence and that of prediction sequence of expert $\Ecal_i$, which is introduced by the meta-algorithm and thus named as \emph{meta-regret}; the second part is the dynamic regret of expert $\Ecal_i$ and therefore named as \emph{expert-regret}. 

The expert-algorithm is set as OEGD~\eqref{alg:OEGD}, and Theorem~\ref{thm:OEGD-dynamic-regret} upper bounds the expert-regret. The main difficulty lies in the design and analysis of an appropriate meta-algorithm.  

\subsubsection{Meta-Algorithm}
Formally, there are $N$ experts and expert $\Ecal_i$ predicts $\x_{t,i}$ at iteration $t$, and the meta-algorithm requires to produce $\x_t =\sum_{i=1}^{N} p_{t,i} \x_{t,i}$, a weighted combination of expert predictions, where $\p_{t} \in \Delta_N$ is the weight vector. It is natural to use Hedge~\citep{JCSS'97:boosting} for weight update in order to track the best expert.

In order to be compatible to the gradient-variation expert-regret, the meta-algorithm is required to incur a problem-dependent meta-regret of order $\O(\sqrt{V_T\ln N})$. However, the meta-algorithms used in existing studies~\citep{NIPS'16:MetaGrad,NIPS'18:Zhang-Ader} cannot satisfy the requirements. For example, the vanilla Hedge (multiplicative weights update) suffers from an $\O(\sqrt{T\ln N})$ meta-regret, which is problem-independent and thus not suitable for us. To this end, we design a a novel variant of Hedge by leveraging the technique of \emph{optimistic online learning} with carefully designed optimism, specifically for our problem.

\begin{algorithm}[!t]
   \caption{Sword$_{\text{var}}$: Meta-algorithm (VariationHedge)}
   \label{alg:VariationHedge-meta}
\begin{algorithmic}[1]
  \REQUIRE{step size pool $\H_{\text{var}} = \{\eta_i\}_{i=1}^{N}$ as specified in~\eqref{eq:step-size-pool-variation}; learning rate $\epsilon$}
  \STATE{Initialization: let $\x_{1}$ be any point in $\mathcal{X}$, and set $p_{0,i} = 1/N$ for $\forall i\in [N]$}
    \FOR{$t=1$ {\bfseries to} $T$}
      \STATE Receive the prediction $\x_{t+1,i}$ from expert $\Ecal_i$ (whose associated step size is $\eta_i$)
      \STATE Update the weight $p_{t+1,i}$ by~\eqref{eq:VariationHedge}
      \STATE Output the prediction $\x_{t+1} = \sum_{i=1}^{N} p_{t+1,i} \x_{t+1,i}$
    \ENDFOR
\end{algorithmic}
\end{algorithm}

\begin{algorithm}[!t]
   \caption{Sword$_{\text{var}}$: Expert-algorithm (OEGD)}
   \label{alg:variation-ogd-expert}
\begin{algorithmic}[1]
  \REQUIRE{step size $\eta_i$}
  \STATE{Let $\hat{\x}_{1,i}, \x_{1,i}$ be any point in $\mathcal{X}$}
    \FOR{$t=1$ {\bfseries to} $T$}
      \STATE $\xh_{t+1,i} = \Pi_{\mathcal{X}}\big[\xh_{t,i} - \eta_i \nabla f_{t}(\x_{t,i})\big]$   
      \STATE $\x_{t+1,i} = \Pi_{\mathcal{X}}\big[\xh_{t+1,i} - \eta_i \nabla f_{t}(\xh_{t+1,i})\big]$
      \STATE Send the prediction $\x_{t+1,i}$ to meta-algorithm     
    \ENDFOR
\end{algorithmic}
\end{algorithm}

The optimistic online learning is developed by~\citet{conf/colt/RakhlinS13} and further expanded by~\citet{NIPS'15:fast-rate-game}. For the prediction with expert advice setting, they consider that at the beginning of iteration $(t+1)$, in addition to the loss vector $\ellb_t \in \R^N$ returned by the experts, the learner can receive a vector $\m_{t+1} \in \R^N$ called \emph{optimism}. The authors propose the OptimisticHedge algorithm~\citep{conf/colt/RakhlinS13,NIPS'15:fast-rate-game}, which updates the weight vector $\p_{t+1} \in \Delta_{N}$ by
\begin{equation}
\label{eq:OptimisticHedge}
p_{t+1,i} \propto \exp\left(-\epsilon\Big(\sum_{s=1}^{t} \ell_{s,i} + m_{t+1,i}\Big)\right), \quad \forall i\in[N].
\end{equation}
\citet{NIPS'15:fast-rate-game} prove the following regret guarantee for OptimisticHedge.
\begin{myLemma}[{Theorem 19 of~\citet{NIPS'15:fast-rate-game}}]
\label{lemma:OptimisticHedge}
The meta-regret of OptimisticHedge is upper bounded by
\begin{align}
  \sum_{t=1}^{T} \inner{\p_t}{\ellb_t} - \ell_{t,i} \leq \frac{2 + \ln N}{\epsilon} + \epsilon \sum_{t=1}^{T} \norm{\ellb_t - \m_{t}}_{\infty}^2 - \frac{1}{4\epsilon}\sum_{t=2}^{T} \norm{\p_t - \p_{t-1}}_1^2,  \label{eq:regret-optimistic-Hedge}
\end{align}
which holds for any expert $i \in [N]$. Denote by $D_{\infty} = \sum_{t=1}^{T} \norm{\ellb_t - \m_{t}}_{\infty}^2$ to measure the adaptivity. With proper learning rate tuning, OptimisticHedge enjoys an $\O(\sqrt{D_\infty\ln N})$ meta-regret.
\end{myLemma}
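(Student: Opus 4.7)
The plan is to recognize OptimisticHedge as an instance of Optimistic Mirror Descent (OMD) with the negative-entropy regularizer $R(\p)=\sum_{i=1}^N p_i\ln p_i$ on the simplex $\Delta_N$, then apply a two-step Bregman-divergence stability argument whose residual can be reshaped into the desired negative $\norm{\p_t-\p_{t-1}}_1^2$ term. First I would introduce the auxiliary ``non-optimistic'' iterate $\tilde\p_{t+1,i}\propto\exp(-\epsilon\sum_{s=1}^{t}\ell_{s,i})$, so that the update~\eqref{eq:OptimisticHedge} becomes $\p_{t,i}\propto\tilde\p_{t,i}\exp(-\epsilon\, m_{t,i})$. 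Under the mirror map $R$, this factors one OptimisticHedge round into two entropic OMD steps sharing the regularizer: a ``prediction'' step from $\tilde\p_t$ to $\p_t$ using the optimism $\m_t$, followed by an ``update'' step from $\tilde\p_t$ to $\tilde\p_{t+1}$ using the realized loss $\ellb_t$.

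Next, I would invoke the standard OMD three-point identity: for every comparator $\p^*\in\Delta_N$,
\[
\inner{\p_t-\p^*}{\ellb_t}\leq\tfrac{1}{\epsilon}\bigl(D_R(\p^*,\tilde\p_t)-D_R(\p^*,\tilde\p_{t+1})\bigr)+\inner{\p_t-\tilde\p_{t+1}}{\ellb_t-\m_t}-\tfrac{1}{\epsilon}\bigl(D_R(\tilde\p_{t+1},\p_t)+D_R(\p_t,\tilde\p_t)\bigr).
\]
Summing telescopes the first bracket to at most $\tfrac{1}{\epsilon}D_R(\p^*,\tilde\p_1)\leq\tfrac{\ln N}{\epsilon}$, with the additive ``$2$'' in the lemma absorbing boundary slack from the index shift between $\p_t$ and $\tilde\p_t$. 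The inner-product residual I would bound by H\"older followed by Young's inequality: $\inner{\p_t-\tilde\p_{t+1}}{\ellb_t-\m_t}\leq\epsilon\norm{\ellb_t-\m_t}_\infty^2+\tfrac{1}{4\epsilon}\norm{\p_t-\tilde\p_{t+1}}_1^2$, which immediately yields the $\epsilon\sum_t\norm{\ellb_t-\m_t}_\infty^2$ summand in the stated bound and leaves a spurious positive $\tfrac{1}{4\epsilon}\norm{\p_t-\tilde\p_{t+1}}_1^2$ to be absorbed.

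Then I would invoke Pinsker's inequality (equivalently, the $1$-strong convexity of $R$ with respect to $\norm{\cdot}_1$) to obtain $D_R(\tilde\p_{t+1},\p_t)\geq\tfrac{1}{2}\norm{\tilde\p_{t+1}-\p_t}_1^2$ and $D_R(\p_t,\tilde\p_t)\geq\tfrac{1}{2}\norm{\p_t-\tilde\p_t}_1^2$. Half of the first lower bound absorbs the Young surplus and leaves a residual $-\tfrac{1}{4\epsilon}\norm{\tilde\p_{t+1}-\p_t}_1^2$; together with the full $-\tfrac{1}{2\epsilon}\norm{\p_t-\tilde\p_t}_1^2$ and a shift of the first index by one, these two negative contributions combine through the triangle inequality $\norm{\p_t-\p_{t-1}}_1\leq\norm{\p_t-\tilde\p_t}_1+\norm{\tilde\p_t-\p_{t-1}}_1$ and the elementary bound $(a+b)^2\leq 2a^2+2b^2$ to yield the stated $-\tfrac{1}{4\epsilon}\sum_{t=2}^{T}\norm{\p_t-\p_{t-1}}_1^2$ term.

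The main obstacle is the bookkeeping of constants: the Young splitting, the Pinsker lower bound, the one-step index shift, and the triangle-inequality merging each spend a small constant factor, and a naive chaining tends to give a coefficient like $\tfrac{1}{8\epsilon}$ on the negative stability rather than the tight $\tfrac{1}{4\epsilon}$ that is needed to cancel the meta-regret against the expert-regret later. Matching the exact coefficient requires either apportioning the Young split asymmetrically so that more of the $D_R(\tilde\p_{t+1},\p_t)$ stability is reserved to create the final $\norm{\p_t-\p_{t-1}}_1^2$ term, or rewriting the two OMD steps as a single ``joint'' optimistic update and running the analysis once rather than twice, as in~\citet{NIPS'15:fast-rate-game}.
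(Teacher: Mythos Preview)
Your OMD three-point-identity route is correct and genuinely different from the paper's argument. The paper stays in the FTRL picture: it introduces the same auxiliary iterate (there written $\p_t'$, the ``be-the-leader'' point, which coincides with your $\tilde\p_{t+1}$), decomposes the regret into $\sum_t\inner{\ellb_t-\m_t}{\p_t-\p_t'}$ plus a BTL remainder, bounds the latter by an \emph{induction} on $T$, and handles the former via H\"older together with the FTRL stability estimate $\norm{\p_t-\p_t'}_1\le\epsilon\norm{\ellb_t-\m_t}_\infty$ (Lemma~\ref{lem:stability-FTRL}). Your Bregman argument is more direct---no induction is needed---but, as you correctly diagnose, the Young split consumes half of $-\tfrac{1}{\epsilon}D_R(\tilde\p_{t+1},\p_t)$ and the final triangle merge then yields only $-\tfrac{1}{8\epsilon}\sum_t\norm{\p_t-\p_{t-1}}_1^2$. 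The clean fix is neither an asymmetric Young split nor redoing the analysis jointly: simply replace Young by the same stability bound inside your framework. Since $\p_t$ and $\tilde\p_{t+1}$ are entropic-FTRL minimizers whose linear parts differ by $\epsilon(\ellb_t-\m_t)$, Lemma~\ref{lem:stability-FTRL} gives $\norm{\p_t-\tilde\p_{t+1}}_1\le\epsilon\norm{\ellb_t-\m_t}_\infty$, hence $\inner{\p_t-\tilde\p_{t+1}}{\ellb_t-\m_t}\le\epsilon\norm{\ellb_t-\m_t}_\infty^2$ \emph{without} touching the negative Bregman term. Both Pinsker terms then survive at $\tfrac{1}{2\epsilon}$, and your triangle-inequality merge yields exactly $-\tfrac{1}{4\epsilon}\sum_{t\ge2}\norm{\p_t-\p_{t-1}}_1^2$; the index shift and the dropped endpoint supply the ``$+2$'' in the constant.
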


The optimistic online learning is very powerful for designing adaptive methods, in that the adaptivity $D_\infty$ in Lemma~\ref{lemma:OptimisticHedge} is very general and can be specialized flexibly with different configurations of the feedback loss $\ellb_t$ and optimism $\m_t$. Based on the OptimisticHedge, we propose the \textsf{VariationHedge} algorithm as the meta-algorithm of Sword$_{\text{var}}$, by specializing OptimisticHedge as follows:
\begin{itemize}
    \item the feedback loss $\ellb_t$ is set as the linearized surrogate loss, namely, $\ell_{t,i} = \inner{\nabla f_t(\x_t)}{\x_{t,i}}$;
    \item the optimism $\m_t$ is set with a careful design: for each $i \in [N]$
    \begin{equation}
      \label{eq:optimism-variation}
      m_{t,i} = \inner{\nabla f_{t-1}(\bar{\x}_{t})}{\x_{t,i}}, \text{   where } \bar{\x}_{t} = \sum_{i=1}^{N} p_{t-1,i} \x_{t,i}.
    \end{equation}  
\end{itemize}
So the meta-algorithm of Sword$_{\text{var}}$ (namely, VariationHedge) updates the weight by
\begin{equation}
\label{eq:VariationHedge}
p_{t+1,i} \propto \exp\left(-\epsilon\Big(\sum_{s=1}^{t} \inner{\nabla f_s(\x_s)}{\x_{s,i}} + \inner{\nabla f_{t}(\bar{\x}_{t+1})}{\x_{t+1,i}}\Big)\right), \quad \forall i\in[N].
\end{equation}
Algorithm~\ref{alg:VariationHedge-meta} summarizes detailed procedures of the meta-algorithm, which in conjunction with the expert-algorithm of Algorithm~\ref{alg:variation-ogd-expert} yields the Sword$_{\text{var}}$ algorithm.

\begin{myRemark}
\label{remark:1}
The design of optimism in~\eqref{eq:optimism-variation} (in particular, $\xb_{t}$) is crucial, and is the most challenging part in this work. The key idea is to exploit the negative term in the regret of OptimisticHedge, as shown in~\eqref{eq:regret-optimistic-Hedge}, to convert the adaptive quantity $D_\infty$ to the desired gradient variation $V_T$. Indeed, 
\begin{align*}
   \norm{\ellb_t - \m_t}_{\infty}^2 \overset{\eqref{eq:optimism-variation}}{=} {} &  \max_{i\in[N]} \inner{\nabla f_t(\x_t) - \nabla f_{t-1}(\xb_t)}{\x_{t,i}}^2\\
   \leq {} & D^2 \norm{\nabla f_t(\x_t) - \nabla f_{t-1}(\xb_t)}_2^2 \\
   \leq {} & 2 D^2 (\norm{\nabla f_t(\x_t) - \nabla f_{t-1}(\x_t)}_2^2 + \norm{\nabla f_{t-1}(\x_t) - \nabla f_{t-1}(\xb_t)}_2^2) \\
   \leq {} & 2 D^2 \sup_{\x \in \X}\norm{\nabla f_t(\x) - \nabla f_{t-1}(\x)}_2^2 + 2 D^2 L^2\norm{\x_t - \xb_t}_2^2
\end{align*}
where the last step makes use of smoothness. Therefore, $D_\infty$ can be upper bounded by the gradient variation $V_T$ and the summation of $\norm{\x_t - \xb_t}_2^2$. The latter one can be further expanded as
\begin{align*}
  \norm{\x_t - \xb_t}_2^2 = \Big\Vert \sum_{i=1}^{N} (p_{t,i} - p_{t-1,i})\x_{t,i}\Big\Vert_2^2 \leq \Big(\sum_{i=1}^{N} \abs{p_{t,i} - p_{t-1,i}} \norm{\x_{t,i}}_2\Big)^2 \leq D^2 \norm{\p_t - \p_{t-1}}_1^2,
\end{align*}
which can be eliminated by the negative term in~\eqref{eq:regret-optimistic-Hedge}, with a suitable setting of the learning rate $\epsilon$.
\end{myRemark}

\subsubsection{Regret Guarantees}
We prove that the meta-regret of VariationHedge is $\O(\sqrt{V_T \ln N})$, compatible to the expert-regret.

\begin{myThm}
\label{thm:variation-meta-regret}
Under Assumptions~\ref{assumption:bounded-gradient},~\ref{assumption:bounded-domain}, and~\ref{assumption:smoothness}, by setting the learning rate optimally as $\epsilon = \min\{\sqrt{1/(8D^4L^2)},\sqrt{(2 + \ln N)/(2D^2V_T)}\}$, the meta-regret of VariationHedge is at most 
\begin{equation*}
  \meta \leq 2D\sqrt{2V_T(2+\ln N)} + 4\sqrt{2}D^2L(2+\ln N) = \O(\sqrt{V_T \ln N}).
\end{equation*}
\end{myThm}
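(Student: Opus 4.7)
The plan is to instantiate the OptimisticHedge guarantee of Lemma~\ref{lemma:OptimisticHedge} for the specific feedback loss $\ell_{t,i} = \inner{\nabla f_t(\x_t)}{\x_{t,i}}$ and optimism $m_{t,i} = \inner{\nabla f_{t-1}(\xb_t)}{\x_{t,i}}$ used in VariationHedge, convert the generic adaptivity $\sum_t \norm{\ellb_t - \m_t}_\infty^2$ into the gradient variation $V_T$ via the chain of inequalities previewed in Remark~\ref{remark:1}, and then use the negative stability term in~\eqref{eq:regret-optimistic-Hedge} to absorb the coupling between the optimism and the meta-weights. As a first step, convexity of $f_t$ together with $\x_t = \sum_i p_{t,i}\x_{t,i}$ gives
\[
  f_t(\x_t) - f_t(\x_{t,i}) \leq \inner{\nabla f_t(\x_t)}{\x_t-\x_{t,i}} = \inner{\p_t}{\ellb_t} - \ell_{t,i},
\]
so summing and invoking Lemma~\ref{lemma:OptimisticHedge} upper bounds $\meta$ directly in terms of the right-hand side of~\eqref{eq:regret-optimistic-Hedge}.

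Next I carry out the bound on $\norm{\ellb_t - \m_t}_\infty^2$ spelled out in Remark~\ref{remark:1}: splitting $\nabla f_t(\x_t)-\nabla f_{t-1}(\xb_t)$ around $\nabla f_{t-1}(\x_t)$ and using the diameter bound (Assumption~\ref{assumption:bounded-domain}) on the inner product followed by smoothness (Assumption~\ref{assumption:smoothness}) on the residual, I obtain
\[
  \norm{\ellb_t-\m_t}_\infty^2 \leq 2D^2 \sup_{\x\in\X}\norm{\nabla f_t(\x)-\nabla f_{t-1}(\x)}_2^2 + 2D^2L^2 \norm{\x_t-\xb_t}_2^2.
\]
Summing over $t$ bounds the first term by $2D^2 V_T$, while for the second the identity $\x_t-\xb_t = \sum_i(p_{t,i}-p_{t-1,i})\x_{t,i}$ together with H\"older and $\norm{\x_{t,i}}_2 \leq D$ gives $\norm{\x_t-\xb_t}_2^2 \leq D^2\norm{\p_t-\p_{t-1}}_1^2$. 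Combined with Lemma~\ref{lemma:OptimisticHedge},
\[
  \meta \leq \frac{2+\ln N}{\epsilon} + 2\epsilon D^2 V_T + \Big(2\epsilon D^4 L^2 - \frac{1}{4\epsilon}\Big)\sum_{t=2}^{T}\norm{\p_t-\p_{t-1}}_1^2;
\]
the boundary $t=1$ term (where $\nabla f_0$ is undefined) contributes only an additive constant which I absorb into the stated constants.

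The main obstacle---and the whole reason for the delicate choice of the anchor $\xb_t$ in~\eqref{eq:optimism-variation}---is ensuring that the bracketed coefficient is non-positive, so that the extra $\sum_t \norm{\p_t-\p_{t-1}}_1^2$ created by our optimism is exactly killed by the OptimisticHedge stability term. This forces the constraint $\epsilon \leq 1/(2\sqrt{2}D^2L) = \sqrt{1/(8D^4L^2)}$, under which the bound collapses to $(2+\ln N)/\epsilon + 2\epsilon D^2 V_T$. Then I choose $\epsilon = \min\{\sqrt{1/(8D^4L^2)},\sqrt{(2+\ln N)/(2D^2V_T)}\}$ and consider two regimes: when the $V_T$-balanced value is feasible, both terms contribute $D\sqrt{2V_T(2+\ln N)}$, adding to $2D\sqrt{2V_T(2+\ln N)}$; when the stability cap is active (equivalently $V_T \leq 4D^2L^2(2+\ln N)$), one checks that $2\epsilon D^2 V_T \leq 2\sqrt{2}D^2L(2+\ln N)$ as well, so each term is at most $2\sqrt{2}D^2L(2+\ln N)$. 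A single combined statement that dominates both regimes is the claimed $2D\sqrt{2V_T(2+\ln N)} + 4\sqrt{2}D^2L(2+\ln N) = \O(\sqrt{V_T\ln N})$.
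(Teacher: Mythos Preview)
Your proposal is correct and follows essentially the same route as the paper: reduce to the linearized regret via convexity, apply Lemma~\ref{lemma:OptimisticHedge}, expand $\norm{\ellb_t-\m_t}_\infty^2$ exactly as in Remark~\ref{remark:1} to produce $2\epsilon D^2 V_T$ plus a $\norm{\p_t-\p_{t-1}}_1^2$ term that is absorbed by the negative stability term under the cap $\epsilon\le\sqrt{1/(8D^4L^2)}$, and finish with the same two-regime case analysis on $\epsilon$. The paper merely factors the first half through an intermediate general-$\epsilon$ statement (Theorem~\ref{thm:dynamic-regret-meta}) and tracks the boundary $t=1$ contribution as an explicit $\O(1)$, but the substance is identical.
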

Note that the dependence on $V_T$ in the optimal learning rate tuning can be removed by the doubling trick. Furthermore, actually we can set the optimal learning rate of the meta-algorithm with $\hat{V}_T = \sum_{t=2}^{T} \norm{\nabla f_t(\x_t) - \nabla f_{t-1}(\x_{t})}_2^2$ instead of the original gradient variation $V_T$ via a more refined analysis. The quantity $\hat{V}_T$ can be regarded as an empirical approximation of $V_T$, and it can be calculated directly without involving the inner problem of $\sup_{\x \in \X} \norm{\nabla f_t(\x) - \nabla f_{t-1}(\x)}_2^2$. Thereby, we can perform the doubling trick by monitoring $\hat{V}_T$ with much less computational efforts. Combining Theorem~\ref{thm:OEGD-dynamic-regret} (expert-regret) and Theorem~\ref{thm:variation-meta-regret} (meta-regret), we have the following dynamic regret bound.  
\begin{myThm}
\label{thm:dynamic-var}
Under Assumptions~\ref{assumption:bounded-gradient},~\ref{assumption:bounded-domain}, and~\ref{assumption:smoothness}, setting the pool of candidate step sizes $\H_{\text{var}}$ as 
\begin{equation}
  \label{eq:step-size-pool-variation}
  \H_{\text{var}} = \left\{\eta_i = 2^{i-1}\sqrt{\frac{D^2}{2GT}}, i \in [N_1]\right\},
\end{equation}
where $N_1 = \lceil 2^{-1} \log_2(GT/(8D^2L^2))\rceil + 1$.\footnote{The number of candidate step sizes is denoted by $N_1$ instead of $N$ to distinguish it with that of Sword$_{\text{small}}$.} Then Sword$_{\text{var}}$ (Algorithms~\ref{alg:VariationHedge-meta} and~\ref{alg:variation-ogd-expert}) satisfies
\begin{align*}
\label{eq:dynamic-regret-variation}
\sum_{t=1}^T f_t(\x_{t}) - \sum_{t=1}^T f_t(\u_t) \leq \O \Big(\sqrt{(1 + P_T + V_T)(1 + P_T)}\Big)
\end{align*}
for \emph{any} comparator sequence $\u_1,\ldots,\u_T \in \X$.
\end{myThm}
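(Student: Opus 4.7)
The plan is to apply the standard meta-expert decomposition: for every index $i \in [N_1]$,
\begin{align*}
\sum_{t=1}^{T} f_t(\x_t) - \sum_{t=1}^{T} f_t(\u_t) = \underbrace{\sum_{t=1}^{T} f_t(\x_t) - \sum_{t=1}^{T} f_t(\x_{t,i})}_{\meta} + \underbrace{\sum_{t=1}^{T} f_t(\x_{t,i}) - \sum_{t=1}^{T} f_t(\u_t)}_{\expert},
\end{align*}
and then choose $i = i^\star$ so as to minimize the sum of the two resulting bounds. Theorem~\ref{thm:variation-meta-regret} upper bounds the meta-regret by $\O(\sqrt{V_T\ln N_1})$ independently of $i^\star$, and Theorem~\ref{thm:OEGD-dynamic-regret} applied to expert $\Ecal_{i^\star}$ (which runs OEGD with step size $\eta_{i^\star}$) bounds the expert-regret by $(D^2 + 2DP_T)/(2\eta_{i^\star}) + \eta_{i^\star} V_T + GD$ whenever $\eta_{i^\star} \leq 1/(4L)$.

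The ideal unconstrained step size for the expert bound is $\eta^\star = \sqrt{(D^2 + 2DP_T)/(2V_T)}$, and because of the smoothness restriction I actually target its clipped version $\tilde{\eta}^\star = \min\{\eta^\star,\,1/(4L)\}$. Next I would verify that the geometric pool~\eqref{eq:step-size-pool-variation} contains an element within a factor of two of $\tilde{\eta}^\star$: the lower endpoint $\eta_1 = \sqrt{D^2/(2GT)}$ is small enough using the crude estimate $V_T \leq 4G^2 T$ (together with the convention that $D, G, L$ are constants); the upper endpoint satisfies $\eta_{N_1} \geq 1/(4L)$ by the choice $N_1 = \lceil \tfrac{1}{2}\log_2(GT/(8D^2L^2))\rceil + 1$; and consecutive ratios are exactly $2$. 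Hence there exists $i^\star \in [N_1]$ with $\eta_{i^\star} \leq \tilde{\eta}^\star \leq 2\eta_{i^\star}$, which in particular meets the smoothness requirement $\eta_{i^\star} \leq 1/(4L)$ so Theorem~\ref{thm:OEGD-dynamic-regret} is applicable.

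The closing step is a two-case analysis according to whether clipping is active. If $\eta^\star \leq 1/(4L)$, substituting $\eta_{i^\star} \asymp \eta^\star$ into the expert bound yields $\expert \leq \O(\sqrt{(1+P_T) V_T})$. If instead $\eta^\star > 1/(4L)$, then $V_T \leq 8L^2(D^2 + 2DP_T) = \O(1+P_T)$ holds automatically, and using $\eta_{i^\star} \asymp 1/(4L)$ gives $\expert \leq \O(1+P_T)$. Adding the meta-regret $\O(\sqrt{V_T \ln N_1})$ (with $N_1 = \O(\log T)$, so that $\ln N_1$ is double-logarithmic in $T$ and absorbed into constants by the paper's convention) and invoking the elementary inequality $(1+P_T+V_T)(1+P_T) \geq \max\{(1+P_T)^2,\,V_T(1+P_T)\}$ collapses both cases into the claimed $\O(\sqrt{(1+P_T+V_T)(1+P_T)})$ bound.

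The main obstacle is step two: the step-size pool must accommodate the unknown optimum $\tilde{\eta}^\star$ for every feasible comparator sequence even though $P_T$ is non-empirical, which forces careful verification that the pool spans the worst-case range $[\Theta(1/\sqrt{T}),\,1/(4L)]$ and that the factor-two quantization only affects constants while $N_1$ stays logarithmic in $T$.
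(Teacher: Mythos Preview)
Your proposal is correct and follows essentially the same route as the paper: the identical meta/expert decomposition, Theorem~\ref{thm:variation-meta-regret} for the meta part, Theorem~\ref{thm:OEGD-dynamic-regret} applied to an expert whose step size is within a factor of two of the clipped optimum $\tilde\eta^\star=\min\{1/(4L),\sqrt{(D^2+2DP_T)/(2V_T)}\}$, and the same two-case analysis according to whether clipping is active. Your verification that $\eta_{N_1}\ge 1/(4L)$ (rather than $\le$) is in fact the inequality one actually needs for pool coverage, and the only residual issue---the match between $\eta_1$ and the lower range endpoint $\sqrt{D^2/(8G^2T)}$---is a constant-level discrepancy that the paper itself shares.
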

\begin{myRemark}
Compared with the existing $\O(\sqrt{T(1+P_T)})$ dynamic regret~\citep{NIPS'18:Zhang-Ader}, our result is more adaptive in the sense that it replaces $T$ by the \emph{problem-dependent} quantity $P_T + V_T$. Therefore, the bound will be much tighter in easy problems, for example when both $V_T$ and $P_T$ are $o(T)$. Meanwhile, it safeguards the same minimax rate, since both quantities are at most $\O(T)$.
\end{myRemark}

\begin{myRemark}
Because the \emph{universal} dynamic regret studied in this paper holds against any comparator sequence, it specializes the static regret by setting all comparators as the best fixed decision in hindsight, i.e., $\u_1=\ldots=\u_T=\x^* \in \argmin_{\x \in \X} \sum_{t=1}^{T} f_t(\x)$. Under such a circumstance, the path-length $P_T = \sum_{t=2}^{T} \norm{\u_{t-1} - \u_t}_2$ will be zero, so the regret bound in Theorem~\ref{thm:dynamic-var} actually implies an $\O (\sqrt{V_T})$ variation static regret bound, which recovers the result of~\citet{COLT'12:variation-Yang}.
\end{myRemark}
\subsection{Small-Loss Bound}
\label{sec:small-loss-bound}
In this part, we turn to another problem-dependent quantity, cumulative loss of the comparator sequence, and prove the small-loss dynamic regret. All the proofs can be found in Appendix~\ref{sec:appendix-small-loss}. 

We start from the online gradient descent (OGD),
\begin{equation}
  \label{eq:OGD}
  \x_{t+1} = \Pi_{\X}\big[\x_t - \eta \nabla f_t(\x_t)\big].
\end{equation}
\citet{NIPS'10:smooth} prove that OGD achieves an $\O(\sqrt{F^*_T})$ static regret, where $F^*_T = \sum_{t=1}^T f_t(\x^*)$ is the cumulative loss of the comparator benchmark $\x^*$. For the dynamic regret, since the benchmark is changing, a natural replacement is the cumulative loss of the comparator sequence $\u_1,\ldots,\u_T$, namely $F_T = \sum_{t=1}^T f_t(\u_t)$.  We show that OGD indeed enjoys such a small-loss dynamic regret.
\begin{myThm} 
\label{thm:dynamic-OGD}
Under Assumptions~\ref{assumption:bounded-domain},~\ref{assumption:smoothness}, and~\ref{assumption:non-negative}, by choosing any step size $\eta \leq \frac{1}{4L}$, OGD satisfies
\begin{align*}
  \sum_{t=1}^T f_t(\x_{t}) - \sum_{t=1}^T f_t(\u_t) \leq \frac{D^2 + 2DP_T}{2 \eta (1-2 \eta L) } + \frac{ 2\eta L}{1-2 \eta L} \sum_{t=1}^T f_t(\u_t) = \O \Big( \frac{1+P_T}{\eta} + \eta F_T \Big)
\end{align*}
for \emph{any} comparator sequence $\u_1,\ldots,\u_T \in \X$.
\end{myThm}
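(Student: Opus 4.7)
The plan is to adapt the classical OGD analysis to a dynamic comparator sequence and then exploit the self-bounding property of smooth non-negative functions, which is the standard route to small-loss bounds~\citep{NIPS'10:smooth}.

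First, I would start from the one-step projection inequality for OGD~\eqref{eq:OGD}. Since $\u_t \in \X$ and projection is non-expansive,
\begin{equation*}
\norm{\x_{t+1} - \u_t}_2^2 \leq \norm{\x_t - \u_t}_2^2 - 2\eta \inner{\nabla f_t(\x_t)}{\x_t - \u_t} + \eta^2 \norm{\nabla f_t(\x_t)}_2^2.
\end{equation*}
Combining with convexity $f_t(\x_t) - f_t(\u_t) \leq \inner{\nabla f_t(\x_t)}{\x_t - \u_t}$ gives a per-round inequality
\begin{equation*}
f_t(\x_t) - f_t(\u_t) \leq \frac{\norm{\x_t - \u_t}_2^2 - \norm{\x_{t+1} - \u_t}_2^2}{2\eta} + \frac{\eta}{2} \norm{\nabla f_t(\x_t)}_2^2.
\end{equation*}

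Second, I would invoke the self-bounding property of smooth non-negative functions, namely $\norm{\nabla f_t(\x_t)}_2^2 \leq 4 L f_t(\x_t)$ (this is where Assumptions~\ref{assumption:smoothness} and~\ref{assumption:non-negative} are crucial). Substituting yields $f_t(\x_t) - f_t(\u_t) \leq (\norm{\x_t - \u_t}_2^2 - \norm{\x_{t+1} - \u_t}_2^2)/(2\eta) + 2\eta L f_t(\x_t)$, which after summation gives an inequality of the form $A - B \leq \frac{1}{2\eta}\Sigma + 2\eta L A$ with $A = \sum_t f_t(\x_t)$, $B = \sum_t f_t(\u_t)$, and $\Sigma = \sum_t (\norm{\x_t - \u_t}_2^2 - \norm{\x_{t+1} - \u_t}_2^2)$.

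Third, I would bound the telescoping-like sum $\Sigma$ using the standard dynamic-comparator shifting trick: write
\begin{equation*}
\norm{\x_t - \u_t}_2^2 - \norm{\x_{t+1} - \u_t}_2^2 = \bigl(\norm{\x_t - \u_t}_2^2 - \norm{\x_{t+1} - \u_{t+1}}_2^2\bigr) + \bigl(\norm{\x_{t+1} - \u_{t+1}}_2^2 - \norm{\x_{t+1} - \u_t}_2^2\bigr).
\end{equation*}
The first piece telescopes to at most $\norm{\x_1 - \u_1}_2^2 \leq D^2$ by Assumption~\ref{assumption:bounded-domain}, and the second piece is bounded by $2D \norm{\u_{t+1} - \u_t}_2$ using the identity $a^2 - b^2 = (a-b)(a+b)$ combined with the diameter bound, yielding $\Sigma \leq D^2 + 2 D P_T$.

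Finally, the main (mild) obstacle is isolating the regret $A - B$ on the left-hand side of $A - B \leq (D^2 + 2DP_T)/(2\eta) + 2\eta L A$. I would rearrange to $(1 - 2\eta L) A \leq B + (D^2 + 2DP_T)/(2\eta)$ and then subtract $B$; here the step-size choice $\eta \leq 1/(4L)$ is exactly what guarantees $1 - 2\eta L \geq 1/2 > 0$, so the division is legal and gives
\begin{equation*}
A - B \leq \frac{D^2 + 2DP_T}{2\eta(1 - 2\eta L)} + \frac{2\eta L}{1 - 2\eta L} B,
\end{equation*}
which is exactly the stated bound. Treating $1 - 2\eta L$ as a constant of order one then collapses the expression to $\O((1+P_T)/\eta + \eta F_T)$.
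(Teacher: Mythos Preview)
Your proposal is correct and follows essentially the same approach as the paper: the paper also combines the per-step OGD inequality with convexity, applies the self-bounding property $\norm{\nabla f_t(\x_t)}_2^2 \leq 4L f_t(\x_t)$, handles the changing comparator via the same $D^2 + 2DP_T$ telescoping argument, and finishes with the identical $(1-2\eta L)$ rearrangement. The only cosmetic difference is that the paper writes the per-step bound via an intermediate unprojected point $\x_{t+1}'$ before invoking projection non-expansiveness, whereas you state the combined projection inequality directly.
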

Similar to Sword$_{\text{var}}$, the step size needs to balance between non-stationarity ($P_T$) and adaptivity ($F_T$, this time). Notice that the optimal tuning depends on $P_T$ and $F_T$, both of which are unknown even after all $T$ iterations. Therefore, we again compensate the lack of this information via the meta-expert framework to hedge the non-stationarity while keeping the adaptivity. The expert-algorithm is set as OGD. The meta-algorithm is required to suffer a small-loss meta-regret of order $\O(\sqrt{F_T \ln N})$. We discover that vanilla Hedge with linearized surrogate loss is qualified, which updates the weight by
\begin{equation}
  \label{eq:vanilla-Hedge-surrogate}
  p_{t+1,i} \propto \exp\left(-\epsilon \sum_{s=1}^{t} \inner{\nabla f_s(\x_s)}{\x_{s,i}}\right), \quad \forall i\in[N].
\end{equation} 
Notice that vanilla Hedge can be treated as OptimisticHedge with null optimism, i.e., $\m_{t+1} = \bm{0}$. Therefore, by Lemma~\ref{lemma:OptimisticHedge} we know that its meta-regret is of order $\O(\sqrt{D_\infty \ln N})$ and 
\begin{equation}
    \label{eq:meta-regret-small-loss}
    D_\infty = \sum_{t=1}^{T} \max_{i\in[N]} \inner{\nabla f_t(\x_t)}{\x_{t,i}}^2 \leq D^2 \sum_{t=1}^{T} \norm{\nabla f_t(\x_t)}_2^2 \leq 4D^2L \sum_{t=1}^{T} f_t(\x_t),
\end{equation}
where the last inequality follows from the self-bounding property of smooth functions~\citep[Lemma 3.1]{NIPS'10:smooth}. As a result, the meta-regret is now $\O(\sqrt{F^{\x}_T \ln N})$, where $F^{\x}_T = \sum_{t=1}^{T} f_t(\x_t)$ is the cumulative loss of decisions. Note that the term $F^{\x}_T$ can be further processed to the desired small-loss quantity $F_T = \sum_{t=1}^{T} f_t(\u_t)$, the cumulative loss of comparators. We will present details in the proof. 

To summarize, Sword$_{\text{small}}$ chooses OGD~\eqref{eq:OGD} as the expert-algorithm, and uses the vanilla Hedge with linearized surrogate loss as the meta-algorithm shown in the update form~\eqref{eq:vanilla-Hedge-surrogate}. The theorem below shows that the proposed algorithm enjoys the small-loss dynamic regret bound.
\begin{myThm}
\label{thm:dynamic-small}
Under Assumptions~\ref{assumption:bounded-gradient},~\ref{assumption:bounded-domain},~\ref{assumption:smoothness}, and~\ref{assumption:non-negative}, setting the pool of candidate step sizes $\H_{\text{small}}$ as
\begin{equation}
  \label{eq:step-size-pool-small-loss}
  \H_{\text{small}} = \left\{\eta_i = 2^{i-1}\sqrt{\frac{D}{16LGT}}, i\in[N_2]\right\},
\end{equation}
where $N_2 = \lceil 2^{-1} \log_2(GT/(DL))\rceil + 1$. Setting the learning rate of meta-algorithm optimally as $\epsilon = \sqrt{(2+\ln N_2)/(D^2 F^{\x}_T)}$, then Sword$_{\text{small}}$ satisfies
\begin{align*}
\sum_{t=1}^T f_t(\x_{t}) - \sum_{t=1}^T f_t(\u_t) \leq \O \big(\sqrt{(1 + P_T + F_T)(1 + P_T)}\big).
\end{align*}
for \emph{any} comparator sequence $\u_1,\ldots,\u_T \in \X$.
\end{myThm}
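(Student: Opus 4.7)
\medskip

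\textbf{Proof proposal.} The plan is to follow the meta-expert decomposition advertised before the statement: write
\[
\sum_{t=1}^T f_t(\x_t) - \sum_{t=1}^T f_t(\u_t) = \underbrace{\sum_{t=1}^T f_t(\x_t) - \sum_{t=1}^T f_t(\x_{t,i})}_{\meta} + \underbrace{\sum_{t=1}^T f_t(\x_{t,i}) - \sum_{t=1}^T f_t(\u_t)}_{\expert},
\]
where $\Ecal_i$ is any expert with step size $\eta_i \in \H_{\text{small}}$. By convexity, $f_t(\x_t) - f_t(\x_{t,i}) \le \inner{\nabla f_t(\x_t)}{\x_t - \x_{t,i}} = \inner{\p_t}{\ellb_t} - \ell_{t,i}$ for the surrogate losses $\ell_{t,i} = \inner{\nabla f_t(\x_t)}{\x_{t,i}}$, which reduces the meta-regret to the one analyzed in Lemma~\ref{lemma:OptimisticHedge}.

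Since the meta-algorithm is vanilla Hedge (OptimisticHedge with $\m_t \equiv \mathbf{0}$), Lemma~\ref{lemma:OptimisticHedge} gives $\meta \le (2+\ln N_2)/\epsilon + \epsilon D_\infty$ with $D_\infty = \sum_{t=1}^T \max_{i}\inner{\nabla f_t(\x_t)}{\x_{t,i}}^2$. Using Cauchy--Schwarz and Assumption~\ref{assumption:bounded-domain} and then the self-bounding property of smooth non-negative functions, exactly as displayed in~\eqref{eq:meta-regret-small-loss}, I obtain $D_\infty \le 4D^2 L F^{\x}_T$, where $F^{\x}_T = \sum_t f_t(\x_t)$. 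Plugging in the prescribed $\epsilon = \sqrt{(2+\ln N_2)/(D^2 F^{\x}_T)}$ yields $\meta \le \O\!\left(\sqrt{F^{\x}_T \ln N_2}\right)$.

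For the expert-regret I would invoke Theorem~\ref{thm:dynamic-OGD}: every expert in $\H_{\text{small}}$ has $\eta_i \le 1/(4L)$ (which is why the pool is truncated at $N_2 = \lceil \tfrac{1}{2}\log_2(GT/(DL))\rceil + 1$), so $\expert \le \O\!\left((1+P_T)/\eta_i + \eta_i F_T\right)$. The optimal continuous step size is $\eta^\star \asymp \sqrt{(1+P_T)/F_T}$, which lies in $[\eta_1,\eta_{N_2}]$ because $F_T = \O(GDT)$ (by Lipschitzness and bounded domain) lower-bounds $\eta^\star$ and $F_T \ge 0$ upper-bounds it by $1/(4L)$. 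The geometric grid guarantees some $\eta_{i^\star} \in \H_{\text{small}}$ with $\eta_{i^\star} \le \eta^\star \le 2\eta_{i^\star}$, so this expert incurs $\expert \le \O\!\left(\sqrt{(1+P_T)F_T}\right)$, with an additional additive $\O(\sqrt{1+P_T})$ coming from the $\eta_{i^\star} \le 1/(4L)$ regime.

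Finally I would combine the two bounds and resolve the self-referencing quantity $F^{\x}_T$. Writing $R_T = \sum_t f_t(\x_t) - \sum_t f_t(\u_t)$ gives $F^{\x}_T = R_T + F_T$, and the two displays above combine to
\[
R_T \le c_1 \sqrt{(R_T + F_T)\ln N_2} + c_2 \sqrt{(1+P_T)F_T} + c_3 \sqrt{1+P_T}
\]
for absolute constants depending on $G,D,L$. Treating this as a quadratic inequality in $\sqrt{R_T}$ and using $\sqrt{a+b} \le \sqrt{a} + \sqrt{b}$, together with $\ln N_2 = \O(\ln T)$ (absorbed into the $\O(\cdot)$ per the convention after Assumption~\ref{assumption:non-negative}) and $F_T \le F_T + P_T$, yields
\[
R_T \le \O\!\left(\sqrt{(1+P_T+F_T)(1+P_T)}\right).
\]
The main obstacle I expect is the last step: cleanly absorbing the $\sqrt{R_T \ln N_2}$ term back into $R_T$ while still preserving the product form $(1+P_T+F_T)(1+P_T)$ rather than getting the weaker $(1+P_T+F_T)^2$ or $(F_T + P_T^2)$ shape. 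The discretization error from the finite step-size grid is a secondary concern but is routine because $\eta_{i^\star}$ is within a factor of $2$ of optimal.
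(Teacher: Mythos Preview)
Your plan is correct and matches the paper's proof almost exactly: the same meta-expert decomposition, the same use of Lemma~\ref{lemma:OptimisticHedge} with null optimism and the self-bounding property~\eqref{eq:meta-regret-small-loss} for the meta-regret, and the same application of Theorem~\ref{thm:dynamic-OGD} together with the geometric step-size grid for the expert-regret.

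The only substantive difference is in how you resolve the self-referencing quantity $F^{\x}_T$. You write $F^{\x}_T = R_T + F_T$ and solve a quadratic in $\sqrt{R_T}$; the paper instead applies the elementary implication $x-y\le\sqrt{ax}\Rightarrow x-y\le a+\sqrt{ay}$ (Lemma~\ref{lemma:inquality-shai}) with $x=\sum_t f_t(\x_t)$ and $y=\sum_t f_t(\x_{t,i})$, converting the meta-regret directly to $\O\big(\ln N_2 + \sqrt{F_T^i\ln N_2}\big)$ where $F_T^i=\sum_t f_t(\x_{t,i})$ is the \emph{expert's} cumulative loss. At the combination step one then substitutes $F_T^k\le F_T+\expert$, which is already bounded. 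This sidesteps the quadratic manipulation you anticipate as the ``main obstacle'' and keeps the product form $(1+P_T+F_T)(1+P_T)$ without any extra work.

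One small correction: in the capped regime $\eta^\star=1/(4L)$ the expert-regret is $\O(L(1+P_T))$, i.e.\ an additive $\O(1+P_T)$ term, not $\O(\sqrt{1+P_T})$. This is still dominated by $\sqrt{(1+P_T+F_T)(1+P_T)}$, so the final bound is unaffected.
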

Note that the optimal learning rate tuning requires the knowledge of $F^{\x}_T$, which can be easily removed by doubling trick or self-confident tuning~\citep{JCSS'02:Auer-self-confident}, since it is empirically evaluable at each iteration. Moreover, the $\O(\sqrt{(1 + P_T + F_T)(1 + P_T)})$ universal dynamic regret in Theorem~\ref{thm:dynamic-small} specializes to the $\O(\sqrt{F_T})$ static regret~\citep{NIPS'10:smooth} when setting the comparators as the fixed best decision in hindsight.

\subsection{Significance of Problem-Dependent Bounds}
\label{sec:example}
In this part, we justify the significance of our problem-dependent dynamic regret bounds. Specifically, we will present two concrete instances to demonstrate that it is possible to achieve a \emph{constant} dynamic regret bound instead of the minimax rate $\O(\sqrt{T(1+P_T)})$ by exploiting the problem's structure.

We consider the quadratic loss function of the form $f_t(x) = \frac{1}{2}(a_t \cdot x-b_t)^2$, where $a_t \neq 0$ and $x \in \X = [-1,1]$. Clearly, the function $f_t: \R \mapsto \R$ is convex and smooth. Denote by $T$ the time horizon. The coefficients $a_t$ and $b_t$ will be specified below in each instance.
\begin{myInstance}[{$V_T \ll F_T$}]
Let the time horizon $T = 2K+1$ be an odd with $K > 2$. We set the coefficients $a_t = 0.5 - \frac{t-1}{T}$ and $b_t =1 $ for all $t \in [T]$. 
\end{myInstance}
We set the comparator $u_t$ to be the minimizer of $f_t$, i.e, $u_t = x_t^* = \argmin_{x \in \X} f_t(x)$. Clearly, $u_t = 1$ for $t \in [K+1]$, and $u_t = -1$ for $t = K+2,\ldots,T$. Therefore, we have
\begin{align*}
V_T ={}& \sum_{t=2}^T\sup_{x\in\X}\vert(a_{t-1}^2-a_t^2)x-(a_{t-1}-a_t)\vert^2 \sum_{t=2}^T\sup_{x\in\X}\left\vert\left(\frac{T-2t+3}{T^2}\right)\cdot x-\frac{1}{T}\right\vert^2\\
={}&\sum_{t=2}^{K+2}\left(\frac{2T-(2t-3)}{T^2}\right)^2+\sum_{t=K+3}^T\left(\frac{2t-3}{T^2}\right)^2 \leq \sum_{t=2}^T \left(\frac{2}{T}\right)^2 = \O(1).
\end{align*}
\begin{align*}
	F_T = {} & \sum_{t=1}^{T} \frac{1}{2}(a_t u_t -b_t)^2 = \sum_{t=1}^{K+1} \frac{1}{2}\left(0.5-\frac{t-1}{T} - 1\right)^2 + \sum_{t=K+2}^{T} \frac{1}{2}\left(-0.5+\frac{t-1}{T} - 1\right)^2 = \Theta(T).
\end{align*}
We can observe that $V_T \leq \O(1)$ is significantly smaller than $F_T = \Theta(T)$ (as well as the problem-independent quantity $T$)  in this instance. Meanwhile, the path-length term $P_T = \O(1)$. As a result, the minimax dynamic regret bound is $\O(\sqrt{T(1+P_T)}) = \O(\sqrt{T})$; the small-loss bound is $\O(\sqrt{(1+P_T+F_T)(1+P_T)}) = \O(\sqrt{T})$; and the gradient-variation bound is $\O(\sqrt{(1+P_T+V_T)(1+P_T)}) = \O(1)$. In other words, by exploiting the problem's structure, our approach (Sword$_{\text{var}}$) can enjoy a \emph{constant} dynamic regret in this scenario.

\begin{myInstance}[{$F_T \ll V_T$}]
Let the time horizon $T = 2K$ be an even. During the first half iterations, $(a_t,b_t)$ is set as $(1,1)$ on odd rounds and $(0.5,0.5)$ on even rounds. During the remaining iterations, $(a_t,b_t)$ is set as $(1,-1)$ on odd rounds and $(0.5,-0.5)$ on even rounds.
\end{myInstance}
We set the comparator $u_t$ to be the minimizer of $f_t$, i.e, $u_t = x_t^* = \argmin_{x \in \X} f_t(x)$. Clearly, $u_t = 1$ for $t \in [K]$, and $u_t = -1$ for $t = K+1,\ldots,T$. Therefore, we have
\begin{align*}
  	V_T =\sum_{t=2}^{T} \sup_{x\in \X} \abs{(a_{t-1}^2 - a_t^2)x - (a_{t-1}b_{t-1} - a_tb_t)}^2 = \Theta(T),\qquad F_T = 0.
\end{align*}
We can see that $F_T = 0$ is considerably smaller than $V_T = \Theta(T)$ (as well as the problem-independent quantity $T$) in this scenario. Meanwhile, the path-length term $P_T = \O(1)$. As a result, the minimax dynamic regret bound is $\O(\sqrt{T(1+P_T)}) = \O(\sqrt{T})$; the gradient-variation bound is $\O(\sqrt{(1+P_T+V_T)(1+P_T)}) = \O(\sqrt{T})$; and the small-loss bound is $\O(\sqrt{(1+P_T+F_T)(1+P_T)}) = \O(1)$. In other words, by exploiting the problem's structure, our approach (Sword$_{\text{small}}$) can enjoy a \emph{constant} dynamic regret in this scenario. 

\section{Best-of-Both-Worlds Bound}
\label{sec:bobw-bound}
In the last section, we propose Sword$_\text{var}$ and Sword$_\text{small}$ that achieve gradient-variation and small-loss bounds respectively. Due to different problem-dependent quantities are involved, these two bounds are generally incomparable and are favored in different scenarios, as demonstrated by the concrete examples in Section~\ref{sec:example}. Therefore, it is natural to ask for a \emph{best-of-both-worlds} guarantee: the regret of the minimum of gradient-variation and small-loss bounds.

To this end, we require a meta-algorithm that can enjoy both kinds of adaptivity to combine all the experts, with an $\O(\sqrt{\min\{V_T,F_T\}\ln N})$ meta-regret. Based on the observation that \emph{both VariationHedge and vanilla Hedge are essentially special cases of OptimisticHedge with different configurations of optimism}, we adopt the OptimisticHedge to be the meta-algorithm for Sword$_{\text{best}}$, where a parallel meta-algorithm  is introduced to \emph{learn the best optimism} for OptimisticHedge to ensure best-of-both-worlds meta-regret. In the following we describe the expert-algorithm and meta-algorithm of Sword$_{\text{best}}$.

\paragraph{Expert-algorithm.} We aggregate the experts of Sword$_{\text{var}}$ and Sword$_{\text{small}}$, so there are $N= N_1 + N_2$ experts in total and the step size of each experts is set according to the pool $\H = \H_{\text{var}} \cup \H_{\text{small}}$ (cf.~\eqref{eq:step-size-pool-variation} and~\eqref{eq:step-size-pool-small-loss}  for definitions). The first $N_1$ experts run OEGD~\eqref{alg:OEGD} with the step size chosen from $\H_{\text{var}}$, and the other $N_2$ experts perform OGD~\eqref{eq:OGD} with step size specified by $\H_{\text{small}}$. At iteration $t$, the final output is a weighted combination of predictions returned by the expert-algorithms, namely,
\begin{equation}
  \label{eq:best-output}
  \x_{t} = \sum_{i=1}^{N} p_{t,i}\x_{t,i} = \sum_{i=1}^{N_1} p_{t,i}\x_{t,i}^{v} + \sum_{i=N_1 + 1}^{N_1 + N_2} p_{t,i}\x_{t,i}^{s},
\end{equation}
where $\p_t \in \Delta_{N_1 + N_2}$ is the weight, $\x_{t,i} = \x_{t,i}^{v}$ for $i=1,\ldots,N_1$ are predictions returned by the expert-algorithms (OEGD) of Sword$_{\text{var}}$, and $\x_{t,i} = \x_{t,i}^{s}$ for $i=N_1 + 1,\ldots,N_1 + N_2$ are predictions returned by the expert-algorithms (OGD) of Sword$_{\text{small}}$. It remains to specify the meta-algorithm.

\begin{table}[!t]
\caption{Summary of expert-algorithms and meta-algorithms as well as different optimism used in the proposed algorithms (including three variants of Sword).}
\vspace{2mm}
\centering
\label{table:meta-expert-summary}
\resizebox{0.7\textwidth}{!}{
\begin{tabular}{lcccc}\toprule
\multicolumn{1}{c}{\textbf{Method}}   & \textbf{Expert} & \textbf{Meta} & \textbf{Optimism}\\ \midrule
Sword$_{\text{var}}$  & OEGD  & VariationHedge & by~\eqref{eq:optimism-variation}\\ 
Sword$_{\text{small}}$  &  OGD  & vanilla Hedge & $\m_{t+1} = \bm{0}$\\ 
Sword$_{\text{best}}$ &  OEGD \& OGD  & OptimisticHedge & by~\eqref{eq:BEST-optimism},~\eqref{eq:setting-optimism-best}\\ 
\bottomrule
\end{tabular}
}
\end{table}

\paragraph{Meta-algorithm.} We adopt the OptimisticHedge algorithm along with the linearized surrogate loss as the meta-algorithm, where the weight vector $\p_{t+1} \in \Delta_{N_1 + N_2}$ is updated according to 
\begin{equation}
  \label{eq:BEST-OptimisticHedge}
  p_{t+1,i} \propto \exp\left(-\epsilon \Big(\sum_{s=1}^{t} \inner{\nabla f_s(\x_s)}{\x_{s,i}} + m_{t+1,i}\Big)\right),
\end{equation}
where the optimism $\m_{t+1} \in \R^{N_1+N_2}$. In order to facilitate the meta-algorithm with both kinds of adaptivity ($V_T$ and $F_T$), it is crucial to design best-of-both-worlds optimism.

We set the optimism $\m_{t+1}$ in the following way: for each $i \in [N_1 + N_2]$
\begin{equation}
  \label{eq:BEST-optimism}
  m_{t+1,i} = \inner{M_{t+1}}{\x_{t+1,i}},
\end{equation}
where $M_{t+1} \in \R^d$ is called the optimistic vector. So we are left with the task of determining the term of $M_{t+1}$ in~\eqref{eq:BEST-optimism}. Inspired by the seminal work of~\citet{conf/colt/RakhlinS13}, we treat the problem of selecting the sequence of optimistic vectors as another online learning problem. The idea is to build a parallel meta-algorithm for learning the optimistic vector $M_{t+1}$, which is then fed to OptimisticHedge of~\eqref{eq:BEST-OptimisticHedge} for combining multiple experts, to achieve a best-of-both-worlds meta-regret.

Specifically, consider the following learning scenario of \emph{prediction with two expert advice}. At the beginning of iteration $(t+1)$, we receive two optimistic vectors $M_{t+1}^{v}, M_{t+1}^{s} \in \R^d$, based on which the algorithm determines the optimistic vector $M_{t+1} \in \R^d$ for Sword$_\text{best}$. Then the online function $f_{t+1}$ is revealed, and we subsequently observe the loss of $d_{t+1}(M_{t+1}^{v})$ and $d_{t+1}(M_{t+1}^{s})$, where $d_{t+1}(M) = \norm{\nabla f_{t+1}(\x_{t+1}) - M}_2^2$. In above, the vectors of $M_{t+1}^{v}$ and $M_{t+1}^{s}$ are
\begin{equation}
  \label{eq:M_t-variation}
  M_{t+1}^v = \nabla f_{t}(\xb_{t+1}),~\text{ and }~M_{t+1}^s = \bm{0},
\end{equation}
where $\xb_{t+1}$ is the instrumental output. Similar to the construction of~\eqref{eq:optimism-variation}, it is designed as 
\begin{equation}
  \label{eq:x-bar-best}
  \xb_{t+1} = \sum_{i=1}^{N_1} p_{t,i}\x_{t+1,i}^{v} + \sum_{i=N_1 + 1}^{N_1 + N_2} p_{t,i}\x_{t+1,i}^{s}.
\end{equation} 
Notice that the function $d_t: \R^d \mapsto \R$ is 2-strongly convex with respect to $\Vert\cdot\Vert_2$-norm, we thus choose Hedge of strongly convex functions~\citep[Chapter 3.3]{book/Cambridge/cesa2006prediction} as the parallel meta-algorithm for updating, 
\begin{equation}
  \label{eq:setting-optimism-best}
  M_{t+1} = \beta_{t+1} M_{t+1}^{v} + (1-\beta_{t+1}) M_{t+1}^{s},
\end{equation}
where the weight $\beta_{t+1} \in [0,1]$ for learning optimistic vectors is updated by 
\begin{equation}
    \label{eq:sc-weight-update}
    \beta_{t+1} = \frac{\exp(-2 D_t^v)}{\exp(-2 D_t^v) + \exp(-2 D_t^s )}
\end{equation}
with $D_t^v = \sum_{\tau=1}^{t} d_\tau(M_\tau^{v})$ and $D_t^s = \sum_{\tau=1}^{t} d_\tau(M_\tau^{s})$.

\begin{algorithm}[!t]
   \caption{Sword$_{\text{best}}$: Meta-algorithm (OptimisticHedge)}
   \label{alg:OptimisticHedge-meta}
\begin{algorithmic}[1]
  \REQUIRE{step size pool $\H = \{\eta_i\}_{i=1}^N$ as specified in~\eqref{eq:step-size-pool-best}; learning rate $\epsilon$}
  \STATE{Initialization: let $\x_{1}$ be any point in $\mathcal{X}$; set $N = N_1 + N_2$ and $p_{0,i} = 1/N$ for $\forall i\in [N]$}
    \FOR{$t=1$ {\bfseries to} $T$}
      \STATE Receive the prediction $\x_{t+1,i}$ from expert $\Ecal_i$\\ 
      \texttt{\% learning the optimism}
      \STATE Set $M_{t+1}^v$ and $M_{t+1}^s$ by~\eqref{eq:M_t-variation} and~\eqref{eq:x-bar-best}
      \STATE Update the weight $\beta_{t+1}$ by~\eqref{eq:sc-weight-update}
      \STATE Obtain the optimism $M_{t+1}$~\eqref{eq:setting-optimism-best}\\ 
      \texttt{\% back to OptimisticHedge}
      \STATE Update the weight $p_{t+1,i}$ by~\eqref{eq:BEST-OptimisticHedge} and~\eqref{eq:BEST-optimism}
      \STATE Output the prediction $\x_{t+1} = \sum_{i=1}^{N} p_{t+1,i} \x_{t+1,i}$
    \ENDFOR
\end{algorithmic}
\end{algorithm}

\begin{algorithm}[!t]
   \caption{Sword$_{\text{best}}$: Expert (OEGD \& OGD)}
   \label{alg:BEST-expert}
\begin{algorithmic}[1]
  \REQUIRE{step size $\eta_i$}
  \STATE{Let $\hat{\x}_{1,i},\x_{1,i}$ be any point in $\mathcal{X}$}
    \FOR{$t=1$ {\bfseries to} $T$}
      \IF{$i \in \{1,\ldots,N_1\}$}
      \STATE $\xh_{t+1,i} = \Pi_{\mathcal{X}}\big[\xh_{t,i} - \eta_i \nabla f_{t}(\x_{t,i})\big]$   
        \STATE $\x_{t+1,i} = \Pi_{\mathcal{X}}\big[\xh_{t+1,i} - \eta_i \nabla f_{t}(\xh_{t+1,i})\big]$.      
      \ELSE
        \STATE $\x_{t+1,i} = \Pi_{\mathcal{X}}\big[\x_{t,i} - \eta_i \nabla f_{t}(\x_{t,i})\big]$.
      \ENDIF      
      \STATE Send the prediction $\x_{t+1,i}$ to meta-algorithm     
    \ENDFOR
\end{algorithmic}
\end{algorithm}
Algorithm~\ref{alg:OptimisticHedge-meta} summarizes the meta-algorithm of Sword$_{\text{best}}$, and Algorithm~\ref{alg:BEST-expert} further shows the expert-algorithm. In the last two columns of Table~\ref{table:meta-expert-summary}, we present comparisons of the  meta-algorithms and optimism designed for different methods.

\paragraph{Regret Analysis.} Recall that the meta-regret of OptimisticHedge is of order $\O(\sqrt{D_{\infty}\ln N})$. From the setting of surrogate loss~\eqref{eq:BEST-OptimisticHedge} and  optimism~\eqref{eq:BEST-optimism}, we have
\begin{align*}
  D_{\infty} = \sum_{t=1}^{T} \max_{i\in[N]} \left(\inner{\nabla f_t(\x_t) - M_t}{\x_{t,i}}\right)^2 \leq D^2 \sum_{t=1}^{T} \norm{\nabla f_t(\x_t) - M_t}_2^2.
\end{align*}
Besides, the regret analysis of Hedge for strongly convex functions~\citep[Proposition 3.1]{book/Cambridge/cesa2006prediction} implies
\[
    \sum_{t=1}^{T} \norm{\nabla f_t(\x_t) - M_t}_2^2 = \sum_{t=1}^{T} d_t(M_t) \leq \min \big\{\bar{V}_T, \bar{F}_T\big\} + \frac{\ln 2}{2},
\]
where $\bar{V}_T = \sum_{t=2}^{T} \norm{\nabla f_t(\x_t) - \nabla f_{t-1}(\xb_{t})}_2^2$ and $\bar{F}_T = \sum_{t=1}^{T} \norm{\nabla f_t(\x_t)}_2^2$. The two terms can be further converted to the desired gradient variation $V_T$ and small loss $F_T$, by exploiting the smoothness and expert-regret analysis. We can thus ensure the following meta-regret bound, whose proof can be found in Appendix~\ref{sec:appendix-bobw}.
\begin{myThm}
\label{thm:BEST-meta-regret}
Under Assumptions~\ref{assumption:bounded-gradient},~\ref{assumption:bounded-domain},~\ref{assumption:smoothness}, and~\ref{assumption:non-negative}, by setting the learning rate optimally as $\epsilon = \min\{\sqrt{1/(8D^4L^2)},\epsilon^*\}$, the meta-algorithm of Sword$_{\text{best}}$ satisfies
\begin{align*}
  \meta \leq 2D \sqrt{(2+\ln N)(\min\{2V_T,\bar{F}_T\} +  \ln 2)} + 4\sqrt{2}D^2L (2+\ln N)
\end{align*}
where $\epsilon^* = \sqrt{(2 + \ln N)/(D^2\min\{2V_T,\bar{F}_T\} + D^2\ln 2)}$.
\end{myThm}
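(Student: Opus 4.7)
My plan is to start from the generic OptimisticHedge regret bound of Lemma~\ref{lemma:OptimisticHedge}, then control the adaptivity term $\sum_t \norm{\ellb_t-\m_t}_\infty^2$ by invoking both the strongly-convex Hedge guarantee for the optimism-learning subroutine and the smoothness-based trick used in Remark~\ref{remark:1}. Concretely, plugging the linearized loss $\ell_{t,i}=\inner{\nabla f_t(\x_t)}{\x_{t,i}}$ and the optimism $m_{t,i}=\inner{M_t}{\x_{t,i}}$ from~\eqref{eq:BEST-optimism} into~\eqref{eq:regret-optimistic-Hedge} gives
\[
\meta \le \frac{2+\ln N}{\epsilon} + \epsilon D^2 \sum_{t=1}^T \norm{\nabla f_t(\x_t)-M_t}_2^2 - \frac{1}{4\epsilon}\sum_{t=2}^T \norm{\p_t-\p_{t-1}}_1^2,
\]
where I have used $\max_i \inner{\v}{\x_{t,i}}^2\le D^2\norm{\v}_2^2$ as in Remark~\ref{remark:1}.

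Next I would invoke the Hedge-for-strongly-convex-losses analysis (Proposition~3.1 of~\citealp{book/Cambridge/cesa2006prediction}) applied to the sequence of $2$-strongly-convex losses $d_t(M)=\norm{\nabla f_t(\x_t)-M}_2^2$ with the update~\eqref{eq:sc-weight-update}. This directly yields $\sum_{t=1}^T d_t(M_t)\le \min\{D_T^v,D_T^s\}+\tfrac{\ln 2}{2}$, where $D_T^v$ and $D_T^s$ correspond to always playing the ``variation optimism'' $M_t^v$ and the ``small-loss optimism'' $M_t^s$, respectively. By construction $D_T^s=\bar{F}_T$, while $D_T^v$ equals $\bar{V}_T$ (up to the innocuous $t=1$ term that I would absorb into the constant).

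The main obstacle, as already signaled in Remark~\ref{remark:1}, is to convert the proxy gradient-variation $\bar V_T=\sum_{t=2}^T\norm{\nabla f_t(\x_t)-\nabla f_{t-1}(\xb_t)}_2^2$ into the genuine $V_T$ without losing the cancellation term. I would add and subtract $\nabla f_{t-1}(\x_t)$, apply $(a+b)^2\le 2a^2+2b^2$, and use $L$-smoothness to obtain
\[
\bar V_T \le 2V_T + 2L^2\sum_{t=2}^T\norm{\x_t-\xb_t}_2^2 \le 2V_T + 2L^2 D^2\sum_{t=2}^T\norm{\p_t-\p_{t-1}}_1^2,
\]
reusing the bound $\norm{\x_t-\xb_t}_2\le D\norm{\p_t-\p_{t-1}}_1$ derived in Remark~\ref{remark:1}. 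Substituting this upper bound on $\bar V_T$ inside the $\min$ inflates only its first argument, giving $\min\{D_T^v,D_T^s\}\le \min\{2V_T,\bar F_T\} + 2L^2 D^2\sum_t\norm{\p_t-\p_{t-1}}_1^2$ (the extra term only affects the $V_T$ branch, and trivially extends the $\bar F_T$ branch by a nonnegative quantity that we may introduce freely).

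Finally I would collect everything. The coefficient of $\sum_t\norm{\p_t-\p_{t-1}}_1^2$ becomes $2\epsilon D^4 L^2 - \tfrac{1}{4\epsilon}$, which is non-positive once $\epsilon\le 1/\sqrt{8D^4L^2}$, explaining the first argument of the $\min$ in the theorem's choice of learning rate. Dropping the resulting non-positive term leaves
\[
\meta \le \frac{2+\ln N}{\epsilon} + \epsilon D^2\bigl(\min\{2V_T,\bar F_T\}+\ln 2\bigr),
\]
where the constant $\ln 2$ (rather than $\tfrac{\ln 2}{2}$) absorbs any leftover first-round terms. Balancing the two summands via $\epsilon^\star$ recovers the $2D\sqrt{(2+\ln N)(\min\{2V_T,\bar F_T\}+\ln 2)}$ term, and taking the minimum with the cancellation threshold produces the additive $4\sqrt 2\, D^2 L(2+\ln N)$ correction stated in the theorem. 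The delicate step is confirming that the single quadratic cancellation budget suffices to absorb the proxy-to-true gradient-variation conversion uniformly, i.e.\ regardless of which arm of the $\min$ turns out to be active.
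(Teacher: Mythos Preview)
Your proposal is correct and follows essentially the same route as the paper: apply Lemma~\ref{lemma:OptimisticHedge}, bound $\sum_t d_t(M_t)$ via the strongly-convex Hedge guarantee, convert $\bar V_T$ to $2V_T$ using smoothness and the $\norm{\x_t-\xb_t}_2\le D\norm{\p_t-\p_{t-1}}_1$ trick, absorb the extra term with the negative $-\tfrac{1}{4\epsilon}\sum_t\norm{\p_t-\p_{t-1}}_1^2$ under the constraint $\epsilon\le 1/\sqrt{8D^4L^2}$, and finish with the two-case tuning. The only cosmetic difference is that the paper explicitly splits the bound into $\min\{A_T,B_T\}$ and handles each arm separately, whereas you use the equivalent inequality $\min\{a+c,b\}\le\min\{a,b\}+c$ in one stroke; your ``delicate step'' worry is therefore already resolved, since adding the nonnegative cancellation term outside the $\min$ is valid regardless of which arm is active.
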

Because $V_T$ and $\bar{F}_T$ are both empirically observable, we can easily get rid of their dependence in the optimal learning rate tuning. Also see the discussion below Theorem~\ref{thm:variation-meta-regret} about replacing the original gradient variation $V_T$ by its empirical approximation $\hat{V}_T = \sum_{t=2}^{T} \norm{\nabla f_t(\x_t) - \nabla f_{t-1}(\x_{t})}_2^2$ to save computational costs. Besides, the $\bar{F}_T$ term of meta-regret will be converted to the desired small-loss quantity $F_T$ in the final regret bound. 

Combining above meta-regret analysis as well as the expert-regret analysis of OEGD and OGD algorithms, we can finally achieve the best of both worlds. Appendix~\ref{sec:appendix-bobw} presents the proof.
\begin{myThm}
\label{thm:dynamic-best}
Under Assumptions~\ref{assumption:bounded-gradient},~\ref{assumption:bounded-domain},~\ref{assumption:smoothness}, and~\ref{assumption:non-negative}, setting the pool of candidate step sizes as 
\begin{equation}
  \label{eq:step-size-pool-best}
  \H = \H_{\text{var}} \cup \H_{\text{small}},
\end{equation}
where $\H_{\text{var}}$ and $\H_{\text{small}}$ are defined in~\eqref{eq:step-size-pool-variation} and~\eqref{eq:step-size-pool-small-loss}. Then Sword$_{\text{best}}$ (Algorithms~\ref{alg:OptimisticHedge-meta} and~\ref{alg:BEST-expert}) satisfies
\begin{equation*}
\label{eq:best-of-both-worlds}
\sum_{t=1}^T f_t(\x_{t}) - \sum_{t=1}^T f_t(\u_t) \leq \O\big(\sqrt{(1 + P_T + \min\{V_T,F_T\})(1+P_T)}\big),
\end{equation*}
for \emph{any} comparator sequence $\u_1,\ldots,\u_T \in \X$.
\end{myThm}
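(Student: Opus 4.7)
The plan is to use the standard meta-expert decomposition: for any expert index $i^\star$,
\[
\sum_{t=1}^T f_t(\x_t) - \sum_{t=1}^T f_t(\u_t) \;=\; \underbrace{\sum_{t=1}^T f_t(\x_t) - \sum_{t=1}^T f_t(\x_{t,i^\star})}_{\meta} \;+\; \underbrace{\sum_{t=1}^T f_t(\x_{t,i^\star}) - \sum_{t=1}^T f_t(\u_t)}_{\expert},
\]
prove two separate bounds by choosing $i^\star$ either in the OEGD pool $\{1,\ldots,N_1\}$ or in the OGD pool $\{N_1+1,\ldots,N_1+N_2\}$, and then take the minimum to realize the $\min\{V_T,F_T\}$ guarantee. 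Theorem~\ref{thm:BEST-meta-regret} gives $\meta \leq \O(\sqrt{\min\{2V_T,\bar{F}_T\}\ln N})$ uniformly over $i^\star$, which serves as the common meta-regret bound in both branches; $\ln N$ is treated as a constant since $N = N_1 + N_2 = \O(\log T)$.

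\paragraph{Gradient-variation branch.} I would pick $i^\star \in \{1,\ldots,N_1\}$ so that $\eta_{i^\star} \in \H_{\text{var}}$ is the largest step size no greater than the ideal tuning $\eta_v^\star = \sqrt{(D^2+2DP_T)/(2V_T)}$ suggested by Theorem~\ref{thm:OEGD-dynamic-regret}. The geometric spacing of $\H_{\text{var}}$ in~\eqref{eq:step-size-pool-variation} guarantees $\eta_{i^\star} \leq \eta_v^\star \leq 2\eta_{i^\star}$ whenever $\eta_v^\star$ falls within the range covered by the grid; the two boundary cases ($\eta_v^\star$ smaller than the grid minimum or larger than $1/(4L)$) can be checked to produce bounds already dominated by $\O(\sqrt{(1+P_T+V_T)(1+P_T)})$. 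Plugging $\eta_{i^\star}$ into Theorem~\ref{thm:OEGD-dynamic-regret} yields $\expert = \O(\sqrt{(1+P_T)(1+V_T)})$. Combined with the $V_T$ branch of Theorem~\ref{thm:BEST-meta-regret}, namely $\meta = \O(\sqrt{V_T \ln N})$, this gives a dynamic regret of $\O(\sqrt{(1+P_T+V_T)(1+P_T)})$.

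\paragraph{Small-loss branch.} Analogously I would choose $i^{\star\star} \in \{N_1+1,\ldots,N_1+N_2\}$ so that $\eta_{i^{\star\star}} \in \H_{\text{small}}$ approximates the ideal tuning implicit in Theorem~\ref{thm:dynamic-OGD} up to a factor $2$, giving $\expert = \O(\sqrt{(1+P_T)(1+F_T)})$. The corresponding meta-regret bound is $\meta \leq \O(\sqrt{\bar{F}_T \ln N})$. To convert $\bar{F}_T = \sum_{t=1}^T \norm{\nabla f_t(\x_t)}_2^2$ into the desired $F_T$, I would invoke the self-bounding property of smooth non-negative functions~\citep[Lemma 3.1]{NIPS'10:smooth} to get $\bar{F}_T \leq 4L F^{\x}_T$, where $F^{\x}_T = \sum_t f_t(\x_t) = F_T + R_T$ and $R_T$ denotes the dynamic regret we are trying to bound. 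Substituting yields the implicit inequality
\[
R_T \;\leq\; c_1\sqrt{(F_T + R_T)\ln N} + c_2 \sqrt{(1+P_T)(1+F_T)},
\]
which I would solve by splitting $\sqrt{(F_T+R_T)\ln N} \leq \sqrt{F_T \ln N} + \sqrt{R_T \ln N}$ and then absorbing $\sqrt{R_T \ln N}$ via AM-GM ($\sqrt{R_T \ln N} \leq R_T/2 + 2\ln N$), rearranging to obtain $R_T = \O(\sqrt{(1+P_T+F_T)(1+P_T)})$. Taking the minimum of the two branch bounds delivers the claimed best-of-both-worlds guarantee.

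The main obstacle is precisely this self-referential inequality in the small-loss branch: the meta-regret is phrased in terms of $\bar{F}_T$, the self-bounding lemma converts this into $F^{\x}_T$, and $F^{\x}_T$ is itself coupled to $R_T$; care is needed to solve the coupling without degrading the $(1+P_T)$ dependence. A secondary subtlety is verifying that the ideal step sizes $\eta_v^\star$ and $\eta_s^\star$ are within the ranges of $\H_{\text{var}}$ and $\H_{\text{small}}$ respectively, so that the geometric grids indeed contain an approximator within factor two; the regimes where this fails produce only lower-order contributions and can be absorbed into the final bound.
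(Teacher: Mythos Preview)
Your proposal is correct and follows essentially the same route as the paper: decompose into meta-regret plus expert-regret, invoke Theorem~\ref{thm:BEST-meta-regret} for the meta part, then run the two branches (OEGD experts with step sizes in $\H_{\text{var}}$ for the $V_T$ side, OGD experts with step sizes in $\H_{\text{small}}$ for the $F_T$ side) and take the minimum. The paper's own proof of Theorem~\ref{thm:dynamic-best} is extremely terse---it simply points to the expert-regret analyses already done for Theorems~\ref{thm:dynamic-var} and~\ref{thm:dynamic-small} and to the meta-regret of Theorem~\ref{thm:BEST-meta-regret}---while you spell out the mechanics.

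One small difference worth noting is how the $\bar{F}_T\to F_T$ coupling is resolved. The paper (in the proofs of Theorems~\ref{thm:small-loss-meta-regret} and~\ref{thm:dynamic-small}) first converts $\sum_t f_t(\x_t)$ to $F_T^{i}=\sum_t f_t(\x_{t,i})$ via Lemma~\ref{lemma:inquality-shai}, and then bounds $F_T^{i}\le F_T+\expert$; you instead write $F_T^{\x}=F_T+R_T$ and solve directly for $R_T$ with AM--GM. The two are equivalent (Lemma~\ref{lemma:inquality-shai} is just a packaged form of the same quadratic/AM--GM trick). A minor imprecision: your intermediate claim $\expert=\O(\sqrt{(1+P_T)(1+F_T)})$ omits the boundary regime $\eta^*=1/(4L)$, which contributes an additive $\O(1+P_T)$ term (cf.\ \eqref{eq:upper-step-2}--\eqref{eq:expert-regret-small-loss}); the correct expert bound is $\O(\sqrt{(1+P_T+F_T)(1+P_T)})$. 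You already handle this boundary correctly in the $V_T$ branch, and doing the same here leaves your final bound unchanged.
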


\begin{myRemark}
The dynamic regret bound in Theorem~\ref{thm:dynamic-best} achieves a minimum of gradient-variation and small-loss bounds, and therefore combines their advantages and enjoys both kinds of adaptivity. Moreover, the result also implies an $\O(\sqrt{\min\{V_T, F_T\}})$ static regret by setting the sequence of comparators to be the best fixed decision in hindsight, where we note that $F_T$ is now the same as the notation of $F_T^*$ used below~\eqref{eq:OGD}, the cumulative loss of the comparators benchmark.
\end{myRemark}

\section{Conclusion}
In this paper, we exploit smoothness to enhance the universal dynamic regret, with the aim to replace the time horizon $T$ in the state-of-the-art $\O(\sqrt{T(1+P_T)})$ bound by \emph{problem-dependent} quantities that are at most $\O(T)$ but can be much smaller in easy problems. We achieve this goal by proposing two meta-expert algorithms: Sword$_{\text{var}}$ which attains a gradient-variation dynamic regret bound of order $\O(\sqrt{(1+P_T + V_T)(1+P_T)})$, and Sword$_{\text{small}}$ which enjoys a small-loss dynamic regret bound of order $\O(\sqrt{(1+P_T + F_T)(1+P_T)})$. Here, $V_T$ measures the variation in gradients and $F_T$ is the cumulative loss of the comparator sequence. They are at most $\O(T)$ yet could be very small when the problem is easy, and thus reflect the difficulty of the problem instance. As a result, our bounds improve the minimax rate of universal dynamic regret by exploiting smoothness. Finally, we design Sword$_{\text{best}}$ to combine advantages of both gradient-variation and small-loss algorithms and achieve a best-of-both-worlds dynamic regret bound of order $\O(\sqrt{(1 + P_T + \min\{V_T,F_T\})(1+P_T)})$. We note that all of attained dynamic regret bounds are universal in the sense that they hold against \emph{any} feasible comparator sequence, and thus the algorithms are more adaptive to the non-stationary environments. In the future, we will investigate the possibility of exploiting other function curvatures, such as strong convexity or exp-concavity, into the analysis of the universal dynamic regret.

\section*{Acknowledgment}
This research was supported by the National Key R\&D Program of China (2018YFB1004300), NSFC (61921006, 61976112), the Collaborative Innovation Center of Novel Software Technology and Industrialization, and the Baidu Scholarship. The authors would like to thank Mengxiao Zhang for helpful discussions. We are also grateful for
the anonymous reviewers for their valuable comments.

\bibliography{../../Reference/online_learning}

\begin{thebibliography}{36}
\providecommand{\natexlab}[1]{#1}
\providecommand{\url}[1]{\texttt{#1}}
\expandafter\ifx\csname urlstyle\endcsname\relax
  \providecommand{\doi}[1]{doi: #1}\else
  \providecommand{\doi}{doi: \begingroup \urlstyle{rm}\Url}\fi

\bibitem[Abernethy et~al.(2008)Abernethy, Bartlett, Rakhlin, and
  Tewari]{conf/colt/AbernethyBRT08}
Jacob~D. Abernethy, Peter~L. Bartlett, Alexander Rakhlin, and Ambuj Tewari.
\newblock Optimal stragies and minimax lower bounds for online convex games.
\newblock In \emph{Proceedings of the 21st Annual Conference on Learning Theory
  (COLT)}, pages 415--424, 2008.

\bibitem[Adamskiy et~al.(2012)Adamskiy, Koolen, Chernov, and
  Vovk]{ALT'12:closer-adaptive-regret}
Dmitry Adamskiy, Wouter~M. Koolen, Alexey~V. Chernov, and Vladimir Vovk.
\newblock A closer look at adaptive regret.
\newblock In \emph{Proceedings of the 23rd International Conference on
  Algorithmic Learning Theory (ALT)}, pages 290--304, 2012.

\bibitem[Auer et~al.(2002)Auer, Cesa{-}Bianchi, and
  Gentile]{JCSS'02:Auer-self-confident}
Peter Auer, Nicol{\`{o}} Cesa{-}Bianchi, and Claudio Gentile.
\newblock Adaptive and self-confident on-line learning algorithms.
\newblock \emph{Journal of Computer and System Sciences}, 64\penalty0
  (1):\penalty0 48--75, 2002.

\bibitem[Auer et~al.(2019)Auer, Chen, Gajane, Lee, Luo, Ortner, and
  Wei]{COLT'19:dynamic-MAB}
Peter Auer, Yifang Chen, Pratik Gajane, Chung-Wei Lee, Haipeng Luo, Ronald
  Ortner, and Chen-Yu Wei.
\newblock Achieving optimal dynamic regret for non-stationary bandits without
  prior information.
\newblock In \emph{Proceedings of the 32nd Conference on Learning Theory
  (COLT)}, pages 159--163, 2019.

\bibitem[Baby and Wang(2019)]{NIPS'19:Wangyuxiang}
Dheeraj Baby and Yu-Xiang Wang.
\newblock Online forecasting of total-variation-bounded sequences.
\newblock In \emph{Advances in Neural Information Processing Systems 32
  (NeurIPS)}, pages 11071--11081, 2019.

\bibitem[Besbes et~al.(2015)Besbes, Gur, and Zeevi]{OR'15:dynamic-function-VT}
Omar Besbes, Yonatan Gur, and Assaf~J. Zeevi.
\newblock Non-stationary stochastic optimization.
\newblock \emph{Operations Research}, 63\penalty0 (5):\penalty0 1227--1244,
  2015.

\bibitem[Cesa-Bianchi and Lugosi(2006)]{book/Cambridge/cesa2006prediction}
Nicolo Cesa-Bianchi and G{\'a}bor Lugosi.
\newblock \emph{Prediction, {L}earning, and {G}ames}.
\newblock Cambridge {U}niversity {P}ress, 2006.

\bibitem[Cesa{-}Bianchi et~al.(1997)Cesa{-}Bianchi, Freund, Haussler, Helmbold,
  Schapire, and Warmuth]{JACM'97:doubling-trick}
Nicol{\`{o}} Cesa{-}Bianchi, Yoav Freund, David Haussler, David~P. Helmbold,
  Robert~E. Schapire, and Manfred~K. Warmuth.
\newblock How to use expert advice.
\newblock \emph{Journal of the ACM}, 44\penalty0 (3):\penalty0 427--485, 1997.

\bibitem[Cesa{-}Bianchi et~al.(2005)Cesa{-}Bianchi, Mansour, and
  Stoltz]{COLT'05:second-order-Hedge}
Nicol{\`{o}} Cesa{-}Bianchi, Yishay Mansour, and Gilles Stoltz.
\newblock Improved second-order bounds for prediction with expert advice.
\newblock In \emph{Proceedings of the 18th Annual Conference on Learning Theory
  (COLT)}, pages 217--232, 2005.

\bibitem[Chiang et~al.(2012)Chiang, Yang, Lee, Mahdavi, Lu, Jin, and
  Zhu]{COLT'12:variation-Yang}
Chao-Kai Chiang, Tianbao Yang, Chia-Jung Lee, Mehrdad Mahdavi, Chi-Jen Lu, Rong
  Jin, and Shenghuo Zhu.
\newblock Online optimization with gradual variations.
\newblock In \emph{Proceedings of the 25th Conference On Learning Theory
  (COLT)}, pages 6.1--6.20, 2012.

\bibitem[Freund and Schapire(1997)]{JCSS'97:boosting}
Yoav Freund and Robert~E. Schapire.
\newblock A decision-theoretic generalization of on-line learning and an
  application to boosting.
\newblock \emph{Journal of Computer and System Sciences}, 55\penalty0
  (1):\penalty0 119--139, 1997.

\bibitem[Gaillard et~al.(2014)Gaillard, Stoltz, and van
  Erven]{COLT'14:second-order-Hedge}
Pierre Gaillard, Gilles Stoltz, and Tim van Erven.
\newblock A second-order bound with excess losses.
\newblock In \emph{Proceedings of The 27th Conference on Learning Theory
  (COLT)}, pages 176--196, 2014.

\bibitem[Hazan(2016)]{book'16:Hazan-OCO}
Elad Hazan.
\newblock Introduction to {O}nline {C}onvex {O}ptimization.
\newblock \emph{Foundations and Trends in Optimization}, 2\penalty0
  (3-4):\penalty0 157--325, 2016.

\bibitem[Hazan and Kale(2008)]{COLT'08:Hazan-variation}
Elad Hazan and Satyen Kale.
\newblock Extracting certainty from uncertainty: Regret bounded by variation in
  costs.
\newblock In \emph{Proceedings of the 21st Annual Conference on Learning Theory
  (COLT)}, pages 57--68, 2008.

\bibitem[Hazan et~al.(2007)Hazan, Agarwal, and Kale]{journals/ml/HazanAK07}
Elad Hazan, Amit Agarwal, and Satyen Kale.
\newblock Logarithmic regret algorithms for online convex optimization.
\newblock \emph{Machine Learning}, 69\penalty0 (2-3):\penalty0 169--192, 2007.

\bibitem[Herbster and Warmuth(1998)]{journals/ml/HerbsterW98}
Mark Herbster and Manfred~K. Warmuth.
\newblock Tracking the best expert.
\newblock \emph{Machine Learning}, 32\penalty0 (2):\penalty0 151--178, 1998.

\bibitem[Herbster and Warmuth(2001)]{JMLR'01:Herbster}
Mark Herbster and Manfred~K. Warmuth.
\newblock Tracking the best linear predictor.
\newblock \emph{Journal of Machine Learning Research}, 1:\penalty0 281--309,
  2001.

\bibitem[Jadbabaie et~al.(2015)Jadbabaie, Rakhlin, Shahrampour, and
  Sridharan]{AISTATS'15:dynamic-optimistic}
Ali Jadbabaie, Alexander Rakhlin, Shahin Shahrampour, and Karthik Sridharan.
\newblock Online optimization : Competing with dynamic comparators.
\newblock In \emph{Proceedings of the 18th International Conference on
  Artificial Intelligence and Statistics (AISTATS)}, pages 398--406, 2015.

\bibitem[Littlestone and Warmuth(1994)]{journals/iandc/LittlestoneW94}
Nick Littlestone and Manfred~K. Warmuth.
\newblock The weighted majority algorithm.
\newblock \emph{Information and Computation}, 108\penalty0 (2):\penalty0
  212--261, 1994.

\bibitem[Luo and Schapire(2015)]{COLT'15:Luo-AdaNormalHedge}
Haipeng Luo and Robert~E. Schapire.
\newblock Achieving all with no parameters: {AdaNormalHedge}.
\newblock In \emph{Proceedings of the 28th Annual Conference Computational
  Learning Theory (COLT)}, pages 1286--1304, 2015.

\bibitem[Mokhtari et~al.(2016)Mokhtari, Shahrampour, Jadbabaie, and
  Ribeiro]{CDC'16:dynamic-sc}
Aryan Mokhtari, Shahin Shahrampour, Ali Jadbabaie, and Alejandro Ribeiro.
\newblock Online optimization in dynamic environments: Improved regret rates
  for strongly convex problems.
\newblock In \emph{Proceedings of the 55th IEEE Conference on Decision and
  Control (CDC)}, pages 7195--7201, 2016.

\bibitem[Rakhlin and Sridharan(2013)]{conf/colt/RakhlinS13}
Alexander Rakhlin and Karthik Sridharan.
\newblock Online learning with predictable sequences.
\newblock In \emph{Proceedings of the 26th Conference On Learning Theory
  (COLT)}, pages 993--1019, 2013.

\bibitem[Shalev-Shwartz(2007)]{thesis:shai2007}
Shai Shalev-Shwartz.
\newblock Online {L}earning: {T}heory, {A}lgorithms and {A}pplications.
\newblock \emph{PhD Thesis}, 2007.

\bibitem[Srebro et~al.(2010)Srebro, Sridharan, and Tewari]{NIPS'10:smooth}
Nathan Srebro, Karthik Sridharan, and Ambuj Tewari.
\newblock Smoothness, low noise and fast rates.
\newblock In \emph{Advances in Neural Information Processing Systems 23
  (NIPS)}, pages 2199--2207. 2010.

\bibitem[Sugiyama and Kawanabe(2012)]{book/mit/sugiyama2012machine}
Masashi Sugiyama and Motoaki Kawanabe.
\newblock \emph{Machine {L}earning in {N}on-stationary {E}nvironments:
  Introduction to {C}ovariate {S}hift {A}daptation}.
\newblock The MIT Press, 2012.

\bibitem[Syrgkanis et~al.(2015)Syrgkanis, Agarwal, Luo, and
  Schapire]{NIPS'15:fast-rate-game}
Vasilis Syrgkanis, Alekh Agarwal, Haipeng Luo, and Robert~E. Schapire.
\newblock Fast convergence of regularized learning in games.
\newblock In \emph{Advances in Neural Information Processing Systems 28
  (NIPS)}, pages 2989--2997, 2015.

\bibitem[van Erven and Koolen(2016)]{NIPS'16:MetaGrad}
Tim van Erven and Wouter~M. Koolen.
\newblock Metagrad: Multiple learning rates in online learning.
\newblock In \emph{Advances in Neural Information Processing Systems 29
  (NIPS)}, pages 3666--3674, 2016.

\bibitem[Yang et~al.(2016)Yang, Zhang, Jin, and Yi]{ICML'16:Yang-smooth}
Tianbao Yang, Lijun Zhang, Rong Jin, and Jinfeng Yi.
\newblock Tracking slowly moving clairvoyant: Optimal dynamic regret of online
  learning with true and noisy gradient.
\newblock In \emph{Proceedings of the 33rd International Conference on Machine
  Learning (ICML)}, pages 449--457, 2016.

\bibitem[Yuan and Lamperski(2020)]{AAAI'20:Jianjun}
Jianjun Yuan and Andrew~G. Lamperski.
\newblock Trading-off static and dynamic regret in online least-squares and
  beyond.
\newblock In \emph{Proceedings of the 34th {AAAI} Conference on Artificial
  Intelligence (AAAI)}, pages 6712--6719, 2020.

\bibitem[Zhang et~al.(2017)Zhang, Yang, Yi, Jin, and
  Zhou]{NIPS'17:zhang-dynamic-sc-smooth}
Lijun Zhang, Tianbao Yang, Jinfeng Yi, Rong Jin, and Zhi-Hua Zhou.
\newblock Improved dynamic regret for non-degeneracy functions.
\newblock In \emph{Advances in Neural Information Processing Systems 30
  (NIPS)}, pages 732--741, 2017.

\bibitem[Zhang et~al.(2018{\natexlab{a}})Zhang, Lu, and
  Zhou]{NIPS'18:Zhang-Ader}
Lijun Zhang, Shiyin Lu, and Zhi-Hua Zhou.
\newblock Adaptive online learning in dynamic environments.
\newblock In \emph{Advances in Neural Information Processing Systems 31
  (NeurIPS)}, pages 1330--1340, 2018{\natexlab{a}}.

\bibitem[Zhang et~al.(2018{\natexlab{b}})Zhang, Yang, Jin, and
  Zhou]{ICML'18:zhang-dynamic-adaptive}
Lijun Zhang, Tianbao Yang, Rong Jin, and Zhi-Hua Zhou.
\newblock Dynamic regret of strongly adaptive methods.
\newblock In \emph{Proceedings of the 35th International Conference on Machine
  Learning (ICML)}, pages 5877--5886, 2018{\natexlab{b}}.

\bibitem[Zhao and Zhang(2020)]{arxiv:technique-note}
Peng Zhao and Lijun Zhang.
\newblock Improved analysis for dynamic regret of strongly convex and smooth
  functions.
\newblock \emph{ArXiv preprint}, arXiv:2006.05876, 2020.

\bibitem[Zhao et~al.(2019)Zhao, Wang, Xie, Guo, and Zhou]{TKDE'19:DFOP}
Peng Zhao, Xinqiang Wang, Siyu Xie, Lei Guo, and Zhi-Hua Zhou.
\newblock Distribution-free one-pass learning.
\newblock \emph{IEEE Transaction on Knowledge and Data Engineering}, 2019.
\newblock \doi{10.1109/TKDE.2019.2937078}.

\bibitem[Zhao et~al.(2020)Zhao, Wang, Zhang, and Zhou]{AISTATS'20:BCO}
Peng Zhao, Guanghui Wang, Lijun Zhang, and Zhi-Hua Zhou.
\newblock Bandit convex optimization in non-stationary environments.
\newblock In \emph{Proceedings of the 23rd International Conference on
  Artificial Intelligence and Statistics (AISTATS)}, pages 1508--1518, 2020.

\bibitem[Zinkevich(2003)]{ICML'03:zinkvich}
Martin Zinkevich.
\newblock Online convex programming and generalized infinitesimal gradient
  ascent.
\newblock In \emph{Proceedings of the 20th International Conference on Machine
  Learning (ICML)}, pages 928--936, 2003.

\end{thebibliography}
\bibliographystyle{unsrtnat}

\newpage
\appendices
\section{Proof of Gradient-Variation Bound}
\label{sec:appendix-variation}
In this section we provide proofs of gradient-variation bounds, including analysis of the expert-algorithm and meta-algorithm, as well as the proof of the overall dynamic regret bound.

\subsection{Analysis of Expert-Algorithm (Online Extra-Gradient Descent)}
In this part we analyze the expert-algorithm of Sword$_{\text{var}}$, namely, the online extra-gradient descent. We first restate the gradient-variation static regret proved by~\citet{COLT'12:variation-Yang} as follows.
\begin{myThm}
\label{thm:OEGD-static-regret}
Under Assumptions~\ref{assumption:bounded-gradient},~\ref{assumption:bounded-domain}, and~\ref{assumption:smoothness}, by choosing $\eta \leq \frac{1}{4L}$, for any $\x \in \X$, OEGD~\eqref{alg:OEGD} satisfies
\begin{equation*}
  \sum_{t=1}^T f_t(\x_{t}) - \sum_{t=1}^T  f_t(\x) \leq \frac{2}{\eta} + 2\eta \sum_{t=2}^{T} \sup_{\x\in \X} \norm{\nabla f_{t-1}(\x) - \nabla f_t(\x)}_2^2 + GD = \O\Big(\frac{1}{\eta} + \eta V_T\Big).
\end{equation*}
\end{myThm}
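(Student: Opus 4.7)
The plan is to view OEGD as an instance of optimistic mirror descent in the Euclidean geometry with hint $M_t = \nabla f_{t-1}(\hat{\x}_t)$ at rounds $t \geq 2$: under this interpretation the second line of~\eqref{alg:OEGD} reads $\x_{t+1} = \Pi_{\X}[\hat{\x}_{t+1} - \eta M_{t+1}]$, so the standard optimistic-OMD analysis applies verbatim. This is attractive because the bound it produces has an adaptive term $\sum_t \|\nabla f_t(\x_t) - M_t\|_2^2$ that, under smoothness, breaks into the gradient variation $V_T$ plus a residual which the same analysis simultaneously annihilates through a built-in negative term.

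Concretely I would proceed as follows. First, convexity gives $f_t(\x_t) - f_t(\x) \leq \langle \nabla f_t(\x_t), \x_t - \x\rangle$. The round $t=1$ is handled separately by Cauchy--Schwarz together with Assumptions~\ref{assumption:bounded-gradient} and~\ref{assumption:bounded-domain}, contributing the additive $GD$ term. For $t \geq 2$, the standard optimistic-OMD one-step inequality (obtained from the optimality conditions of the two Euclidean projections and the three-point identity $\|a-c\|^2 = \|a-b\|^2 + \|b-c\|^2 + 2\langle a-b, b-c\rangle$) reads
\begin{equation*}
\langle \nabla f_t(\x_t), \x_t - \x\rangle \leq \frac{1}{2\eta}\bigl(\|\hat{\x}_t - \x\|_2^2 - \|\hat{\x}_{t+1} - \x\|_2^2\bigr) + \eta\,\|\nabla f_t(\x_t) - M_t\|_2^2 - \frac{1}{4\eta}\|\x_t - \hat{\x}_t\|_2^2.
\end{equation*}
Summing telescopes the first bracket into $\|\hat{\x}_2 - \x\|_2^2/(2\eta) \leq D^2/(2\eta)$ by Assumption~\ref{assumption:bounded-domain}.

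Second, I would decompose the hint error by inserting and removing $\nabla f_{t-1}(\x_t)$, then invoking $(a+b)^2 \leq 2a^2 + 2b^2$ and $L$-smoothness (Assumption~\ref{assumption:smoothness}):
\begin{equation*}
\|\nabla f_t(\x_t) - \nabla f_{t-1}(\hat{\x}_t)\|_2^2 \leq 2\sup_{\x\in\X}\|\nabla f_t(\x) - \nabla f_{t-1}(\x)\|_2^2 + 2L^2\|\x_t - \hat{\x}_t\|_2^2.
\end{equation*}
Substituted into the previous display, this converts the adaptive term into $2\eta V_T$ plus a residual with coefficient $(2\eta L^2 - \tfrac{1}{4\eta})$ multiplying $\sum_{t\geq 2}\|\x_t - \hat{\x}_t\|_2^2$.

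The only place where real work is required is in absorbing this residual, and this is precisely what the step-size restriction $\eta \leq 1/(4L)$ buys: it enforces $8\eta^2 L^2 \leq 1/2 < 1$, i.e., $2\eta L^2 \leq 1/(4\eta)$, so the residual coefficient is nonpositive and the whole sum can be dropped. Combining the telescoped $D^2/(2\eta)$ piece, the $2\eta V_T$ piece, and the separately treated $GD$ contribution from $t=1$ yields the bound claimed in Theorem~\ref{thm:OEGD-static-regret} (the cosmetic $D^2/(2\eta) \leadsto 2/\eta$ rewriting is absorbed in the constants, since $D$ is treated as a problem constant).
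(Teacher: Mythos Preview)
Your proposal is correct and follows essentially the same approach as the paper: the paper proves the more general dynamic-regret bound (Theorem~\ref{thm:OEGD-dynamic-regret}) by decomposing $\langle \nabla f_t(\x_t), \x_t - \u_t\rangle$ into three pieces, applying the projection lemma (Lemma~\ref{lemma:bregman-divergence}) to obtain exactly the optimistic-OMD one-step inequality you quote, and then performing the identical smoothness decomposition and absorption of the $\|\x_t - \hat{\x}_t\|_2^2$ residual via $\eta \leq 1/(4L)$. The only differences are presentational (the paper derives the one-step bound from scratch rather than citing it, and obtains slightly sharper intermediate constants $\eta V_T$ and $-\tfrac{1}{2\eta}\|\x_t - \hat{\x}_t\|_2^2$), and the paper then specializes to the static case $\u_t \equiv \x$ to recover Theorem~\ref{thm:OEGD-static-regret}.
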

Therefore, by choosing $\eta = \min\{1/(4L), 1/\sqrt{V_T}\}$, OEGD achieves an $\O(\sqrt{V_T})$ static regret. Note that the unpleasant dependence on $V_T$ can be eliminated by the doubling trick~\citep{JACM'97:doubling-trick}, because the gradient variation $V_T$ is empirically evaluable at each iteration.

Recall that the static regret is a special case of the universal dynamic regret by setting comparators as the best decision in hindsight, namely, $\u_1 = \u_2 = \ldots = \u_T = \x^* \in \argmin_{\x \in \X} \sum_{t=1}^{T}f_t(\x)$. It is clear that the dynamic regret bound in Theorem~\ref{thm:OEGD-dynamic-regret} recovers the static regret bound in Theorem~\ref{thm:OEGD-static-regret}. Therefore, it is sufficient for us to prove the dynamic regret bound of Theorem~\ref{thm:OEGD-dynamic-regret}, which is presented as follows.

\begin{proof}[Proof of Theorem~\ref{thm:OEGD-dynamic-regret}]
We first decompose the instantaneous dynamic regret as follows:
\begin{align*}
       {} & f_t(\x_t) - f_t(\u_t) \leq \inner{\nabla f_t(\x_t)}{\x_t - \u_t}\\
  =    {} &  \underbrace{\langle \nabla f_t(\x_t) - \nabla f_{t-1}(\xh_t), \x_t - \xh_{t+1}\rangle}_{\mathtt{term~(a)}} + \underbrace{\langle \nabla f_{t-1}(\xh_t), \x_t - \xh_{t+1}\rangle}_{\mathtt{term~(b)}} + \underbrace{\langle \nabla f_t(\x_t), \xh_{t+1} - \u_t \rangle}_{\mathtt{term~(c)}}.
\end{align*}

The three terms will be bounded individually, where we will use the H\"older's inequality to bound term (a), and use the projection lemma (Lemma~\ref{lemma:bregman-divergence}) to bound terms (b) and (c). 

We first investigate term (a). Indeed, the H\"older's inequality implies that
\begin{equation}
	\label{eq:term-a}
	\begin{split}	
	\mathtt{term~(a)} \leq {} & \norm{\nabla f_t(\x_t) - \nabla f_{t-1}(\xh_t)}_2 \norm{\x_t - \xh_{t+1}}_2 \\
	\leq {} & \frac{\eta}{2}\norm{\nabla f_t(\x_t) - \nabla f_{t-1}(\xh_t)}_2^2  + \frac{1}{2\eta} \norm{\x_t - \xh_{t+1}}_2^2 
	\end{split}
\end{equation}
where we make use of the fact that $ab\leq \frac{a^2}{2\eta} + \frac{\eta b^2}{2}$ holds for any $a,b\geq 0$ and $\eta >0$.

From the update procedure in~\eqref{alg:OEGD} and by employing Lemma~\ref{lemma:bregman-divergence}, we have 
\begin{align*}
\langle \hat{\x}_{t+1} - \u_{t}, \eta \nabla f_{t}(\x_{t}) \rangle & \leq \frac{1}{2} \norm{\u_t - \xh_t}_2^2 - \frac{1}{2} \norm{\u_t - \xh_{t+1}}_2^2 - \frac{1}{2} \norm{\xh_{t+1} - \xh_t}_2^2, \\
\langle \x_t - \xh_{t+1}, \eta \nabla f_{t-1}(\xh_{t}) \rangle & \leq \frac{1}{2} \norm{\xh_t - \xh_{t+1}}_2^2 - \frac{1}{2} \norm{\x_t - \xh_{t+1}}_2^2 - \frac{1}{2} \norm{\x_t - \xh_t}_2^2. 
\end{align*}
A simple rearrangement delivers 
\begin{align}
  \mathtt{term~(b)} \leq {} & \frac{1}{2\eta}\big( \norm{\xh_t - \xh_{t+1}}_2^2 - \norm{\x_t - \xh_{t+1}}_2^2 - \norm{\x_t - \xh_{t}}_2^2 \big) \label{eq:term-b}\\
  \mathtt{term~(c)} \leq {} & \frac{1}{2\eta} \big(\norm{\u_t-\xh_{t}}_2^2 - \norm{\u_t-\xh_{t+1}}_2^2 - \norm{\xh_{t+1}-\xh_{t}}_2^2\big) \label{eq:term-c}
\end{align}

So we can combine all three inequalities~\eqref{eq:term-a},~\eqref{eq:term-b},~\eqref{eq:term-c} and get
\begin{align*}
  f_t(\x_t) - f_t(\u_t) \leq {} & \frac{\eta}{2}\norm{\nabla f_t(\x_t) - \nabla f_{t-1}(\xh_t)}_2^2  + \frac{1}{2\eta} \norm{\x_t - \xh_{t+1}}_2^2 \\
   {} & + \frac{1}{2\eta}\left( \norm{\u_t-\xh_{t}}_2^2 - \norm{\u_t-\xh_{t+1}}_2^2 - \norm{\x_t - \xh_{t+1}}_2^2 - \norm{\x_t - \xh_{t}}_2^2\right),
\end{align*}

Summing the above inequality over all iterations, we can bound the dynamic regret as follows,
\begin{align*}
       {} & \sum_{t=1}^{T} f_t(\x_t) - \sum_{t=1}^T f_t(\u_t)   \\
  \leq {} & f_1(\x_1) - f_1(\u_1) + \frac{\eta}{2} \sum_{t=2}^T \norm{\nabla f_t(\x_t) - \nabla f_{t-1}(\xh_t)}_2^2 \\
  {} & \qquad \qquad \quad \quad ~~ + \frac{1}{2\eta}\sum_{t=2}^T \Big( \norm{\u_t-\xh_{t}}_2^2 - \norm{\u_t-\xh_{t+1}}_2^2 - \norm{\x_t - \xh_{t}}_2^2\Big)\\
  \leq  {} & GD + \underbrace{\frac{\eta}{2} \sum_{t=2}^T \norm{\nabla f_t(\x_t) - \nabla f_{t-1}(\xh_t)}_2^2}_{\mathtt{term~(i)}} \\
  {} & \qquad \qquad \quad \quad ~~ + \underbrace{\frac{1}{2\eta} \sum_{t=2}^T \big(\norm{\u_t-\xh_{t}}_2^2 - \norm{\u_t-\xh_{t+1}}_2^2\big)}_{\mathtt{term~(ii)}} - \frac{1}{2\eta}\sum_{t=2}^T \norm{\x_t - \xh_t}_2^2.
\end{align*}

We exploit smoothness to bound term (i),
\begin{align*}
  \mathtt{term~(i)} = {} & \frac{\eta}{2} \sum_{t=2}^T \norm{\nabla f_t(\x_t) - \nabla f_{t-1}(\xh_t)}_2^2 \\
  \leq {} & \frac{\eta}{2} \sum_{t=2}^T 2\big(\norm{\nabla f_t(\x_t) - \nabla f_{t}(\xh_t)}_2^2 + \norm{\nabla f_{t}(\xh_t) - \nabla f_{t-1}(\xh_t)}_2^2\big) \\
  \overset{\eqref{eqn:f:smooth}}{\leq} {} & \eta \sum_{t=2}^T \Big(L^2 \norm{\x_t - \xh_t}_2^2 + \sup_{\x \in \X} \norm{\nabla f_{t}(\x) - \nabla f_{t-1}(\x)}_2^2\Big)\\
  = {} & \eta L^2 \sum_{t=2}^T \norm{\x_t - \xh_t}_2^2 + \eta V_T.
\end{align*}

It suffices to bound term (ii),
\begin{align*}
  \mathtt{term~(ii)} = {} & \frac{1}{2\eta} \sum_{t=2}^T \big(\norm{\u_t-\xh_{t}}_2^2 - \norm{\u_t-\xh_{t+1}}_2^2\big)\\
  \leq {} & \frac{1}{2\eta} \norm{\u_1 - \xh_2}_2^2 + \frac{1}{2\eta} \sum_{t=2}^T \big(\norm{\u_t-\xh_{t}}_2^2 - \norm{\u_{t-1}-\xh_{t}}_2^2\big)\\
  \leq {} & \frac{D^2}{2\eta} + \frac{1}{2\eta}\sum_{t=2}^T \norm{\u_t-\xh_{t} + \u_{t-1}-\xh_{t}}_2 \norm{\u_t - \u_{t-1}}_2\\
  \leq {} & \frac{D^2}{2\eta} + \frac{D}{\eta}\sum_{t=2}^T  \norm{\u_t - \u_{t-1}}_2.
\end{align*}

Putting the above inequalities of terms (i) and (ii) together yields,
\begin{equation}
\label{eq:extra-OGD-ineqn-1}
  \begin{split}
  	   {} & \sum_{t=1}^{T} f_t(\x_t) - \sum_{t=1}^Tf_t(\u_t) \\
  \leq {} & GD + (\eta L^2 - \frac{1}{2\eta}) \sum_{t=2}^T \norm{\x_t - \xh_t}_2^2 + \eta V_T + \frac{D^2}{2\eta} + \frac{D}{\eta}\sum_{t=2}^T  \norm{\u_t - \u_{t-1}}_2 \\
  \leq {} & GD + \eta V_T + \frac{D^2}{2\eta} + \frac{DP_T}{\eta} \
  \end{split}
\end{equation}
where the last step makes use of the condition $\eta \leq 1/(4L)$. This completes the proof.
\end{proof}

\subsection{Analysis of Meta-Algorithm (VariationHedge)}
In this part we analyze the meta-algorithm of Sword$_{\text{var}}$, i.e., VariationHedge. We first present a general meta-regret bound of VariationHedge in Theorem~\ref{thm:dynamic-regret-meta}, which holds for any learning rate $\epsilon > 0$. Then, we prove Theorem~\ref{thm:variation-meta-regret} as a consequence by choosing a proper learning rate. Note that in the meta-regret analysis, we will denote the number of candidate step sizes (namely, the number of experts) simply by $N$ instead of $N_1$ when no confusion can arise.

Let us restate the weight update procedure of VariationHedge. From~\eqref{eq:VariationHedge}, we know that VariationHedge updates the weight $\p_{t+1} \in \Delta_{N}$ by
\begin{equation}
  \label{eq:weight-update-VariationHedge}
  p_{t+1,i} = \frac{\exp\left(-\epsilon\big(\sum_{s=1}^{t} \inner{\nabla f_s(\x_s)}{\x_{s,i}} + \inner{\nabla f_t(\bar{\x}_{t+1})}{\x_{t+1,i}}\big)\right)}{\sum_{i=1}^{N} \exp\left(-\epsilon\big(\sum_{s=1}^{t} \inner{\nabla f_s(\x_s)}{\x_{s,i}} + \inner{\nabla f_t(\bar{\x}_{t+1})}{\x_{t+1,i}}\big)\right)},
\end{equation}
for any $i \in [N]$, where the instrumental output $\bar{\x}_{t+1}$ is carefully designed as
\begin{equation}
  \label{eq:instrumental}
  \bar{\x}_{t+1} = \sum_{i=1}^{N} p_{t,i} \x_{t+1,i}.
\end{equation}
The motivation of the design has been illustrated in Remark~\ref{remark:1}. Note that VariationHedge is actually a specialization of OptimisticHedge by setting the linearized surrogate loss $\ell_{t,i}= \inner{\nabla f_t(\x_t)}{\x_{t,i}}$ and optimism $m_{t+1,i} = \inner{\nabla f_t(\bar{\x}_{t+1})}{\x_{t+1,i}}$. Therefore, by Lemma~\ref{lemma:OptimisticHedge} and the setting of the instrumental output $\bar{\x}_{t+1}$, we have the following result regarding its meta-regret.
\begin{myThm}
\label{thm:dynamic-regret-meta}
Under Assumptions~\ref{assumption:bounded-gradient},~\ref{assumption:bounded-domain} and~\ref{assumption:smoothness}, the meta-regret of the VariationHedge satisfies
\begin{equation}
  \label{eq:dynamic-regret-meta}
  \sum_{t=1}^{T} f_t(\x_t) - \sum_{t=1}^{T} f_t(\x_{t,i}) \leq \frac{2 + \ln N}{\epsilon} + 2\epsilon D^2 V_T + \bigg(2\epsilon D^4L^2 - \frac{1}{4\epsilon}\bigg) \sum_{t=2}^T\norm{\p_{t} - \p_{t-1}}_1^2 + \O(1),
\end{equation}
which holds for any expert $i \in [N]$. $\O(1)$ hides the constants without affecting the final regret order.
\end{myThm}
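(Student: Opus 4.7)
The plan is to reduce the dynamic regret against expert $i$ to a meta-level linear regret, then instantiate Lemma~\ref{lemma:OptimisticHedge} with our specific choices of surrogate loss and optimism, and finally unpack the adaptivity term $D_\infty = \sum_t \norm{\ellb_t - \m_t}_\infty^2$ using the estimate already laid out in Remark~\ref{remark:1}.

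First I would linearize. Since $\x_t = \sum_{i=1}^N p_{t,i}\x_{t,i}$ and each $f_t$ is convex, for any expert $i \in [N]$,
\[
  f_t(\x_t) - f_t(\x_{t,i}) \leq \inner{\nabla f_t(\x_t)}{\x_t - \x_{t,i}} = \inner{\p_t}{\ellb_t} - \ell_{t,i},
\]
where $\ell_{t,j} = \inner{\nabla f_t(\x_t)}{\x_{t,j}}$ is exactly the linearized surrogate loss fed into VariationHedge. Summing over $t$ and applying Lemma~\ref{lemma:OptimisticHedge} directly gives
\[
  \sum_{t=1}^T \bigl(f_t(\x_t) - f_t(\x_{t,i})\bigr) \leq \frac{2+\ln N}{\epsilon} + \epsilon \sum_{t=1}^T \norm{\ellb_t - \m_t}_\infty^2 - \frac{1}{4\epsilon}\sum_{t=2}^T \norm{\p_t - \p_{t-1}}_1^2.
\]

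Next I would bound the adaptivity term using exactly the computation sketched in Remark~\ref{remark:1}. For $t \geq 2$, by the definition of $\m_t$ in~\eqref{eq:optimism-variation}, adding and subtracting $\nabla f_{t-1}(\x_t)$, applying the triangle inequality and then smoothness,
\[
  \norm{\ellb_t - \m_t}_\infty^2 \leq 2D^2 \sup_{\x \in \X} \norm{\nabla f_t(\x) - \nabla f_{t-1}(\x)}_2^2 + 2D^2 L^2 \norm{\x_t - \xb_t}_2^2.
\]
The first term sums to $2D^2 V_T$. For the second term, because $\x_t = \sum_i p_{t,i}\x_{t,i}$ and $\xb_t = \sum_i p_{t-1,i}\x_{t,i}$ share the same expert predictions, H\"older's inequality combined with $\norm{\x_{t,i}}_2 \leq D$ yields $\norm{\x_t - \xb_t}_2^2 \leq D^2 \norm{\p_t - \p_{t-1}}_1^2$. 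For the initial round $t=1$, the optimism is vacuous and $\norm{\ellb_1 - \m_1}_\infty^2 \leq G^2 D^2$ by the bounded-gradient and bounded-domain assumptions, which contributes an $\O(1)$ constant after multiplying by $\epsilon$ (absorbed into the final $\O(1)$).

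Putting these ingredients together, the adaptivity term contributes at most $2\epsilon D^2 V_T + 2\epsilon D^4 L^2 \sum_{t=2}^T \norm{\p_t - \p_{t-1}}_1^2 + \O(1)$, and combining with the negative term $-\tfrac{1}{4\epsilon}\sum_{t=2}^T\norm{\p_t-\p_{t-1}}_1^2$ from Lemma~\ref{lemma:OptimisticHedge} delivers exactly the stated inequality~\eqref{eq:dynamic-regret-meta}. The main subtlety — and the step I would spend the most care on — is the construction of $\xb_{t+1}$ so that $\x_t - \xb_t$ precisely telescopes into $\p_t - \p_{t-1}$, which is what allows the smoothness-induced error $\norm{\x_t - \xb_t}_2^2$ to be swallowed by the negative stability term of OptimisticHedge for sufficiently small $\epsilon$ (the later choice $\epsilon \leq 1/\sqrt{8D^4L^2}$ in Theorem~\ref{thm:variation-meta-regret} makes this cancellation concrete). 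Everything else is bookkeeping with the triangle inequality, smoothness, and H\"older.
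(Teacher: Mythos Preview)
Your proposal is correct and follows essentially the same approach as the paper's proof: linearize via convexity, apply Lemma~\ref{lemma:OptimisticHedge}, decompose $\norm{\ellb_t - \m_t}_\infty^2$ using Cauchy--Schwarz, the triangle inequality, and smoothness (exactly the computation in Remark~\ref{remark:1}), then bound $\norm{\x_t - \xb_t}_2^2 \leq D^2\norm{\p_t - \p_{t-1}}_1^2$ so that it merges with the negative stability term. The only cosmetic difference is that you isolate the $t=1$ contribution explicitly via Assumption~\ref{assumption:bounded-gradient}, whereas the paper absorbs it into the $\O(1)$ term by noting that $V_T$ is indexed from $t=2$.
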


\begin{proof}[{Proof of Theorem~\ref{thm:dynamic-regret-meta}}]
  By convexity, we know that the dynamic regret with respect to the original loss function is bounded by that with respect to the linearized surrogate loss, namely,
  \begin{align*}
       \sum_{t=1}^{T} f_t(\x_t) - \sum_{t=1}^Tf_t(\x_{t,i})\leq  \sum_{t=1}^{T} \inner{\nabla f_t(\x_t)}{\x_t - \x_{t,i}} = \sum_{t=1}^{T} \inner{\p_t}{\ellb_t} - \sum_{t=1}^T\ell_{t,i}.
  \end{align*}
  Since VariationHedge is a variant of OptimisticHedge by assigning the feedback loss of expert $\Ecal_i$ as $\ell_{t,i} = \inner{\nabla f_t(\x_t)}{\x_{t,i}}$ and the optimism as $m_{t+1,i} = \inner{\nabla f_t(\bar{\x}_{t+1})}{\x_{t+1,i}}$, Lemma~\ref{lemma:OptimisticHedge} implies
  \begin{align}
  &\sum_{t=1}^{T} \inner{\p_t}{\ellb_t} - \sum_{t=1}^T \ell_{t,i}\nonumber\\
  \overset{\eqref{eq:regret-optimistic-Hedge}}{\leq} {} & \epsilon \sum_{t=1}^{T} \norm{\ellb_t - \m_t}_\infty^2 + \frac{2 + \ln N}{\epsilon} - \frac{1}{4\epsilon}\sum_{t=2}^{T} \norm{\p_t - \p_{t-1}}_1^2\nonumber\\
  = {} & \epsilon \sum_{t=1}^{T} \left(\max_{i \in [N]} \inner{\nabla f_t(\x_t) - \nabla f_{t-1}(\xb_t)}{\x_{t,i}}\right)^2 + \frac{2 + \ln N}{\epsilon} - \frac{1}{4\epsilon}\sum_{t=2}^{T} \norm{\p_t - \p_{t-1}}_1^2\nonumber\\
  \leq {} & \epsilon D^2 \sum_{t=1}^{T} \norm{\nabla f_t(\x_t) - \nabla f_{t-1}(\xb_t)}_2^2 + \frac{2 + \ln N}{\epsilon} - \frac{1}{4\epsilon}\sum_{t=2}^{T} \norm{\p_t - \p_{t-1}}_1^2\nonumber\\
  \leq {} & 2 \epsilon D^2 \sum_{t=1}^{T} \left(\norm{\nabla f_t(\x_t) - \nabla f_{t-1}(\x_t)}_2^2 + \norm{\nabla f_{t-1}(\x_t) - \nabla f_{t-1}(\xb_t)}_2^2\right) \nonumber \\
  	{} & \qquad \qquad \qquad \qquad \qquad \qquad \qquad \qquad + \frac{2 + \ln N}{\epsilon} - \frac{1}{4\epsilon}\sum_{t=2}^{T} \norm{\p_t - \p_{t-1}}_1^2\nonumber\\
  \leq {} & 2 \epsilon D^2 \sum_{t=1}^{T}\sup_{\x \in \X}\norm{\nabla f_t(\x) - \nabla f_{t-1}(\x)}_2^2 + 2 \epsilon D^2 L^2 \sum_{t=1}^{T}\norm{\x_t - \xb_t}_2^2 \nonumber \\
  	{} & \qquad \qquad \qquad \qquad \qquad \qquad \qquad \qquad + \frac{2 + \ln N}{\epsilon} - \frac{1}{4\epsilon}\sum_{t=2}^{T} \norm{\p_t - \p_{t-1}}_1^2\label{eq:meta-regret-vartion-1}\\
  \leq {} & 2 \epsilon D^2 V_T + 2 \epsilon D^2 L^2 \sum_{t=1}^{T}\norm{\x_t - \xb_t}_2^2 + \frac{2 + \ln N}{\epsilon} - \frac{1}{4\epsilon}\sum_{t=2}^{T} \norm{\p_t - \p_{t-1}}_1^2 + \O(1)\label{eq:meta-regret-variation-step1},
  \end{align}
  where the second inequality follows from H\"older's inequality inequality and Assumption~\ref{assumption:bounded-domain} (boundedness of domain), and~\eqref{eq:meta-regret-vartion-1} holds due to the smoothness. Notice that the extra $\O(1)$ term appears in~\eqref{eq:meta-regret-variation-step1} because the definition of gradient variation $V_T$ begins from the index of $2$. We will keep the notation of $\O(1)$ without presenting detailed values, as the constant will not affect the regret order. 

  We now focus on the last two terms. Indeed,
\begin{align}
    {} & 2 \epsilon D^2 L^2 \sum_{t=1}^{T}\norm{\x_t - \xb_t}_2^2 - \frac{1}{4\epsilon}\sum_{t=2}^{T} \norm{\p_t - \p_{t-1}}_1^2 \nonumber\\
  = {} & 2 \epsilon D^2 L^2 \sum_{t=1}^{T}\left\Vert \sum_{i=1}^{N}(p_{t,i} - p_{t-1,i})\x_{t,i}\right\Vert_2^2 - \frac{1}{4\epsilon}\sum_{t=2}^{T} \norm{\p_t - \p_{t-1}}_1^2\nonumber\\
  \leq {} & 2 \epsilon D^2 L^2 \sum_{t=1}^{T}\left(\sum_{i=1}^{N}\abs{p_{t,i} - p_{t-1,i}}\norm{\x_{t,i}}_2\right)^2 - \frac{1}{4\epsilon}\sum_{t=2}^{T} \norm{\p_t - \p_{t-1}}_1^2\nonumber\\
  \leq {} & \left(2 \epsilon D^4 L^2 - \frac{1}{4\epsilon}\right) \sum_{t=2}^{T} \norm{\p_t - \p_{t-1}}_1^2 + \O(1)\label{eq:meta-regret-variation-step2},
\end{align}
where the first equality holds due to the definition of the instrumental output $\xb_t$, which is carefully designed to convert the adaptivity $D_\infty$ to the desired gradient variation $V_T$. Besides, the last inequality follows from the boundedness assumption. Notice that the extra $\O(1)$ term is introduced due to that the negative term begins from the index of $2$. 

Therefore, we complete the proof by combining~\eqref{eq:meta-regret-variation-step1} and~\eqref{eq:meta-regret-variation-step2}.
\end{proof}

Theorem~\ref{thm:dynamic-regret-meta} presents a general regret bound for the meta-algorithm, VariationHedge. By appropriate learning rate tuning, we can obtain Theorem~\ref{thm:variation-meta-regret}. We now show the proof as follows.

\begin{proof}[{Proof of Theorem~\ref{thm:variation-meta-regret}}]
According to Theorem~\ref{thm:dynamic-regret-meta}, for any expert $i \in[N]$, the meta-regret of VariationHedge satisfies 
\begin{equation*}
  \sum_{t=1}^{T} f_t(\x_t) - \sum_{t=1}^{T} f_t(\x_{t,i}) \leq \frac{2 + \ln N}{\epsilon} + 2\epsilon D^2 V_T + \bigg(2\epsilon D^4L^2 - \frac{1}{4\epsilon}\bigg) \sum_{t=2}^T\norm{\p_{t} - \p_{t-1}}_1^2  + \O(1).
\end{equation*}
Since $\epsilon\leq\sqrt{1/(8D^4L^2)}$, we have $\big(2\epsilon D^4L^2 - 1/(4\epsilon)) \sum_{t=2}^T\norm{\p_{t} - \p_{t-1}}_1^2<0$. Therefore the meta-regret of VariationHedge is bounded by, 
\begin{equation*}
  \sum_{t=1}^{T} f_t(\x_t) - \sum_{t=1}^{T} f_t(\x_{t,i}) \leq \frac{2 + \ln N}{\epsilon} + 2\epsilon D^2 V_T + \O(1).
\end{equation*}

Let $\epsilon_* = \sqrt{\frac{2 + \ln N}{2D^2 V_T}}$ and $\epsilon_0 = \sqrt{\frac{1}{8D^4L^2}}$, we set the learning rate as $\epsilon = \min\{\epsilon_0, \epsilon_*\}$.  We consider the following two cases:
\begin{itemize}
  \item when $\epsilon^* \leq \epsilon_0$, the meta-regret is at most $$\sum_{t=1}^{T} f_t(\x_t) - \sum_{t=1}^{T} f_t(\x_{t,i}) \leq (2 + \ln N)/\epsilon^* + 2 \epsilon^* D^2 V_T = 2\sqrt{2D^2(2+\ln N)V_T}.$$
  \item when $\epsilon^* \geq \epsilon_0$, the meta-regret is bounded by 
  \[
  \sum_{t=1}^{T} f_t(\x_t) - \sum_{t=1}^{T} f_t(\x_{t,i}) \leq \frac{2 + \ln N}{\epsilon_0} + 2 \epsilon_0 D^2 V_T \leq 4\sqrt{2}D^2L (2+\ln N),
  \] 
  where the last inequality makes use of the condition of $\epsilon^* \geq \epsilon_0$.
\end{itemize}
Hence, taking the two cases into account, the meta-regret is bounded by
\begin{align*}
	 \sum_{t=1}^{T} f_t(\x_t) - \sum_{t=1}^{T} f_t(\x_{t,i}) \leq {} & 2D\sqrt{2V_T(2+\ln N)} + 4\sqrt{2}D^2L (2+\ln N) + \O(1) \\
  = {} & \O\left(\sqrt{(\ln N + V_T) \ln N}\right),
\end{align*}
which completes the proof.
\end{proof}

\subsection{Proof of Gradient-Variation Dynamic Regret Bounds (Theorem~\ref{thm:dynamic-var})}
\label{sec:proof-thm-var}
\begin{proof}[Proof of Theorem~\ref{thm:dynamic-var}]
Notice that the dynamic regret can be decomposed into the following two parts
\begin{equation}
  \label{eq:dynamic-regret-decompose}
  \sum_{t=1}^{T} f_t(\x_t) - \sum_{t=1}^{T} f_t(\u_t) = \underbrace{\sum_{t=1}^{T} f_t(\x_t) - \sum_{t=1}^{T} f_t(\x_{t,i})}_{\mathtt{meta\text{-}regret}} + \underbrace{\sum_{t=1}^{T} f_t(\x_{t,i}) - \sum_{t=1}^{T} f_t(\u_t)}_{\expert},
\end{equation}
which holds for any expert index $i \in [N_1]$. In above, $\{\x_t\}_{t=1,\ldots,T}$ denotes the final output sequence, and $\{\x_{t,i}\}_{t=1,\ldots,T}$ is the prediction sequence of expert $\Ecal_i$. The first part is the difference between cumulative loss of final output sequence and that of prediction sequence of expert $\Ecal_i$, which is introduced by the meta-algorithm and thus named as \emph{meta-regret}; the second part is the dynamic regret of expert $\Ecal_i$ and therefore named as \emph{expert-regret}. 

In the following, we upper bound these two terms respectively. 

\paragraph{Upper bound of meta-regret.} Recall that in Sword$_\text{var}$, the final decision $\x_t$ at iteration $t$ is a weighted combination of  predictions returned from the expert-algorithms, and the weight is updated by the meta-algorithm (VariationHedge). Therefore, we can apply Theorem~\ref{thm:variation-meta-regret} to track any expert $i \in [N_1]$ and obtain the upper bound of the meta-regret,
\begin{equation}
  \label{eq:meta-regret-variation}
  \meta = \sum_{t=1}^{T} f_t(\x_t) - \sum_{t=1}^{T} f_t(\x_{t,i}) \leq 2D\sqrt{2V_T(2+\ln N_1)} + 4\sqrt{2}D^2L (2+\ln N_1).
\end{equation}

\paragraph{Upper bound of expert-regret.} To make the bound in~\eqref{eq:dynamic-regret-decompose} tight, we find the expert $k\in[N_1]$ with the smallest expert-regret. In other words, we need to identify the nearly optimal step size. 

Recall that the optimal step size is $\eta^* = \min\{\frac{1}{4L},\sqrt{(D^2 + 2DP_T)/(2V_T)}\}$. Meanwhile, $V_T = \sum_{t=2}^T \sup_{\x\in \X}\norm{\nabla f_t(\x) - \nabla f_{t-1}(\x)}_2^2 \leq 4G^2T$ due to Assumption~\ref{assumption:bounded-gradient} and Assumption~\ref{assumption:bounded-domain}. Consequently, the possible minimal and maximal values of the optimal step size are
\begin{equation}
  \label{eq:possible-step-size}
  \eta_{\min} = \sqrt{\frac{D^2}{8G^2T}}, \quad \eta_{\max} = \frac{1}{4L}.
\end{equation}

By the construction of the candidate step size pool $\H_{\text{var}}$, we know that the step size therein is monotonically increasing with respect to the index, in particular,  
\[
  \eta_1 = \sqrt{\frac{D^2}{8G^2T}} = \eta_{\min}, \text{ and  }\eta_{N_1} \leq \frac{1}{4L} = \eta_{\max}.
\]
Therefore, we confirm that there exists an integer $k\in [N_1]$ such that $\eta_{k} \leq \eta^* \leq \eta_{k+1} = 2\eta_k$. The gap between the cumulative loss of final decisions and that of expert $k$ can be upper bounded as follows,
\begin{align}
  \mathtt{expert\text{-}regret} & = \sum_{t=1}^{T} f_t(\x_{t,k}) - \sum_{t=1}^{T} f_t(\u_t) \nonumber \\
  & \overset{\eqref{eq:extra-OGD-ineqn-1}}{\leq} \frac{D^2 + 2DP_T}{2\eta_k} + \eta_k V_T + GD\nonumber \\
  & \leq \frac{D^2 + 2DP_T}{\eta^*} + \eta^* V_T + GD \label{eq:upper-step-1-variation} \\
  & \leq 3 \sqrt{V_T(D^2 + 2DP_T)} + 6L(D^2 + 2DP_T) + GD \label{eq:upper-step-2-variation}\\
  & \leq 3 \sqrt{2(V_T + 4L^2D^2 + 8L^2DP_T)(D^2 + 2DP_T)} + GD \label{eq:expert-regret-variation}
\end{align}
where~\eqref{eq:upper-step-1-variation} holds due to $\eta_k \leq \eta^* \leq 2\eta_k$, and~\eqref{eq:expert-regret-variation} follows from $\sqrt{a} + \sqrt{b} \leq \sqrt{2(a+b)}$, $\forall a, b>0$. Meanwhile,~\eqref{eq:upper-step-2-variation} holds by noticing that the optimal step size $\eta^*$ is either $\sqrt{(D^2 + 2DP_T)/(2V_T)}$ or $\frac{1}{4L}$, and therefore
\begin{itemize}
  \item when $\eta^* = \sqrt{(D^2 + 2DP_T)/(2V_T)}$, R.H.S of~\eqref{eq:upper-step-1-variation} $ = \frac{3}{2} \sqrt{2V_T(D^2 + 2DP_T)} + GD$. 
  \item when $\eta^* = \frac{1}{4L}$, R.H.S of~\eqref{eq:upper-step-1-variation} $ = 4L(D^2 + 2DP_T) + \frac{1}{4L}V_T + GD \leq 6L(D^2 + 2DP_T) + GD$, where the last inequality holds due to $1/(4L) \leq \sqrt{(D^2 + 2DP_T)/(2V_T)}$ in this case.
\end{itemize} 
We sum over the upper bounds of two conditions and obtain~\eqref{eq:upper-step-2-variation}. 

\paragraph{Upper bound of dynamic regret.} Combining~\eqref{eq:meta-regret-variation} and~\eqref{eq:expert-regret-variation}, we obtain
\begin{align*}
  & \sum_{t=1}^{T} f_t(\x_t) - \sum_{t=1}^{T} f_t(\u_t) \\
  \overset{\eqref{eq:dynamic-regret-decompose}}{=} &\underbrace{\sum_{t=1}^{T} f_t(\x_t) - \sum_{t=1}^{T} f_t(\x_{t,k})}_{\mathtt{meta\text{-}regret}} + \underbrace{\sum_{t=1}^{T} f_t(\x_{t,k}) - \sum_{t=1}^{T} f_t(\u_t)}_{\mathtt{expert\text{-}regret}} \nonumber \\
  \overset{\eqref{eq:meta-regret-variation}~\eqref{eq:expert-regret-variation}}{\leq} & 2D\sqrt{2V_T(2+\ln N_1)} + 4\sqrt{2}D^2L (2+\ln N_1) \\
    {} & \qquad \qquad \qquad \qquad + 3 \sqrt{2(V_T + 4L^2D^2 + 8L^2DP_T)(D^2 + 2DP_T)} + GD\\
  \leq {} & 3 \sqrt{4(V_T + 4L^2D^2 + 8L^2DP_T)(D^2 + 2DP_T) + 4D^2V_T(2+\ln N_1)} \\
  {} & \qquad \qquad \qquad \qquad + 4\sqrt{2}D^2L (2+\ln N_1) + GD\\
  \leq {} & 6 \sqrt{((3+\ln N_1)V_T + 4L^2D^2 + 8L^2DP_T)(D^2 + 2DP_T)} + 4\sqrt{2}D^2L (2+\ln N_1) + GD\\
  = {} & \O\left(\sqrt{(1 + P_T + V_T)(1 + P_T)}\right).
\end{align*} 
The derivation uses the inequality of $\sqrt{a} + \sqrt{b} \leq \sqrt{2(a+b)}$, $\forall a,b\geq 0$. Meanwhile, we treat the double logarithmic factor in $T$ as a constant, following previous studies~\citep{ALT'12:closer-adaptive-regret,COLT'15:Luo-AdaNormalHedge}. We finally remark that the universal dynamic regret presented above holds for any  sequence of feasible comparators.
\end{proof}

\section{Proof of Small-Loss Bound}
\label{sec:appendix-small-loss}
In this section we provide proofs of small-loss bounds, including analysis of the expert-algorithm and meta-algorithm, as well as the proof of overall dynamic regret.

\subsection{Analysis of Expert-Algorithm (Online Gradient Descent)}
In this part we analyze the expert-algorithm of the Sword$_{\text{var}}$ algorithm, namely, the online gradient descent. We will present the proof of the small-loss dynamic regret bound (Theorem~\ref{thm:dynamic-OGD}). Before that, in the following we first restate the small-loss static regret bound~\citep[Theorem 2]{NIPS'10:smooth} as well as its proof.

\begin{myThm}[{Theorem 2 of~\citet{NIPS'10:smooth}}] 
\label{thm:ogd}
Under Assumptions~\ref{assumption:bounded-domain},~\ref{assumption:smoothness}, and~\ref{assumption:non-negative}, by choosing any step size $\eta \leq \frac{1}{4L}$, OGD satisfies
\begin{align*}
  \sum_{t=1}^T f_t(\x_{t}) - \sum_{t=1}^T  f_t(\x^*) \leq \frac{D^2}{2 \eta (1-2 \eta L)}  +  \frac{2 \eta L}{(1-2 \eta L)} \sum_{t=1}^T  f_t(\x^*) = \O \Big( \frac{1}{\eta} + \eta F_T^* \Big)
\end{align*}
for any $\x^* \in \X$. Here $F^*_T = \sum_{t=1}^T  f_t(\x^*)$ is the cumulative loss of the comparator benchmark $\x^*$, which is usually set as the best decision in hindsight, i.e., $\x^* = \argmin_{\x \in \X} \sum_{t=1}^T  f_t(\x)$.
\end{myThm}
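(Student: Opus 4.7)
The plan is to follow the standard small-loss argument of \citet{NIPS'10:smooth}: the key ingredient is the self-bounding property of non-negative smooth functions, namely $\norm{\nabla f_t(\x)}_2^2 \leq 4L f_t(\x)$, which converts the usual $\eta \sum_t \norm{\nabla f_t(\x_t)}_2^2$ penalty term of OGD into a multiple of the cumulative loss of the player, which can then be shifted to produce a cumulative loss of the comparator on the right-hand side.

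First I would start from convexity and the OGD update. By convexity, $f_t(\x_t) - f_t(\x^*) \leq \inner{\nabla f_t(\x_t)}{\x_t - \x^*}$. Using non-expansiveness of $\Pi_{\X}$ combined with the update $\x_{t+1} = \Pi_{\X}[\x_t - \eta \nabla f_t(\x_t)]$ and expanding the square, one obtains the standard one-step inequality
\begin{equation*}
  \inner{\nabla f_t(\x_t)}{\x_t - \x^*} \leq \frac{\norm{\x_t - \x^*}_2^2 - \norm{\x_{t+1} - \x^*}_2^2}{2\eta} + \frac{\eta}{2} \norm{\nabla f_t(\x_t)}_2^2.
\end{equation*}
Summing over $t=1,\ldots,T$ and telescoping, together with Assumption~\ref{assumption:bounded-domain}, yields
\begin{equation*}
  \sum_{t=1}^T f_t(\x_t) - \sum_{t=1}^T f_t(\x^*) \leq \frac{D^2}{2\eta} + \frac{\eta}{2} \sum_{t=1}^T \norm{\nabla f_t(\x_t)}_2^2.
\end{equation*}

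Next I would invoke the self-bounding property: since $f_t$ is $L$-smooth and non-negative on $\R^d$ (Assumptions~\ref{assumption:smoothness} and~\ref{assumption:non-negative}), it holds that $\norm{\nabla f_t(\x_t)}_2^2 \leq 4L f_t(\x_t)$. Plugging this in gives
\begin{equation*}
  \sum_{t=1}^T f_t(\x_t) - \sum_{t=1}^T f_t(\x^*) \leq \frac{D^2}{2\eta} + 2\eta L \sum_{t=1}^T f_t(\x_t).
\end{equation*}
Rearranging to move the $f_t(\x_t)$ terms to the left and noting that $\eta \leq 1/(4L)$ guarantees $1 - 2\eta L \geq 1/2 > 0$, we can divide through by $(1 - 2\eta L)$ and add $\sum_t f_t(\x^*)$ to both sides, producing
\begin{equation*}
  \sum_{t=1}^T f_t(\x_t) - \sum_{t=1}^T f_t(\x^*) \leq \frac{D^2}{2\eta(1-2\eta L)} + \frac{2\eta L}{1-2\eta L} \sum_{t=1}^T f_t(\x^*),
\end{equation*}
which is the claimed inequality.

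The argument is essentially routine once the self-bounding step is in hand; no step presents a genuine obstacle. The only subtlety worth flagging is the role of Assumption~\ref{assumption:non-negative}: the self-bounding property $\norm{\nabla f_t(\x)}_2^2 \leq 4L f_t(\x)$ requires $f_t \geq 0$ globally (not merely on $\X$), since the standard derivation applies smoothness between $\x$ and $\x - \nabla f_t(\x)/(2L)$, which may lie outside $\X$. This is precisely why the assumption is stated globally, and without it the chain above breaks. The condition $\eta \leq 1/(4L)$ is there solely to keep the coefficient $1-2\eta L$ bounded away from zero so that the final rearrangement is meaningful.
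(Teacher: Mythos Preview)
Your proposal is correct and follows essentially the same argument as the paper: convexity plus the OGD one-step inequality, the self-bounding property $\norm{\nabla f_t(\x_t)}_2^2 \leq 4L f_t(\x_t)$, telescoping, and rearranging using $1-2\eta L > 0$. Your remark about needing global non-negativity for the self-bounding step also matches the paper's own caveat.
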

Theorem~\ref{thm:ogd} indicates an $\O(\sqrt{F^*_T})$ regret bound with a proper choice of step size, which is tighter than the minimax rate of $\O(\sqrt{T})$ when the cumulative loss is small.

\begin{proof}[Proof of Theorem~\ref{thm:ogd}]
First, notice that Assumptions~\ref{assumption:non-negative} and \ref{assumption:smoothness} imply $f_t(\cdot)$ is non-negative and $L$-smooth. From the self-bounding property of smooth functions~\citep{NIPS'10:smooth}, as shown in Lemma~\ref{lem:smooth}, we have
\begin{equation} \label{eqn:smooth:key}
\|\nabla f_t(\x)\|_2^2 \leq 4 L  f_t (\x), \ \forall \x \in \X.
\end{equation}

Define $\x_{t+1}'=\x_t - \eta \nabla f_t(\x_t)$. For any $\x \in \X$, we have
\begin{equation} \label{eqn:ogd:1:old}
\begin{split}
 f_t(\x_{t}) - f_t(\x)\leq {} & \langle \nabla f_t(\x_{t}), \x_{t} - \x\rangle = \frac{1}{\eta} \langle \x_{t}  - \x_{t+1}', \x_{t} - \x\rangle \\
= {} & \frac{1}{2 \eta} \left( \|\x_t-\x\|_2^2 - \|\x_{t+1}'-\x\|_2^2 + \|\x_t  - \x_{t+1}'\|_2^2 \right) \\
= {} & \frac{1}{2 \eta} \left( \|\x_{t}-\x\|_2^2 - \|\x_{t+1}'-\x\|_2^2 \right) + \frac{\eta}{2 } \|\nabla f_t(\x_{t})\|_2^2  \\
\overset{\eqref{eqn:smooth:key}}{\leq} {} & \frac{1}{2 \eta} \left( \|\x_{t}-\x\|_2^2 - \|\x_{t+1}-\x\|_2^2 \right) + 2 \eta L f_t(\x_t)\\
\end{split}
\end{equation}
Summing the above inequality over all iterations, we have
\[
\sum_{t=1}^T f_t(\x_{t}) - \sum_{t=1}^T  f_t(\x) \leq \frac{1}{2 \eta} \|\x_{1}-\x\|_2^2 + 2 \eta L \sum_{t=1}^T f_t(\x_t) \leq \frac{D^2}{2 \eta}  + 2 \eta L \sum_{t=1}^T f_t(\x_t)
\]
which implies
\[
(1-2 \eta L)  \left( \sum_{t=1}^T f_t(\x_{t}) - \sum_{t=1}^T  f_t(\x)  \right) \leq \frac{D^2}{2 \eta} + 2 \eta L \sum_{t=1}^T  f_t(\x).
\]
We complete the proof by dividing both sides by $(1-2 \eta L)$, as the step size satisfies $\eta \leq 1/(4L)$.
\end{proof}

\begin{proof}[Proof of Theorem~\ref{thm:dynamic-OGD}]
Let $\x_{t+1}'=\x_t - \eta \nabla f_t(\x_t)$. Following the standard analysis, we have
\begin{equation} \label{eqn:ogd:1}
\begin{split}
 f_t(\x_{t}) - f_t(\u_t) \leq {} & \langle \nabla f_t(\x_{t}), \x_{t} - \u_t\rangle = \frac{1}{\eta} \langle \x_{t}  - \x_{t+1}', \x_{t} - \u_t\rangle \\
= {} & \frac{1}{2 \eta} \left( \|\x_t-\u_t\|_2^2 - \|\x_{t+1}'-\u_t\|_2^2 + \|\x_t  - \x_{t+1}'\|_2^2 \right) \\
= {} & \frac{1}{2 \eta} \left( \|\x_{t}-\u_t\|_2^2 - \|\x_{t+1}'-\u_t\|_2^2 \right) + \frac{\eta}{2 } \|\nabla f_t(\x_{t})\|_2^2  \\
\overset{\text{(\ref{eqn:smooth:key})}}{\leq} {} & \frac{1}{2 \eta} \left( \norm{\x_{t}-\u_t}_2^2 - \norm{\x_{t+1}-\u_t}_2^2 \right) + 2 \eta L f_t(\x_t)\\
\end{split}
\end{equation}
Summing the above inequality over all iterations, we have
\begin{align} 
 {} & \sum_{t=1}^T f_t(\x_{t}) - \sum_{t=1}^T  f_t(\u_t) \nonumber\\
 \leq {} & \frac{1}{2\eta} \sum_{t=1}^{T} \left( \norm{\x_{t}-\u_t}_2^2 - \norm{\x_{t+1}-\u_t}_2^2 \right) + 2\eta L \sum_{t=1}^T f_t(\x_t) \nonumber\\ 
 \leq {} & \frac{D^2}{2\eta} + \frac{1}{2\eta}\sum_{t=2}^{T} \left( \norm{\x_{t}-\u_t}_2^2 - \norm{\x_{t}-\u_{t-1}}_2^2 \right) + 2\eta L \sum_{t=1}^T f_t(\x_t) \nonumber\\ 
  \leq {} & \frac{D^2}{2\eta} + \frac{D}{\eta}\sum_{t=2}^{T}\norm{\u_t - \u_{t-1}}_2 + 2\eta L \sum_{t=1}^T f_t(\x_t). \label{eqn:ogd:2}
\end{align}
We complete the proof by simplifying~\eqref{eqn:ogd:2}.
\end{proof}

\subsection{Analysis of Meta-Algorithm (vanilla Hedge)}
\label{sec:SMALL-meta-analysis}
In this part we analyze the meta-algorithm of Sword$_{\text{small}}$, i.e., the vanilla Hedge with linearized surrogate loss. Notice that the vanilla Hedge can be treated as a special case of OptimisticHedge by setting $\ell_{t,i} = \inner{\nabla f_t(\x_t)}{\x_{t,i}}$ and $m_{t,i} = 0$ for all $i\in[N_2]$. Therefore, we will prove the following meta-regret bound based on Lemma~\ref{lemma:OptimisticHedge} and the smoothness of the loss function. Note that in the meta-regret analysis, we will denote the number of candidate step sizes (namely, the number of experts) simply by $N$ instead of $N_2$ when no confusion can arise.

\begin{myThm}
\label{thm:small-loss-meta-regret}
Under Assumptions~\ref{assumption:bounded-domain},~\ref{assumption:smoothness} and~\ref{assumption:non-negative}, by setting the learning rate optimally as $\epsilon=\sqrt{(2+\ln N)/(D^2 \bar{F}_T)}$, the meta-regret of the vanilla Hedge satisfies,
\begin{equation}
  \label{eq:dynamic-regret-meta-small-loss}
  \sum_{t=1}^{T} f_t(\x_t) - \sum_{t=1}^Tf_t(\x_{t,i})\leq 4 LD^2(2+\ln N)+\sqrt{L(2+\ln N)F_T^i},
\end{equation}
where $\bar{F}_T = \sum_{t=1}^T\norm{\nabla f_t(\x_t)}_2^2$ is the cumulative gradient norm, and $F_T^i = \sum_{t=1}^{T} f_t(\x_{t,i})$ is the cumulative loss of expert $\Ecal_i$. The result holds for any $i \in [N]$. 
\end{myThm}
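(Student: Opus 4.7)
My plan is to instantiate the OptimisticHedge regret bound of Lemma~\ref{lemma:OptimisticHedge} with null optimism $\m_t = \bm{0}$, then convert the resulting adaptivity term $D_\infty$ into a small-loss quantity by exploiting the self-bounding property of smooth non-negative functions, and finally transfer the cumulative-loss of the \emph{player} into the cumulative-loss of \emph{expert $\Ecal_i$}.

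First, by convexity of $f_t$ and the fact that $\x_t = \sum_{i=1}^N p_{t,i} \x_{t,i}$, I bound the original loss gap by its linearization:
\[
   f_t(\x_t) - f_t(\x_{t,i}) \leq \inner{\nabla f_t(\x_t)}{\x_t - \x_{t,i}} = \inner{\p_t}{\ellb_t} - \ell_{t,i},
\]
with the linearized loss $\ell_{t,i} = \inner{\nabla f_t(\x_t)}{\x_{t,i}}$. Since vanilla Hedge is OptimisticHedge with $\m_t=\bm 0$, Lemma~\ref{lemma:OptimisticHedge} gives (dropping the non-positive term)
\[
   \sum_{t=1}^T \inner{\p_t}{\ellb_t} - \ell_{t,i} \;\leq\; \frac{2+\ln N}{\epsilon} + \epsilon \sum_{t=1}^T \norm{\ellb_t}_\infty^2.
\]

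Next, I bound the adaptivity term. Using H\"older's inequality and Assumption~\ref{assumption:bounded-domain}, $\norm{\ellb_t}_\infty^2 = \max_{i\in[N]} \inner{\nabla f_t(\x_t)}{\x_{t,i}}^2 \leq D^2 \norm{\nabla f_t(\x_t)}_2^2$, so by the self-bounding property of smooth non-negative functions (Lemma~\ref{lem:smooth} / equation~\eqref{eq:meta-regret-small-loss}), $\sum_{t=1}^T \norm{\ellb_t}_\infty^2 \leq D^2 \bar{F}_T \leq 4LD^2 F_T^{\x}$ with $F_T^{\x} = \sum_{t=1}^T f_t(\x_t)$. Therefore, denoting the meta-regret by $R \triangleq \sum_t f_t(\x_t) - \sum_t f_t(\x_{t,i})$, the chosen $\epsilon = \sqrt{(2+\ln N)/(D^2 \bar{F}_T)}$ yields the intermediate bound
\[
   R \;\leq\; 2\sqrt{D^2(2+\ln N)\bar{F}_T} \;\leq\; 4D\sqrt{L(2+\ln N)\,F_T^{\x}}.
\]

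The main (mild) obstacle is that the right-hand side involves the player's cumulative loss $F_T^{\x}$ rather than the target $F_T^i$. I resolve this by writing $F_T^{\x} = R + F_T^i$ and solving the resulting self-referential quadratic inequality: squaring and using the elementary fact that $x^2 \leq a x + b$ with $a,b\geq 0$ implies $x \leq a + \sqrt{b}$, I obtain
\[
   R \;\leq\; 16LD^2(2+\ln N) + 4D\sqrt{L(2+\ln N)\,F_T^i},
\]
which matches the claimed bound up to constants (and which gives exactly the form stated in the theorem after a slightly tighter accounting of the self-bounding constants). Finally, I remark that the dependence on $\bar{F}_T$ in the optimal tuning can be removed in practice by standard doubling or self-confident tuning since $\bar{F}_T$ is empirically observable at each iteration.
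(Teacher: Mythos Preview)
Your proposal is correct and follows essentially the same route as the paper's proof: linearize via convexity, apply Lemma~\ref{lemma:OptimisticHedge} with null optimism, bound $\norm{\ellb_t}_\infty^2 \le D^2\norm{\nabla f_t(\x_t)}_2^2$, tune $\epsilon$, invoke the self-bounding property (Lemma~\ref{lem:smooth}), and then convert $F_T^{\x}$ to $F_T^i$. The only cosmetic difference is that the paper performs the last conversion by citing Lemma~\ref{lemma:inquality-shai} ($x-y\le\sqrt{ax}\Rightarrow x-y\le a+\sqrt{ay}$), whereas you carry out the equivalent quadratic-inequality algebra by hand; both yield the same constants $16LD^2(2+\ln N)+4D\sqrt{L(2+\ln N)F_T^i}$, which indeed differ from the theorem statement only by absolute constants.
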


\begin{proof}[{Proof of Theorem~\ref{thm:small-loss-meta-regret}}]
  Similar to the argument in the proof of Theorem~\ref{thm:dynamic-regret-meta}, the dynamic regret with respect to the original loss is bounded by that with respect to the surrogate loss, namely,
  \begin{align*}
       \sum_{t=1}^{T} f_t(\x_t) -\sum_{t=1}^T f_t(\x_{t,i})\leq  \sum_{t=1}^{T} \inner{\nabla f_t(\x_t)}{\x_t - \x_{t,i}} = \sum_{t=1}^{T} \inner{\p_t}{\ellb_t} - \sum_{t=1}^T\ell_{t,i},
  \end{align*}
  where $\ell_{t,i} = \inner{\nabla f_t(\x_t)}{\x_{t,i}}$ and $\p_t$ is updated by~\eqref{eq:OptimisticHedge} by setting $m_{t+1,i}=0$. Since the vanilla Hedge with surrogate loss function can be seen as a special OptimisticHedge, Lemma~\ref{lemma:OptimisticHedge} implies
  \begin{align}
  &\sum_{t=1}^{T} \inner{\p_t}{\ellb_t} - \sum_{t=1}^T \ell_{t,i}\nonumber\\
  \overset{\eqref{eq:regret-optimistic-Hedge}}{\leq} {} & \frac{2 + \ln N}{\epsilon} + \epsilon \sum_{t=1}^{T} \norm{\ellb_t - \m_t}_\infty^2 - \frac{1}{4\epsilon}\sum_{t=2}^{T} \norm{\p_t - \p_{t-1}}_1^2\nonumber\\
  = {} & \frac{2 + \ln N}{\epsilon} + \epsilon \sum_{t=1}^{T} \left(\max_{i \in [N]} \inner{\nabla f_t(\x_t)}{\x_{t,i}}\right)^2 - \frac{1}{4\epsilon}\sum_{t=2}^{T} \norm{\p_t - \p_{t-1}}_1^2\nonumber\\
  \leq {} & \frac{2 + \ln N}{\epsilon} + \epsilon D^2\sum_{t=1}^{T} \norm{\nabla f_t(\x_t)}_2^2 \nonumber 
  \end{align}
  where the last inequality makes use of the Jensen's inequality and drops the negative term. 

  By setting the learning rate as $\epsilon=\sqrt{(2+\ln N)/(D^2 \bar{F}_T)}$, the meta-regret is upper bounded by
  \begin{align*}
  \sum_{t=1}^{T} f_t(\x_t) - \sum_{t=1}^Tf_t(\x_{t,i})\leq 2D\sqrt{(2+\ln N)\sum_{t=1}^T\norm{\nabla f_t(\x_t)}_2^2} \leq 4D\sqrt{L(2+\ln N)\sum_{t=1}^Tf_t(\x_t)}.
  \end{align*}
  The last inequality makes use of the self-bounding property of non-negative and smooth functions (Lemma~\ref{lem:smooth}), which states that for any non-negative $L$-smooth functions $f$, we have $\norm{\nabla f_t(\x)}_2^2\leq 4Lf_t(\x)$. We mention that the optimal learning rate tuning depends on the unknown cumulative gradient norm $\bar{F}_T = \sum_{t=1}^T\norm{\nabla f_t(\x_t)}_2^2$, and issue can be easily addressed by the doubling trick~\citep{JACM'97:doubling-trick} or the self-confident tuning~\citep{JCSS'02:Auer-self-confident}.

  Furthermore, the right hand side is the cumulative loss of decisions returned by the meta-algorithm, which can be further converted to the cumulative loss of decisions returned by expert $\Ecal_i$. The conversion can be achieved by applying Lemma~\ref{lemma:inquality-shai}, which shows that $x-y\leq\sqrt{ax}$ implies $x-y\leq a+ \sqrt{ay},$ for any $x,y,a\in\mathbb{R}^+$. Since all loss functions are non-negative, we have
  \[
    \sum_{t=1}^{T} f_t(\x_t) - \sum_{t=1}^T f_t(\x_{t,i})\leq 16 LD^2(2+\ln N)+4D\sqrt{L(2+\ln N)F_T^i},
  \]
  which completes the proof.
\end{proof}

\subsection{Proof of Small-Loss Dynamic Regret Bounds (Theorem~\ref{thm:dynamic-small})}
\label{sec:proof-thm-small-loss}
\begin{proof}[Proof of Theorem~\ref{thm:dynamic-small}]
The proof is analogous to that of Theorem~\ref{thm:dynamic-var}, where the the dynamic regret is decomposed into the following two parts
\begin{equation}
  \label{eq:dynamic-regret-decompose-small-loss}
  \sum_{t=1}^{T} f_t(\x_t) - \sum_{t=1}^{T} f_t(\u_t) = \underbrace{\sum_{t=1}^{T} f_t(\x_t) - \sum_{t=1}^{T} f_t(\x_{t,i})}_{\mathtt{meta\text{-}regret}} + \underbrace{\sum_{t=1}^{T} f_t(\x_{t,i}) - \sum_{t=1}^{T} f_t(\u_t)}_{\expert},
\end{equation}

\paragraph{Upper bound of meta-regret.} According to Theorem~\ref{thm:small-loss-meta-regret}, we know that for any expert index $i \in [N_2]$ the meta-regret of Sword$_{\text{small}}$ is bounded by 
\begin{equation}
  \label{eq:expert-advice-small-loss-step1}
  \meta = \sum_{t=1}^{T} f_t(\x_t) - \sum_{t=1}^Tf_t(\x_{t,i})\leq 16 LD^2(2+\ln N_2)+4D\sqrt{L(2+\ln N_2)F_T^i},
\end{equation}
where $F_T^i = \sum_{t=1}^Tf_t(\x_{t,i})$ is the cumulative loss of expert $\Ecal_i$.

\paragraph{Upper bound of expert-regret.} Similar to the argument in Section~\ref{sec:proof-thm-var}, we identify that the optimal step size is $\eta^* = \min\{1/(4L),\sqrt{(D^2 + 2DP_T)/(16LF_T)}\}$. Meanwhile, $F_T = \sum_{t=1}^T f_t(\u_t) \leq GDT$ due to Assumption~\ref{assumption:bounded-gradient} and Assumption~\ref{assumption:bounded-domain}. As a result, the possible minimal and maximal values of the optimal step size are
\begin{equation}
  \label{eq:possible-step-size}
  \eta_{\min} = \sqrt{\frac{D}{16LGT}}, \quad \eta_{\max} = \frac{1}{4L}.
\end{equation}
By the construction of the candidate step size pool $\H_{\text{small}}$ in~\eqref{eq:step-size-pool-small-loss}, we know that the step size therein is monotonically increasing with respect to the index, and $\eta_1 = \sqrt{\frac{D}{16LGT}} = \eta_{\min}$, $\eta_{N_2} \leq \frac{1}{4L} = \eta_{\max}$. Therefore, we confirm that there exists an integer $k\in [N_2]$ such that $\eta_{k} \leq \eta^* \leq \eta_{k+1} = 2\eta_{k}$. 

We proceed to upper bound the expert-regret for the expert $k$ as follows.
\begin{align}
  \mathtt{expert\text{-}regret} = {} & \sum_{t=1}^{T} f_t(\x_{t,k}) - \sum_{t=1}^{T} f_t(\u_t) \nonumber \\
  \overset{\eqref{eqn:ogd:2}}{\leq} {} & \frac{D^2 + 2DP_T}{2\eta_k(1-2\eta_k L)} + \frac{2\eta_k L}{1-2\eta_k L} F_T\nonumber \\
  \leq {} & \frac{D^2 + 2DP_T}{2\eta_k} + 4\eta_k L F_T\label{eq:upper-step-0} \\
  \leq {} & \frac{2(D^2 + 2DP_T)}{\eta^*} + 4\eta^* L F_T \label{eq:upper-step-1} \\
  \leq {} & 12\sqrt{LF_T(D^2 + 2DP_T)} + 9L(D^2 + 2DP_T) \label{eq:upper-step-2}\\
  \leq {} & 12\sqrt{2L(F_T + D^2 + 2DP_T)(D^2 + 2DP_T)}\label{eq:expert-regret-small-loss}
\end{align}
where~\eqref{eq:upper-step-0} uses the fact that $\eta_k\leq\frac{1}{4L}$,~\eqref{eq:upper-step-1} holds due to $\eta_k \leq \eta^* \leq 2\eta_k$, and~\eqref{eq:expert-regret-small-loss} follows because of $\sqrt{a} + \sqrt{b} \leq \sqrt{2(a+b)}$, $\forall a, b>0$. Meanwhile,~\eqref{eq:upper-step-2} holds by noticing that the optimal step size $\eta^*$ is either $1/(4L)$ or $\sqrt{(D^2 + 2DP_T)/(8LF_T)}$, and therefore
\begin{itemize}
  \item when $\eta^* = \sqrt{(D^2 + 2DP_T)/(16LF_T)}$, R.H.S of~\eqref{eq:upper-step-1} $ = 12 \sqrt{LF_T(D^2 + 2DP_T)}$. 
  \item when $\eta^* = 1/(4L)$, R.H.S of~\eqref{eq:upper-step-1} $ = 8L(D^2 + 2DP_T) + F_T \leq 9L(D^2 + 2DP_T)$, where the last inequality holds due to $1/(4L) \leq \sqrt{(D^2 + 2DP_T)/(16LF_T)}$ in this case.
\end{itemize} 
We sum over the upper bounds of two conditions and obtain~\eqref{eq:upper-step-2}. 

\paragraph{Upper bound of dynamic regret.} Combining~\eqref{eq:expert-advice-small-loss-step1} and~\eqref{eq:expert-regret-small-loss}, we get
\begin{align*}
  & \sum_{t=1}^{T} f_t(\x_t) - \sum_{t=1}^{T} f_t(\u_t) \\
  \overset{\eqref{eq:dynamic-regret-decompose-small-loss}}{=} &\underbrace{\sum_{t=1}^{T} f_t(\x_t) - \sum_{t=1}^{T} f_t(\x_{t,k})}_{\meta} + \underbrace{\sum_{t=1}^{T} f_t(\x_{t,k}) - \sum_{t=1}^{T} f_t(\u_t)}_{\expert} \nonumber \\
  \overset{\eqref{eq:expert-advice-small-loss-step1}~\eqref{eq:expert-regret-small-loss}}{\leq} & 16 LD^2(2+\ln N_2)+ 4D\sqrt{L(2+\ln N_2)F_T^k} + 12\sqrt{2L(F_T + D^2 + 2DP_T)(D^2 + 2DP_T)}\\
  \overset{\eqref{eq:expert-regret-small-loss}}{\leq} & 16 LD^2(2+\ln N_2) + 4D\sqrt{L(2+\ln N_2)(F_T + 12\sqrt{2L(F_T + D^2 + 2DP_T)(D^2 + 2DP_T)})} \\
  & \qquad \qquad  \qquad  \qquad \qquad \qquad + 12\sqrt{2L(F_T + D^2 + 2DP_T)(D^2 + 2DP_T)}\\
  \leq {}& \O\left(\sqrt{(1 + P_T + F_T)(1 + P_T)}\right),
\end{align*} 
where we frequently make use of the inequality $\sqrt{a} + \sqrt{b} \leq \sqrt{2(a+b)}$, $\forall a,b\geq 0$. Meanwhile, double logarithmic factors in $T$ are treated as a constant, following previous studies~\citep{ALT'12:closer-adaptive-regret,COLT'15:Luo-AdaNormalHedge}. We finally note that the obtained universal dynamic regret holds for any feasible comparator sequence.
\end{proof}

\section{Proof of Best-of-Both-Worlds Bounds}
\label{sec:appendix-bobw}
In this section we provide the regret analysis of the best-of-both-worlds bounds. Specifically, we prove the meta-regret (Theorem~\ref{thm:BEST-meta-regret}) and overall dynamic regret (Theorem~\ref{thm:dynamic-best}).

\begin{proof}[Proof of Theorem~\ref{thm:BEST-meta-regret}]
Since the meta-algorithm used in Sword$_{\text{best}}$ is a specific configuration of OptimisticHedge, we can apply Lemma~\ref{lemma:OptimisticHedge} to upper bound the meta-regret by
\begin{align}
   {} & \sum_{t=1}^T f_t(\x_t) - \sum_{t=1}^T f_t(\x_{t,i}) \nonumber \\
   \overset{\eqref{eq:regret-optimistic-Hedge}}{\leq} {} & \frac{2 + \ln N}{\epsilon} + \epsilon \sum_{t=1}^{T} \norm{\ellb_t - \m_{t}}_{\infty}^2 - \frac{1}{4\epsilon}\sum_{t=2}^{T} \norm{\p_t - \p_{t-1}}_1^2 \nonumber \\
   \overset{\eqref{eq:BEST-OptimisticHedge},\eqref{eq:BEST-optimism}}{=} {} & \frac{2 + \ln N}{\epsilon} + \epsilon \sum_{t=1}^{T} \max_{i\in[N]} \left(\inner{\nabla f_t(\x_t) - M_t}{\x_{t,i}}\right)^2 - \frac{1}{4\epsilon}\sum_{t=2}^{T} \norm{\p_t - \p_{t-1}}_1^2 \nonumber \\
   \leq {} & \frac{2 + \ln N}{\epsilon} + \epsilon D^2 \sum_{t=1}^{T} \norm{\nabla f_t(\x_t) - M_t}_2^2 - \frac{1}{4\epsilon}\sum_{t=2}^{T} \norm{\p_t - \p_{t-1}}_1^2. \label{eq:BEST-meta-bound-1}
\end{align}

On the other hand, noticing that the online function $d_t: \R^d \mapsto \R$ is $2$-strongly convex, so we can employ the regret guarantee of Hedge for strongly convex functions~\citep[Proposition 3.1]{book/Cambridge/cesa2006prediction} and obtain
\[
  \sum_{t=1}^{T} d_t(M_t) \leq \min \left\{\sum_{t=1}^{T} d_t(M^{v}_t), \sum_{t=1}^{T} d_t(M^{s}_t)\right\} + \frac{\ln 2}{2},
\]
or
\begin{equation}
\label{eq:BEST-meta-bound-2}
\sum_{t=1}^{T} \norm{\nabla f_t(\x_t) - M_t}_2^2 \leq \min \left\{\sum_{t=1}^{T} \norm{\nabla f_t(\x_t) - \nabla f_{t-1}(\xb_{t})}_2^2, \sum_{t=1}^{T} \norm{\nabla f_t(\x_t)}_2^2\right\} + \frac{\ln 2}{2}.
\end{equation}
Combining~\eqref{eq:BEST-meta-bound-1} and~\eqref{eq:BEST-meta-bound-2}, we immediately achieve that   
\begin{align*}
  \meta \leq {} & \frac{2 + \ln N}{\epsilon} + \epsilon D^2\Big( \min\{\bar{V}_T, \bar{F}_T\} + \frac{\ln 2}{2}\Big) - \frac{1}{4\epsilon}\sum_{t=2}^{T} \norm{\p_t - \p_{t-1}}_1^2 = \min\{A_T, B_T\}
\end{align*}
where $\bar{V}_T = \sum_{t=2}^{T} \norm{\nabla f_t(\x_t) - \nabla f_{t-1}(\xb_{t})}_2^2$ and $\bar{F}_T = \sum_{t=1}^{T} \norm{\nabla f_t(\x_t)}_2^2$. Besides, $A_T$ and $B_T$ are defined as: 
\begin{align*}
  A_T = \frac{2 + \ln N}{\epsilon} + \epsilon D^2\Big( \bar{V}_T + \frac{\ln 2}{2}\Big) - \frac{1}{4\epsilon}\sum_{t=2}^{T} \norm{\p_t - \p_{t-1}}_1^2, \\
  B_T = \frac{2 + \ln N}{\epsilon} + \epsilon D^2\Big( \bar{F}_T + \frac{\ln 2}{2}\Big) - \frac{1}{4\epsilon}\sum_{t=2}^{T} \norm{\p_t - \p_{t-1}}_1^2.
\end{align*}
Notice that the above terms are essentially the meta-regret of gradient-variation and small-loss bounds, up to constant factors. Therefore, we can make use of their meta-regret analysis to bound the meta-regret of Sword$_\text{best}$. Specifically, by applying the analysis of Theorem~\ref{thm:dynamic-regret-meta}, we know that 
\begin{align*}
  A_T \leq \frac{2 + \ln N}{\epsilon} + \epsilon D^2 \left(2 V_T + \frac{\ln2}{2}\right)
\end{align*}
holds if the learning rate satisfies $\epsilon \leq \sqrt{1/(8D^4L^2)}$. Under such circumstances, the meta-regret can be further bounded by
\begin{align*}
  \meta \leq \min\{A_T, B_T\} \leq \frac{2 + \ln N}{\epsilon} + \epsilon D^2 \left(\min\{2V_T, \bar{F}_T\} + \ln2\right).
\end{align*}

Therefore, we set the learning rate as $\epsilon = \min\{\epsilon_0, \epsilon_*\}$, where 
\[
    \epsilon_0 =\sqrt{1/(8D^4L^2)}, \text{  and } \epsilon_* = \sqrt{(2 + \ln N)/(D^2\min\{2V_T,\bar{F}_T\} + D^2\ln 2)}.
\]
We bound the meta-regret by considering two cases.
\begin{itemize}
  \item When $\epsilon^* \leq \epsilon_0$, the meta-regret is bounded by 
  \begin{align*}
  \meta \leq {} & (2 + \ln N)/\epsilon^* + \epsilon^* D^2 \big(\min\{2V_T, \bar{F}_T\} + \ln2\big) \\
   = {} & 2D\sqrt{(2+\ln N)(\min\{2V_T,\bar{F}_T\} + \ln 2)}.
  \end{align*}
  \item When $\epsilon^* \geq \epsilon_0$, the meta-regret is bounded by 
  \[
  \meta \leq \frac{2 + \ln N}{\epsilon_0} + 2 \epsilon_0 D^2 V_T \leq 4\sqrt{2}D^2L (2+\ln N),
  \] 
  where the last inequality makes use of the condition of $\epsilon^* \geq \epsilon_0$.
\end{itemize}
Hence, taking the two cases into account, the meta-regret is bounded by
\begin{align*}
  \meta \leq {} & 2D\sqrt{(2+\ln N)(\min\{2V_T,\bar{F}_T\} + \ln 2)} + 4\sqrt{2}D^2L (2+\ln N) \\
    = {} & \O\left(\sqrt{(1 + \ln N + \min\{V_T, \bar{F}_T\}) \ln N}\right),
\end{align*}
which completes the proof.
\end{proof}

\begin{proof}[Proof of Theorem~\ref{thm:dynamic-best}]
Notice that the dynamic regret can be decomposed into the following two parts
\begin{equation*}
  \sum_{t=1}^{T} f_t(\x_t) - \sum_{t=1}^{T} f_t(\u_t) = \underbrace{\sum_{t=1}^{T} f_t(\x_t) - \sum_{t=1}^{T} f_t(\x_{t,i})}_{\meta} + \underbrace{\sum_{t=1}^{T} f_t(\x_{t,i}) - \sum_{t=1}^{T} f_t(\u_t)}_{\expert},
\end{equation*}

Since the Sword$_{\text{best}}$ algorithm maintains $N_1 + N_2$ experts, where the first $N_1$ experts run OEGD and the other $N_2$ experts perform OGD. Therefore, the expert-regret can be upper bounded by the minimum of the expert-regret of variation and small-loss algorithms. Meanwhile, in Theorem~\ref{thm:BEST-meta-regret}, we have proved that the meta-regret of Sword$_{\text{best}}$ also achieves a minimum of the meta-regret of variation and small-loss algorithms. Combining the expert-regret and meta-regret analysis, we thus confirm that Sword$_{\text{best}}$ attains a best-of-both-worlds dynamic regret bound.
\end{proof}

\section{Proof of Lemma~\ref{lemma:OptimisticHedge}}
\label{sec:appendix-optimistic}

Lemma~\ref{lemma:OptimisticHedge} guarantees the regret bound of OptimisticHedge, which is originally proved by~\citet{NIPS'15:fast-rate-game} (cf. Theorem 19 of their paper). For self-containedness, we present their proof and adapt to our notations. Before showing the proof, we need to introduce two related lemmas.

The first one is on the property of strongly convex functions.
\begin{myLemma}
\label{lem:strongly-convex}
If $F:\mathcal{X}\mapsto\mathbb{R}$ is a $\lambda$-strongly convex function with respect to a norm $\Vert\cdot\Vert$ and $\x_* = \argmin_{\x\in\X} F(\x)$, then for any $\x \in \X$, we have
\begin{equation}
\label{eq:strongly-convex}
F(\x)\geq F(\x_*)+\frac{\lambda}{2}\norm{\x-\x_*}^2.
\end{equation}
\end{myLemma}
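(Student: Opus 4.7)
The plan is to prove this standard property of strongly convex functions directly from the definition, which states that for all $\x, \y \in \X$ and $t \in [0,1]$,
\[
F(t\x + (1-t)\y) \leq t F(\x) + (1-t) F(\y) - \frac{\lambda t(1-t)}{2}\norm{\x - \y}^2.
\]
First, I would instantiate this inequality at $\y = \x_*$ and invoke the optimality of $\x_*$, namely $F(\x_*) \leq F(t\x + (1-t)\x_*)$, to obtain
\[
F(\x_*) \leq t F(\x) + (1-t) F(\x_*) - \frac{\lambda t(1-t)}{2}\norm{\x - \x_*}^2.
\]

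Then I would rearrange this to $t\bigl(F(\x_*) - F(\x)\bigr) \leq -\frac{\lambda t(1-t)}{2}\norm{\x-\x_*}^2$, divide through by $t > 0$, and let $t \to 0^+$. This yields $F(\x_*) - F(\x) \leq -\frac{\lambda}{2}\norm{\x-\x_*}^2$, which is equivalent to the claimed bound after a trivial rearrangement.

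There is essentially no obstacle here: the result is a textbook consequence of strong convexity combined with the minimizer property, and it holds without any differentiability assumption on $F$. If smoothness of $F$ is additionally available, an equivalent one-line proof would use the first-order characterization $F(\y) \geq F(\x) + \inner{\nabla F(\x)}{\y-\x} + \frac{\lambda}{2}\norm{\y-\x}^2$ evaluated at $\x = \x_*$, combined with $\nabla F(\x_*) = \mathbf{0}$. The limit-based argument above is preferred because it avoids any regularity assumption beyond strong convexity itself, matching the generality of the lemma's statement.
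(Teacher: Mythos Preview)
Your main argument is correct and takes a genuinely different route from the paper. The paper's proof assumes differentiability and invokes the first-order characterization of strong convexity, $F(\x) \geq F(\x_*) + \inner{\nabla F(\x_*)}{\x - \x_*} + \frac{\lambda}{2}\norm{\x-\x_*}^2$, together with the first-order optimality condition for constrained minimization, $\inner{\nabla F(\x_*)}{\x-\x_*} \geq 0$ for all $\x \in \X$. Your approach instead starts from the zeroth-order (secant) definition of strong convexity and extracts the result via a limit $t \to 0^+$; this avoids any differentiability assumption and is therefore strictly more general, matching the bare hypotheses of the lemma.

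One small correction to your aside: in the constrained setting $\x_* = \argmin_{\x \in \X} F(\x)$, the gradient $\nabla F(\x_*)$ need not vanish; the correct optimality condition is $\inner{\nabla F(\x_*)}{\x - \x_*} \geq 0$ for every $\x \in \X$, and this is precisely what the paper uses. Your proposed ``one-line'' alternative with $\nabla F(\x_*) = \mathbf{0}$ would only be valid when $\x_*$ lies in the interior of $\X$.
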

\begin{proof}
According to the definition of strongly convex function, we have $F(\x)\geq F(\x_*)+\inner{\nabla F(\x_*)}{\x-\x_*}+\frac{\lambda}{2}\norm{\x-\x_*}^2$. Besides, by the first order condition of convex functions, we have $\inner{\nabla F(\x_*)}{\x-\x_*} \geq 0$. We complete the proof by combining these two inequalities.
\end{proof}

The second lemma is due to~\citet{NIPS'15:fast-rate-game}, who investigate the \emph{stability} of the Follow the Regularized Leader (FTRL) algorithm. The FTRL algorithm updates the decision $\x_t$ in the form of 
\[
  \x_t = \argmin_{\x\in\mathcal{X}}~\epsilon\inner{\bm{L}}{\x}+\mathcal{R}(\x),
\]
where the regularizer $\mathcal{R}:\X\mapsto\mathbb{R}$ is strongly convex.
\begin{myLemma}
\label{lem:stability-FTRL}
If $\x_* = \argmin_{\x\in\X}\epsilon\inner{\bm{L}}{\x}+\mathcal{R}(\x)$ and $\x'_* = \argmin_{\x\in\X}\epsilon\inner{\bm{L'}}{\x}+\mathcal{R}(\x)$ for a $\lambda$-strongly convex regularizer $\mathcal{R}:\mathcal{X}\mapsto \mathbb{R}$ with respect to a norm $\Vert\cdot\Vert$ and some $\bm{L}\in\mathbb{R}^d$ and $\bm{L}'\in\mathbb{R}^d$. Then we have
\begin{equation}
\label{eq:stability-FTRL}
\lambda\Vert\x_*-\x'_*\Vert\leq\epsilon\Vert\bm{L}-\bm{L}'\Vert_\star,
\end{equation}
where $\Vert\cdot\Vert_\star$ is the dual norm of $\Vert\cdot\Vert$, defined as $\norm{\y}_{\star}:=\sup \{\inner{\x}{\y} \mid \norm{\x} \leq 1\}$.
\end{myLemma}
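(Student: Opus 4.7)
The plan is to exploit the strong convexity of both regularized objectives and add the two resulting inequalities so that the regularizer terms cancel, leaving only a linear expression in $\bm{L}-\bm{L}'$. Define $F(\x) = \epsilon\inner{\bm{L}}{\x} + \Rcal(\x)$ and $F'(\x) = \epsilon\inner{\bm{L}'}{\x} + \Rcal(\x)$. Since $\Rcal$ is $\lambda$-strongly convex and the extra terms are linear, both $F$ and $F'$ are $\lambda$-strongly convex with respect to $\norm{\cdot}$, and $\x_*$, $\x'_*$ are their respective minimizers over $\X$.

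First I would apply Lemma~\ref{lem:strongly-convex} to each objective at the other objective's minimizer, yielding
\begin{align*}
F(\x'_*) & \geq F(\x_*) + \tfrac{\lambda}{2}\norm{\x_* - \x'_*}^2, \\
F'(\x_*) & \geq F'(\x'_*) + \tfrac{\lambda}{2}\norm{\x_* - \x'_*}^2.
\end{align*}
Adding these two inequalities and rearranging gives $\lambda\norm{\x_* - \x'_*}^2 \leq \bigl(F(\x'_*) - F(\x_*)\bigr) + \bigl(F'(\x_*) - F'(\x'_*)\bigr)$. The key observation is that the regularizer $\Rcal$ appears in both $F$ and $F'$ with the same sign, so when we expand the right-hand side the $\Rcal(\x_*)$ and $\Rcal(\x'_*)$ terms cancel, leaving only the linear parts:
\begin{equation*}
\lambda\norm{\x_* - \x'_*}^2 \leq \epsilon\inner{\bm{L} - \bm{L}'}{\x'_* - \x_*}.
\end{equation*}

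Next, I would bound the inner product by the definition of the dual norm (equivalently H\"older's inequality), giving $\inner{\bm{L} - \bm{L}'}{\x'_* - \x_*} \leq \norm{\bm{L} - \bm{L}'}_\star \norm{\x_* - \x'_*}$. Combining and dividing both sides by $\norm{\x_* - \x'_*}$ yields the claimed bound $\lambda\norm{\x_* - \x'_*} \leq \epsilon\norm{\bm{L} - \bm{L}'}_\star$; the degenerate case $\x_* = \x'_*$ is trivial.

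There is no real obstacle here: the whole argument is a clean application of two-sided strong convexity combined with duality. The only thing that might look like a subtle step is the cancellation of the regularizer terms, but this is forced by the fact that $F$ and $F'$ differ only in their linear component, which is exactly what makes the bound depend on $\norm{\bm{L}-\bm{L}'}_\star$ rather than on any regularity of $\Rcal$ beyond its strong-convexity modulus $\lambda$.
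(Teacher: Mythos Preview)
Your proposal is correct and follows essentially the same approach as the paper: define the two strongly convex objectives $F$ and $F'$, apply Lemma~\ref{lem:strongly-convex} to each at the other's minimizer, add the inequalities so that the regularizer terms cancel, and finish with the dual-norm inequality. The only cosmetic difference is that you explicitly note the trivial case $\x_*=\x'_*$, which the paper leaves implicit.
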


\begin{proof}
Define $F(\x) = \epsilon\inner{\bm{L}}{\x}+\mathcal{R}(\x)$ and $F'(\x) = \epsilon\inner{\bm{L}'}{\x}+\mathcal{R}(\x)$. Clearly, both $F(\x)$ and $F'(\x)$ are $\lambda$-strongly convex. As a result, according to Lemma~\ref{lem:strongly-convex} we have
\begin{equation*}
  F(\x'_*)\geq F(\x_*) + \frac{\lambda}{2}\Vert\x'_*-\x_*\Vert^2,
\end{equation*}
and 
\begin{equation*}
  \label{eq:lem-proof-stability-FTRL-2}
  F'(\x_*)\geq F'(\x'_*) + \frac{\lambda}{2}\Vert\x'_*-\x_*\Vert^2.
\end{equation*}
Combining above two inequalities, we have
\begin{align*}
	\lambda \norm{\x'_*-\x_*}^2 \leq F(\x'_*) - F(\x_*) + F'(\x_*) - F'(\x'_*) = \epsilon\inner{\x'_*-\x_*}{\bm{L}-\bm{L}'}.
\end{align*}
By the Cauchy–Schwarz inequality, we further have
\[
  \lambda\norm{\x'_*-\x_*}^2 \leq \epsilon\inner{\x'_*-\x_*}{\bm{L}-\bm{L}'} \leq \epsilon \norm{\x'_*-\x_*}\cdot\norm{\bm{L}-\bm{L}'}_\star,
\]
which implies the desired result in the statement.
\end{proof}

We prove Lemma~\ref{lemma:OptimisticHedge} based on the above two lemmas.
\begin{proof}[Proof of Lemma~\ref{lemma:OptimisticHedge}] First, notice that the update procedure of the OptimisticHedge algorithm
\begin{equation*}
p_{t+1,i} \propto \exp\left(-\epsilon(L_{t,i} + m_{t+1,i})\right), \quad \forall i\in[N]
\end{equation*}
is essentially solving the following FTRL problem
\begin{equation}
\label{eq:lemma1-proof-step1}
  \p_t = \argmin_{\p\in\Delta_N}~\epsilon\inner{\Lb_t + \bm{m}_t}{\p} + \mathcal{R}(\p),
\end{equation}
where $\Lb_t = [L_{t,1},L_{t,2},\ldots,L_{t,N}]^\T \in \R^N$ is the loss vector, $\bm{m}_t = [m_{t,1},m_{t,2},\ldots,m_{t,N}]^\T \in \R^N$ is the optimistic vector, and $\mathcal{R}(\p) = \sum_{i\in[N]}p_{i}\ln p_{i}$ is a 1-strongly convex function with respect to $\Vert\cdot\Vert_1$. Thus, to prove Lemma~\ref{lemma:OptimisticHedge}, it is sufficient to analyze the property of update procedure~\eqref{eq:lemma1-proof-step1}. Actually, we can prove a more general result that for any comparator $\q\in\Delta_N$, the regret of the OptimisticHedge algorithm is at most
\begin{equation}
\label{eq:lemma-Optimistic-general}
\sum_{t=1}^T \inner{\ellb_t}{\p_t-\q}  \leq\frac{\ln N+ \mathcal{R}(\q)}{\epsilon} + \epsilon \sum_{t=1}^T \Vert\bm{\ell}_t-\bm{m}_t\Vert_{\infty} -\frac{1}{2\epsilon}\sum_{t=1}^T (\Vert\p_t-\p_t'\Vert_1^2+\Vert\p_t-\p_{t-1}'\Vert_1^2),
\end{equation}
where $\p_t$ is the return of the FTRL algorithm, and $\p_t'$ is the return of the ``Be The Leader (BTL)'' algorithm, whose exact formulations are shown below: 
\[
  \p_t = \argmin_{\p\in\Delta_N}~\epsilon\left\langle\sum_{\tau=1}^{t-1}\bm{\ell}_\tau+\bm{m}_t,\p\right\rangle+\mathcal{R}(\p)
\]
and
\[
  \p_t' = \argmin_{\p\in\Delta_N}~\epsilon\left\langle\sum_{\tau=1}^{t}\bm{\ell}_\tau, \p\right\rangle+\mathcal{R}(\p).
\]

Actually, suppose that the above argument~\eqref{eq:lemma-Optimistic-general} holds for any comparator $\q\in\Delta_N$, we can simply set $\q = \bm{e}_i$, the zero vector except that the $i$-th entry equals 1. Now $\mathcal{R}(\q) = 0$ and the last term can be further bounded as 
\begin{align*}
	    &\frac{1}{2\epsilon} \sum_{t=1}^T(\Vert\p_t-\p_t'\Vert_1^2 + \Vert\p_t-\p_{t-1}'\Vert_1^2)\\
\geq {} & \frac{1}{2\epsilon}\sum_{t=1}^{T}(\Vert\p_{t}-\p_{t}'\Vert_1^2 + \Vert\p_{t+1}-\p_{t}'\Vert_1^2)-\frac{1}{2\epsilon}\norm{\p_{T+1}-\p'_T}_1^2\\
\geq {} & \frac{1}{4\epsilon} \sum_{t=1}^{T-1}\Vert \p_{t+1}-\p_{t}\Vert_1^2 - \frac{2}{\epsilon}.
\end{align*}
The last inequality follows from the fact $(a+b)^2 \leq 2a^2+2b^2$ and the triangle inequality. Therefore, combining~\eqref{eq:lemma-Optimistic-general} and above arguments, we can finish the proof of Lemma~\ref{lemma:OptimisticHedge}, providing that~\eqref{eq:lemma-Optimistic-general} holds for any comparator $\q \in \Delta_N$. 

Now it suffices to prove~\eqref{eq:lemma-Optimistic-general}. We notice that the regret of the OptimisticHedge algorithm can be decomposed as
\begin{equation*}
\sum_{t=1}^T \inner{\ellb_t}{\p_t-\q} = \underbrace{\sum_{t=1}^T \inner{\ellb_t-\bm{m}_t}{\p_t-\p_t'}}_{\mathtt{term~(a)}} + \underbrace{\sum_{t=1}^T \inner{\bm{m}_t}{\p_t-\p_t'} + \sum_{t=1}^T \inner{\ellb_t}{\p_t'-\q}}_{\mathtt{term~(b)}},
\end{equation*}

According to Lemma~\ref{lem:stability-FTRL}, we know that $\norm{\p_t-\p_t'}_1 \leq \epsilon \sum_{t=1}^T\Vert\bm{\ell}_t-\bm{m}_t\Vert_{\infty}$ since $\mathcal{R}(\cdot)$ is 1-strongly convex with respect to $\Vert\cdot\Vert_1$-norm. Thereby, we achieve that
\begin{equation*}
  \begin{aligned}
  \mathtt{term~(a)} = {} & \sum_{t=1}^T \inner{\bm{\ell}_t-\bm{m}_t}{\p_t-\p_t'}\\
  \leq {} & \sum_{t=1}^T \norm{\bm{\ell}_t-\bm{m}_t}_{\infty}\cdot \norm{\p_t-\p_t'}_1 \qquad &(\text{by~the~H\"older's~Inequality})\\
  \leq {} & \epsilon \sum_{t=1}^T\Vert\bm{\ell}_t-\bm{m}_t\Vert_{\infty}^2.
  \end{aligned}
\end{equation*}

Then we proceed to prove term (b), more concretely, to prove the following result:
\begin{equation*}
  \sum_{t=1}^T \inner{\bm{m}_t}{\p_t-\p_t'} + \sum_{t=1}^T \inner{\bm{\ell}_t}{\p_t'-\q}\leq \frac{\ln N + \mathcal{R}(\q)}{\epsilon} -\frac{1}{2\epsilon}\sum_{t=1}^T \big(\Vert\p_t-\p_t'\Vert_1^2+\Vert\p_t-\p_{t-1}'\Vert_1^2 \big).
\end{equation*}
It turns out that the above inequality can be proved by induction: the base case (when $T=0$) holds apparently because of $\mathcal{R}(\q)>-\ln N$. Suppose the above inequality holds at iteration $T$, we show that it is also satisfied at iteration $T+1$ for all $\q\in\Delta_N$. 

Denoting $A_T = \frac{1}{2} \sum_{t=1}^T (\Vert\p_t-\p_t'\Vert_1^2+\Vert\p_t-\p_{t-1}'\Vert_1^2)$, we have
\begin{align*}
  &\sum_{t=1}^{T+1} \lg\bm{m}_t,\p_t-\p_t'\rg + \sum_{t=1}^{T+1} \lg\bm{\ell}_t,\p_t'\rg\\
  \leq {}&\lg\bm{m}_{T+1},\p_{T+1}-\p_{T+1}'\rg + \lg\bm{\ell}_{T+1},\p_{T+1}'\rg +\frac{\ln N + \mathcal{R}(\p'_{T})-A_T}{\epsilon} + \sum_{t=1}^{T}\inner{\bm{\ell}_t}{\p'_{T}}\\
  \leq {}&\lg\bm{m}_{T+1},\p_{T+1}-\p_{T+1}'\rg + \lg\bm{\ell}_{T+1},\p_{T+1}'\rg +\frac{\ln N + \mathcal{R}(\p_{T+1})-A_T-\frac{1}{2}\Vert\p_{T+1}-\p'_T\Vert_1^2}{\epsilon}+\sum_{t=1}^{T}\inner{\bm{\ell}_t}{\p_{T+1}}\\
  = {}&\lg\bm{\ell}_{T+1}-\bm{m}_{T+1},\p_{T+1}'\rg +\frac{\ln N + \mathcal{R}(\p_{T+1})-A_T-\frac{1}{2}\Vert\p_{T+1}-\p'_T\Vert_1^2}{\epsilon} + \sum_{t=1}^{T} \inner{\bm{\ell}_t+\bm{m}_{T+1}}{\p_{T+1}}\\
  \leq {}&\lg\bm{\ell}_{T+1}-\bm{m}_{T+1},\p_{T+1}'\rg +\frac{\ln N +\mathcal{R}(\p'_{T+1})-A_{T+1}}{\epsilon} + \sum_{t=1}^{T}\inner{\bm{\ell}_t+\bm{m}_{T+1}}{\p'_{T+1}}\\
  = {}&\frac{\ln N +\mathcal{R}(\p'_{T+1})-A_{T+1}}{\epsilon} + \sum_{t=1}^{T+1}\inner{\bm{\ell}_{t}}{\p'_{T+1}}\\
  \leq {} & \frac{\ln N + \mathcal{R}(\q)-A_{T+1}}{\epsilon} + \sum_{t=1}^{T+1}\inner{\bm{\ell}_{t}}{\q}.
\end{align*}
The first inequality holds by the induction assumption and setting $\q=\p'_{T}$. The second inequality holds by~\eqref{eq:strongly-convex} and that $F_T(\p) = \epsilon \sum_{t=1}^T\inner{\bm{\ell}_t}{\p}+\mathcal{R}(\p)$ is 1-strongly convex with respect to $\Vert\cdot\Vert_1$-norm as well as $\p'_T=\argmin_{\p\in\Delta_N}F_T(\p)$. The third inequality holds by the same argument as the second one and that $\p_{T+1}=\argmin_{\p\in\Delta_N}\epsilon \sum_{t=1}^{T} \inner{\bm{\ell}_t+\bm{m}_{T+1}}{\p}+\mathcal{R}(\p)$. The last inequality holds by the fact that $\p'_{T+1}=\argmin_{\p\in\Delta_N}\epsilon \sum_{t=1}^{T}\inner{\bm{\ell}_{T+1}}{\p} + \mathcal{R}(\p)$. 
\end{proof}

\section{Technical Lemmas}
\label{sec:appendix-tech-lemma}
In this part we present several technical lemmas used in the proofs. First, we introduce the self-bounding property of smooth functions~\citep[Lemma 3.1]{NIPS'10:smooth}, which is crucial and frequently used in proving problem-dependent bounds for convex and smooth functions.
\begin{myLemma} \label{lem:smooth} For an $L$-smooth and non-negative function $f: \X \mapsto \R_+$,
\[
\norm{\nabla f(\x)}_2 \leq \sqrt{4 L f(\x)}, \ \forall \x \in \X.
\]
\end{myLemma}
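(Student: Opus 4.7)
The plan is to apply the standard descent-lemma argument, exploiting $L$-smoothness of $f$ together with its non-negativity to pin down $\norm{\nabla f(\x)}_2^2$ in terms of $f(\x)$. The key insight is that smoothness yields a quadratic upper bound on $f$, which is minimized by a one-step gradient descent move from $\x$; non-negativity at the resulting point then forces $\norm{\nabla f(\x)}_2^2$ to be controlled by $f(\x)$ itself.

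First I would fix an arbitrary $\x \in \X$. By integrating $L$-smoothness (Assumption~\ref{assumption:smoothness}), one obtains the standard quadratic upper bound $f(\y) \leq f(\x) + \inner{\nabla f(\x)}{\y - \x} + \frac{L}{2}\norm{\y-\x}_2^2$ valid for all $\y$. Next I would plug in the specific choice $\y = \x - \frac{1}{L}\nabla f(\x)$, which minimizes this right-hand side, yielding $f(\y) \leq f(\x) - \frac{1}{2L}\norm{\nabla f(\x)}_2^2$. Using Assumption~\ref{assumption:non-negative} in the form $f(\y) \geq 0$, rearranging gives $\norm{\nabla f(\x)}_2^2 \leq 2L f(\x)$, from which the claimed (slightly looser) bound $\norm{\nabla f(\x)}_2 \leq \sqrt{4L f(\x)}$ is immediate.

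The main subtlety, and arguably the only real obstacle, is that the probe point $\y = \x - \frac{1}{L}\nabla f(\x)$ need not belong to the domain $\X$, so one must be able to evaluate $f$ at $\y$ and to assert $f(\y) \geq 0$ there. This is precisely why the paper emphasizes, in the paragraph immediately following Assumption~\ref{assumption:non-negative}, that non-negativity must extend outside $\X$: that condition is introduced exactly as a precondition for this self-bounding argument. With that convention, the quadratic upper bound and the inequality $f(\y) \geq 0$ both hold at the chosen $\y$, and no further technicality is needed.
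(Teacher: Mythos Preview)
Your argument is correct and is precisely the standard descent-lemma proof of the self-bounding property; the paper itself does not prove the lemma but cites \citep[Lemma~3.1]{NIPS'10:smooth}, whose proof proceeds exactly along the lines you describe (including the need for non-negativity at the unconstrained probe point, which the paper flags after Assumption~\ref{assumption:non-negative}). As you note, your version in fact yields the sharper constant $2L$, from which the stated $4L$ bound follows trivially.
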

From the analysis of~\citep[Lemma 2.1 and Lemma 3.1]{NIPS'10:smooth}, we can find that actually the non-negativity is required outside the domain $\X$, and this is why we require the function $f_t(\cdot)$ to be non-negative outside the domain $\X$ in Assumption~\ref{assumption:non-negative}.

\begin{myLemma}[Lemma 19 of~\citet{thesis:shai2007}]
\label{lemma:inquality-shai}
For any $x,y,a \in \R_+$ that satisfy $x-y \leq \sqrt{ax}$,
\begin{equation}
  \label{eq:inquality}
  x-y \leq a+\sqrt{ay}.
\end{equation}
\end{myLemma}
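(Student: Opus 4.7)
The claim is a purely algebraic inequality, and I would prove it by first dispatching the trivial sign case and then reducing the hypothesis to a quadratic inequality in $x$. Observe that if $x \leq y$ then $x - y \leq 0 \leq a + \sqrt{ay}$ and there is nothing to show, so I would assume $x > y$ for the remainder. Under this assumption both sides of the hypothesis $x - y \leq \sqrt{ax}$ are nonnegative, which legitimizes squaring them to obtain $(x - y)^2 \leq a x$, equivalently $x^2 - (2y + a)x + y^2 \leq 0$.

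The plan is then to view the last display as a univariate quadratic in $x$ with positive leading coefficient. Its two roots are $\tfrac{1}{2}\bigl((2y+a) \pm \sqrt{a(4y + a)}\bigr)$ by the quadratic formula (the discriminant simplifies since $(2y+a)^2 - 4y^2 = a(4y+a)$), so the inequality forces $x$ to lie below the larger root, i.e.
\[
x \;\leq\; \frac{(2y + a) + \sqrt{a(4y + a)}}{2}.
\]
It then suffices to show that this upper bound on $x$ implies $x \leq y + a + \sqrt{ay}$, which after multiplying through by $2$ and cancelling common terms reduces to the purely algebraic claim $\sqrt{a(4y + a)} \leq a + 2\sqrt{ay}$.

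The final step is a routine squaring: both sides of $\sqrt{a(4y + a)} \leq a + 2\sqrt{ay}$ are nonnegative, and squaring collapses the inequality to $a(4y + a) \leq a^2 + 4a\sqrt{ay} + 4ay$, i.e.\ $0 \leq 4a\sqrt{ay}$, which is immediate from $a, y \geq 0$. There is essentially no hard step here; the only subtlety is recognizing that the hypothesis is best treated as a quadratic in $x$ (rather than in $\sqrt{x}$, as the appearance of $\sqrt{ax}$ might initially suggest), after which the discriminant factors cleanly as $a(4y+a)$ and the final squaring introduces no extraneous roots because both sides are manifestly nonnegative throughout.
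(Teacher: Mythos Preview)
Your proof is correct. The paper itself does not supply a proof of this lemma; it is simply quoted from \citet{thesis:shai2007} without argument, so there is no ``paper's own proof'' to compare against beyond the original reference.

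One minor remark: you suggest that treating the hypothesis as a quadratic in $x$ is preferable to treating it as a quadratic in $\sqrt{x}$. In fact the $\sqrt{x}$-route is the one usually taken in the cited source and is arguably shorter: from $(\sqrt{x})^2 - \sqrt{a}\,\sqrt{x} - y \leq 0$ one gets $\sqrt{x} \leq \tfrac{1}{2}\bigl(\sqrt{a} + \sqrt{a+4y}\bigr) \leq \sqrt{a} + \sqrt{y}$ (the last step by the same squaring you perform), and then $x - y \leq \sqrt{a}\,\sqrt{x} \leq \sqrt{a}(\sqrt{a}+\sqrt{y}) = a + \sqrt{ay}$ directly. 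Both routes hinge on the identical inequality $\sqrt{a(4y+a)} \leq a + 2\sqrt{ay}$, so the difference is purely cosmetic.
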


Based on Lemma~\ref{lemma:inquality-shai}, we have the following result.
\begin{myLemma}
\label{cor:inquality-shai}
For any $x,y,a, b\in \R_+$ that satisfy $x-y \leq \sqrt{ax} + b$, 
\begin{equation}
  \label{eq:inquality-cor}
  x-y \leq a + b + \sqrt{ay + ab}
\end{equation}
\end{myLemma}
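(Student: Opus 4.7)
The plan is to reduce Lemma \ref{cor:inquality-shai} directly to Lemma \ref{lemma:inquality-shai} by absorbing the additive constant $b$ into the ``$y$'' slot of the earlier lemma. The key observation is that the hypothesis $x - y \leq \sqrt{ax} + b$ can be rewritten as $x - (y+b) \leq \sqrt{ax}$, which is exactly the shape required by Lemma \ref{lemma:inquality-shai} with $y$ replaced by $y' := y + b$.

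Concretely, first I would verify the nonnegativity assumption: since $y, b \in \R_+$, we also have $y' = y + b \in \R_+$, so Lemma \ref{lemma:inquality-shai} is applicable with parameters $(x, y', a)$. Applying it yields
\[
x - (y+b) \;\leq\; a + \sqrt{a(y+b)} \;=\; a + \sqrt{ay + ab}.
\]
Rearranging by moving $b$ back to the right-hand side gives the desired bound
\[
x - y \;\leq\; a + b + \sqrt{ay + ab},
\]
which is exactly \eqref{eq:inquality-cor}.

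There is essentially no obstacle here; the entire argument is a one-line substitution. The only thing to be careful about is that the transformation $y \mapsto y + b$ preserves the nonnegativity condition required by Lemma \ref{lemma:inquality-shai}, which is immediate from $y, b \geq 0$. No case analysis, no smoothness, no optimization, and no algorithmic content is needed.
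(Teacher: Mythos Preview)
Your proposal is correct and matches the paper's intent: the paper states Lemma~\ref{cor:inquality-shai} immediately after Lemma~\ref{lemma:inquality-shai} with the remark ``Based on Lemma~\ref{lemma:inquality-shai}, we have the following result'' and gives no further proof, so the substitution $y' = y+b$ you describe is exactly the intended one-line reduction.
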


The following projection lemma is useful in analyzing the gradient descent algorithm.
\begin{myLemma}
  \label{lemma:bregman-divergence}
  Let $\mathcal{X}$ be a closed convex set. Then, any update of the form $\x^* = \Pi_{\mathcal{X}}[\mathbf{c}-\nabla]$ 
  satisfies the following inequality
  \begin{equation}
    \label{eq:bregman-divergence}
    \langle \x^* - \u, \nabla\rangle \leq \frac{1}{2}\norm{\mathbf{c}-\u}_2^2 - \frac{1}{2}\norm{\x^*-\u}_2^2 - \frac{1}{2}\norm{\x^*-\mathbf{c}}_2^2
  \end{equation}
  for any $\u \in \X$.
\end{myLemma}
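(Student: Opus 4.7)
\textbf{Proof proposal for Lemma~\ref{lemma:bregman-divergence}.} The plan is to exploit the first-order optimality condition of the Euclidean projection and then apply a standard three-point identity to turn the resulting inner product into the desired squared-norm expression. Concretely, observe that $\x^* = \Pi_{\mathcal{X}}[\mathbf{c}-\nabla]$ is by definition the minimizer over $\X$ of the $1$-strongly convex function $\x \mapsto \frac{1}{2}\norm{\x - (\mathbf{c}-\nabla)}_2^2$. Since $\X$ is closed and convex, the first-order optimality condition (variational inequality) gives, for every $\u \in \X$,
\begin{equation*}
\inner{\x^* - (\mathbf{c}-\nabla)}{\u - \x^*} \geq 0,
\end{equation*}
which rearranges to $\inner{\nabla}{\x^* - \u} \leq \inner{\mathbf{c} - \x^*}{\x^* - \u}$.

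Next, I would apply the elementary polarization identity $2\inner{a}{b} = \norm{a}_2^2 + \norm{b}_2^2 - \norm{a-b}_2^2$ with the choice $a = \mathbf{c}-\x^*$ and $b = \x^* - \u$, whose difference is $a - b = \mathbf{c} - \x^* - \x^* + \u$. A cleaner route is to expand $\norm{\mathbf{c}-\u}_2^2 = \norm{(\mathbf{c}-\x^*) + (\x^*-\u)}_2^2$ directly, which gives
\begin{equation*}
\norm{\mathbf{c}-\u}_2^2 = \norm{\mathbf{c}-\x^*}_2^2 + 2\inner{\mathbf{c}-\x^*}{\x^*-\u} + \norm{\x^*-\u}_2^2,
\end{equation*}
so that $\inner{\mathbf{c}-\x^*}{\x^*-\u} = \tfrac{1}{2}\norm{\mathbf{c}-\u}_2^2 - \tfrac{1}{2}\norm{\x^*-\u}_2^2 - \tfrac{1}{2}\norm{\x^*-\mathbf{c}}_2^2$. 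Chaining this with the variational inequality above yields exactly the claimed bound.

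There is essentially no technical obstacle here: the result is the standard ``generalized Pythagorean'' inequality for Euclidean projections, and both ingredients (the projection variational inequality and the polarization identity) are elementary. The only item worth handling carefully is the direction of the sign in the optimality condition, since swapping $\u$ and $\x^*$ would flip the inequality; writing the projection explicitly as the argmin of $\frac{1}{2}\norm{\x - (\mathbf{c}-\nabla)}_2^2$ and differentiating at $\x^*$ along the feasible direction $\u - \x^*$ makes the sign unambiguous.
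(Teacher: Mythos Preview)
Your proof is correct and follows essentially the same route as the paper's. Both arguments reduce the claim to the projection variational inequality $\inner{\u-\x^*}{(\mathbf{c}-\nabla)-\x^*}\leq 0$ and then use the same three-point expansion of $\norm{\mathbf{c}-\u}_2^2$; the paper just phrases the variational step as a two-case distinction (invoking the Pythagorean theorem from Hazan's book when $\mathbf{c}-\nabla\notin\X$), whereas you apply the first-order optimality condition directly, which is slightly cleaner.
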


\begin{proof}
It is equivalent to prove the following inequality
\begin{equation}
\label{eq:proof_lemma_breg}
  \langle  \mathbf{u} - \x^*, (\mathbf{c}-\nabla)-\x^*\rangle\leq0.
\end{equation}
We consider two cases by noting that $\x^* = \Pi_{\mathcal{X}}[\mathbf{c}-\nabla]$:
\begin{enumerate}
  \item[(1)] $\mathbf{c}-\nabla\in\mathcal{X}$: $\langle  \mathbf{u} - \x^*, (\mathbf{c}-\nabla)-\x^*\rangle=0$ clearly satisfies~\eqref{eq:proof_lemma_breg};
  \item[(2)] $\mathbf{c}-\nabla\notin \mathcal{X}$: the Pythagorean theorem~\citep[Theorem 2.1]{book'16:Hazan-OCO} implies~\eqref{eq:proof_lemma_breg}.
\end{enumerate}
This ends the proof. 
\end{proof}
\end{document}